\def\eqref#1{equation~\ref{#1}}
\def\1{\bm{1}}
\def\vu{{\bm{u}}}
\def\vv{{\bm{v}}}
\def\vw{{\bm{w}}}
\def\vx{{\bm{x}}}
\def\vy{{\bm{y}}}
\def\vz{{\bm{z}}}
\DeclareMathAlphabet{\mathsfit}{\encodingdefault}{\sfdefault}{m}{sl}
\SetMathAlphabet{\mathsfit}{bold}{\encodingdefault}{\sfdefault}{bx}{n}
\def\gD{{\mathcal{D}}}
\def\gL{{\mathcal{L}}}
\def\gM{{\mathcal{M}}}
\def\gT{{\mathcal{T}}}
\def\gZ{{\mathcal{Z}}}
\def\sB{{\mathbb{B}}}
\def\sR{{\mathbb{R}}}
\theoremstyle{definition}
\newtheorem{definition}{Definition}[section]
\newtheorem{theorem}{Theorem}[section]
\newtheorem{lemma}{Lemma}[section]
\newtheorem{corollary}{Corollary}[section]
\newcommand{\lce}{\ell_{\text{CE}}}
\newcommand{\lsupcon}{\ell_{\text{SupCon}}}
\newcommand{\data}{\gD}
\newcommand{\tmetric}{d_\gT}
\newcommand{\cpcc}{\text{CPCC}}
\newcommand{\hypcpcc}{\text{HypCPCC}}
\newcommand{\Poincare}{Poincar\'e\xspace}
\newcommand{\norm}[1]{\left\lVert#1\right\rVert}
\title{Learning Structured Representations\\ with Hyperbolic Embeddings}
\author{Aditya Sinha\thanks{Authors contributed equally.} \ \thanks{Now at Netflix Inc.}\\
University of Illinois, Urbana-Champaign\\
\texttt{as146@illinois.edu} \\
\And
Siqi Zeng\footnotemark[1]  \\
University of Illinois, Urbana-Champaign\\
\texttt{siqi6@illinois.edu} \\
\AND
Makoto Yamada \\
Okinawa Institute of Science and Technology \\
\texttt{makoto.yamada@oist.jp} \\
\And
Han Zhao \\
University of Illinois, Urbana-Champaign \\
\texttt{hanzhao@illinois.edu}
}
\begin{document}

\maketitle

\begin{abstract}
    Most real-world datasets consist of a natural hierarchy between classes or an inherent label structure that is either already available or can be constructed cheaply. However, most existing representation learning methods ignore this hierarchy, treating labels as permutation invariant. Recent work \citep{zeng2022learning} proposes using this structured information explicitly, but the use of Euclidean distance may distort the underlying semantic context \citep{chen2013hyperbolicity}. In this work, motivated by the advantage of hyperbolic spaces in modeling hierarchical relationships, we propose a novel approach \texttt{HypStructure}: a \underline{Hyp}erbolic \underline{Structure}d regularization approach to accurately embed the label hierarchy into the learned representations. \texttt{HypStructure} is a simple-yet-effective regularizer that consists of a hyperbolic tree-based representation loss along with a centering loss. It can be combined with any standard task loss to learn \emph{hierarchy-informed} features. Extensive experiments on several large-scale vision benchmarks demonstrate the efficacy of \texttt{HypStructure} in reducing distortion and boosting generalization performance, especially under low-dimensional scenarios. For a better understanding of structured representation, we perform an eigenvalue analysis that links the representation geometry to improved Out-of-Distribution (OOD) detection performance seen empirically. The code is available at \url{https://github.com/uiuctml/HypStructure}.
\end{abstract}

\section{Introduction}
\label{sec:intro} 
Real-world datasets, such as ImageNet~\citep{imagenet} and CIFAR~\citep{krizhevsky2009learning}, often exhibit a natural hierarchy or an inherent label structure that describes a structured relationship between different classes in the data. In the absence of an existing hierarchy, it is often possible to cheaply construct or infer this hierarchy from the label space directly~\citep{nauata2019structured}. However, the majority of existing representation learning methods~\citep{kolesnikov2019large, chen2020simple, wu2018unsupervised, henaff2020data, tian2020contrastive, hjelm2018learning, he2020momentum, 2020supcon} treat the labels as permutation invariant, ignoring this semantically-rich hierarchical label information. Recently, Zeng et al.~\citep{zeng2022learning} offer a promising approach to embed the tree-hierarchy explicitly in representation learning using a tree-metric-based regularizer, leading to improvements in generalization performance. The approach uses a computation of shortest paths between two classes in the tree hierarchy to enforce the same structure in the feature space, by means of a \textbf{C}o\textbf{p}henetic \textbf{C}orrelation \textbf{C}oefficient (CPCC)~\citep{cpcc} based regularizer. However, their approach uses the $\ell_2$ distance in the Euclidean space, distorting the parent-child representations in the hierarchy~\citep{ ravasz2003hierarchical, li2023euclidean} owing to the bounded dimensionality of the Euclidean space~\citep{chen2013hyperbolicity}.  

Hyperbolic geometry has recently gained growing interest in the field of representation learning \citep{nickel2017poincare,nickel2018learning}. Hyperbolic spaces can be viewed as the continuous analog of a tree, allowing for embedding tree-like data in finite dimensions with minimal distortion \citep{2010hyperbolic,sala2018representation, Sarkar_2012, gulcehre2018hyperbolic}. Unlike Euclidean spaces with zero curvature and spherical spaces with positive curvature, the hyperbolic spaces have negative curvature enabling the length to grow exponentially with its radius. Owing to these advantages, hyperbolic geometry has been used for various applications such as natural language processing \citep{liu2020hyperbolic, sala2018representation, dhingra2018embedding}, image classification \citep{khrulkov2020hyperbolic, yue2023hyperbolic, ermolov2022hyperbolic}, object detection \citep{lang2022hyperbolic, ge2022hyperbolic}, action retrieval \citep{Long_2020_CVPR}, and hierarchical clustering \citep{yan2021unsupervised}.
However, the aim of using hyperbolic geometry in these approaches is often to \emph{implicitly} leverage the hierarchical nature of the data. 

In this work, given a label hierarchy, we argue that accurately and \emph{explicitly} embedding the hierarchical information into the representation space has several benefits, and for this purpose, we propose \texttt{HypStructure}, a hyperbolic label-structure based regularization approach that extends the proposed methodology in~\citet{zeng2022learning} for semantically structured learning in the hyperbolic space. \texttt{HypStructure}
can be easily combined with any standard task loss for optimization, and enables the learning of discriminative and \emph{hierarchy-informed} features. In summary, our contributions are as follows:
\begin{itemize}
    \item We propose \texttt{HypStructure} and demonstrate its effectiveness in the supervised hierarchical classification tasks on three real-world vision benchmark datasets, and show that our proposed approach is effective in both training from scratch, or fine-tuning if there are resource constraints.
    \item We qualitatively and quantitatively assess the nature of the learned representations and demonstrate that along with the performance gains, using \texttt{HypStructure} as a regularizer leads to more interpretable as well as tree-like representations as a side benefit. The low-dimensional representative capacity of hyperbolic geometry is well-known \citep{chami2020low}, and interestingly, we observe that training with \texttt{HypStructure} allows for learning extremely low-dimensional representations with distortion values lower than even their corresponding high-dimensional Euclidean counterparts.
    \item We argue that representations learned with an underlying hierarchical structure are beneficial not only for the in-distribution (ID) classification tasks but also for Out-of-distribution (OOD) detection tasks. We empirically demonstrate that learning ID representations with \texttt{HypStructure} leads to improved OOD detection on 9 real-world OOD datasets without sacrificing ID accuracy \citep{zhang2023openood}. 
    \item Inspired by the improvements in OOD detection, we provide a formal analysis of the eigenspectrum of the in-distribution \emph{hierarchy-informed} features learned with CPCC-style structured regularization methods, thus leading to a better understanding of the behavior of structured representations in general. 
\end{itemize}
\section{Preliminaries}
\label{sec:l2cpcc}
In this section, we first provide a background of structured representation learning and then discuss the limited representation capacity of the Euclidean space for hierarchical information, which serves as the primary motivation for our work. 

\subsection{Background}

Structured representation learning \citep{zeng2022learning} breaks the permutation invariance of flat representation learning by incorporating a hierarchical regularization term with a standard classification loss. The regularization term is specifically designed to enforce class-conditioned grouping or partitioning in the feature space, based on a given hierarchy.

More specifically, given a weighted tree $\mathcal{T} = (V, E, e)$ with vertices $V$, edges $E$ and edge weights $e$, let us compute a tree metric $d_\mathcal{T}$ for any pair of nodes $v,v' \in V$, as the weighted length of the shortest path in $\mathcal{T}$ between $v$ and $v'$. For a real world dataset $\data = \{(\vx_i, y_i)\}_{i = 1}^N$, we can specify a \emph{label tree} $\mathcal{T}$ where a node $v_i\in V$,  $v_i$ corresponds to a subset of classes, and $\data_i\subseteq \data$ denote the subset of data points with class label $v_i$. We denote dataset distance between $\data_i$ and $\data_j$ as $\rho(v_i, v_j) = d\left(\data_i,\data_j\right)$, where $d(\cdot, \cdot)$ is any distance metric in the feature space, varied by design.

With a collection of tree metric $d_\mathcal{T}$ and dataset distances $\rho$, we can use the Cophenetic Correlation Coefficient (CPCC) \citep{cpcc}, inherently a Pearson's correlation coefficient, to evaluate the correspondence between the nodes of the tree, and the features in the representation space. Let $\overline{d_\mathcal{T}}, \overline{\rho}$ denote the mean of the collection of distances, then CPCC is defined as
\begin{equation}
\label{eq:CPCC}
    \cpcc(d_\mathcal{T}, \rho) := \frac{\sum_{i<j}(d_\mathcal{T}(v_i,v_j) - \overline{d_\mathcal{T}})(\rho(v_i,v_j) - \overline{\rho})}{(\sum_{i<j}(d_\mathcal{T}(v_i,v_j) - \overline{d_\mathcal{T}})^2)^{1/2} (\sum_{i<j}(\rho(v_i,v_j) - \overline{\rho})^2)^{1/2}}. 
\end{equation}
For the supervised classification task, we consider the training set  $\mathcal{D}_{\text{tr}}^{\text{in}} = \{(\vx_i, y_i)\}_{i=1}^N$ and we aim to learn the network parameter $\theta$ for a feature encoder $f_\theta : \mathcal{X} \to \gZ$, where $\gZ \subseteq \mathbb{R}^{d}$ denotes the representation/feature space. For structured representation learning, the feature encoder is usually followed by a classifier $g_{w}$, and the parameters $\theta, w$ are learnt by minimizing $\gL$ along with a standard \emph{flat} (non-hierarchical) classification loss, for instance, Cross-Entropy (\texttt{CE}) or Supervised Contrastive (\texttt{SupCon}) \citep{2020supcon} loss, with the structured regularization term as:
\begin{equation}
    \label{eq:objective_regularizer}
    \mathcal{L}(\data) = \sum_{(\vx, y)\in \data} \ell_\text{Flat}(\vx,y,\theta, w) - \alpha\cdot\cpcc(\tmetric, \rho).        
\end{equation}
Using a composite objective as defined in~\Cref{eq:objective_regularizer}, we can enforce the distance relationship between a pair of representations in the feature space, to behave similarly to the tree metric between the same vertices. For instance, consider a simple label tree with a root node, a coarse level, and a fine level, where subsets of fine classes share the same coarse parent. For this hierarchy, we would expect the fine classes of the same parents (e.g., \emph{apple} and \emph{banana} are \emph{fruits}) to have closer representations in the feature space, whereas fine classes with different coarse parents (e.g., an \emph{apple} is a \emph{fruit} and a \emph{tulip} is a \emph{flower}) should be further apart. The learned structure-informed representations reflect these hierarchical relationships and lead to interpretable features with better generalization \citep{zeng2022learning}.

\subsection{$\ell_2$-CPCC}

\Cref{eq:CPCC} offers the flexibility of designing a metric to measure the similarity between two data subsets, and \citet{zeng2022learning} define the \emph{Euclidean dataset distance} as $\rho_{\ell_2}(\data_i, \data_j) := \|\frac{1}{|\data_i|}\sum_{\vx\in \data_i}f(\vx) - \frac{1}{|\data_j|}\sum_{\vx'\in \data_j}f(\vx')\|_2$. The distance between datasets is thus the $\ell_2$ distance between two Euclidean \emph{centroids} of their class-conditioned representations, which is unsuitable for modeling \emph{tree-like} data \citep{chen2013hyperbolicity}. 
Additionally, this regularization approach in \citet{zeng2022learning} is applied only to the leaf nodes of $\mathcal{T}$ for efficiency.

However, this leaf-only formulation of the CPCC offers an \emph{approximation} of the structured information, since the distance between non-leaf nodes is not restricted explicitly by the regularization. This approximation, therefore, leads to a loss of information contained in the original hierarchy $\mathcal{T}$.

Actually, it is impossible to embed $d_\mathcal{T}$ into $\ell_2$ exactly. Or more formally, there exists no bijection $\varphi$ such that $d_\mathcal{T}(\varphi(\vz_i), \varphi(\vz_j)) = \lVert \vz_i - \vz_j \rVert_2$ irrespective of how large the feature dimension $d$ is. We provide two such examples for a toy label tree in \Cref{fig:failure}, below. 

\begin{wrapfigure}{l}{0.4\textwidth}
\centering
\includegraphics[width=\linewidth]{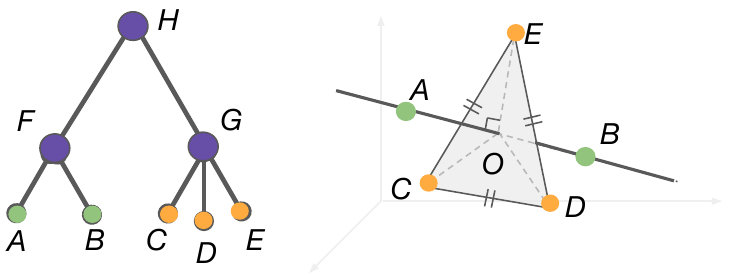}\hspace*{-0.4cm}
\caption{(left) An unweighted label tree with two coarse nodes: $F$, $G$. $F$ contains two fine classes $A,B$ and $G$ contains three fine classes $C,D,E$. We cannot embed this in $\ell_2$ exactly (right).}
\vspace{-1em}
\label{fig:failure}
\end{wrapfigure}

\textbf{Example 1.} We intend to embed all nodes in $\gT$, including purple internal nodes. Notice that $G,C,D,E$ is a star graph centered at $G$. Since $CG = DG = 1, CD = 2$, by triangle inequality $C,D,G$ must be on the same line where $G$ is the center of $CD$. Similarly, $G$ must be at the center of $DE$. Hence, the location of $E$ must be at $C$, which contradicts the uniqueness of all nodes in $\gT$.

\textbf{Example 2.} As an easier problem, let us only embed leaf nodes into the Euclidean space as shown in \Cref{fig:failure}. Since $CD = DE = CE = 2$, they must be on a plane with an equilateral triangle $\triangle_{CDE}$ in Euclidean geometry. Then all the green classes have the same distance $4$ to each yellow class. Therefore, $A,B$ must be on the line perpendicular to $\triangle_{CDE}$ and intersecting the plane with $O$, which is the barycenter of $\triangle_{CDE}$. Due to the uniqueness and symmetry of $A,B$, we must have $AO = BO = 1$ to satisfy $AB = 2$. $AO = 1, OE = \frac{2\sqrt{3}}{3}, AE = 4$ which contradicts the Pythagorean Theorem.

\begin{figure}[t]
\vspace{-1em}
\centering
\begin{subfigure}{.5\textwidth}
  \centering
  \includegraphics[width=\linewidth]{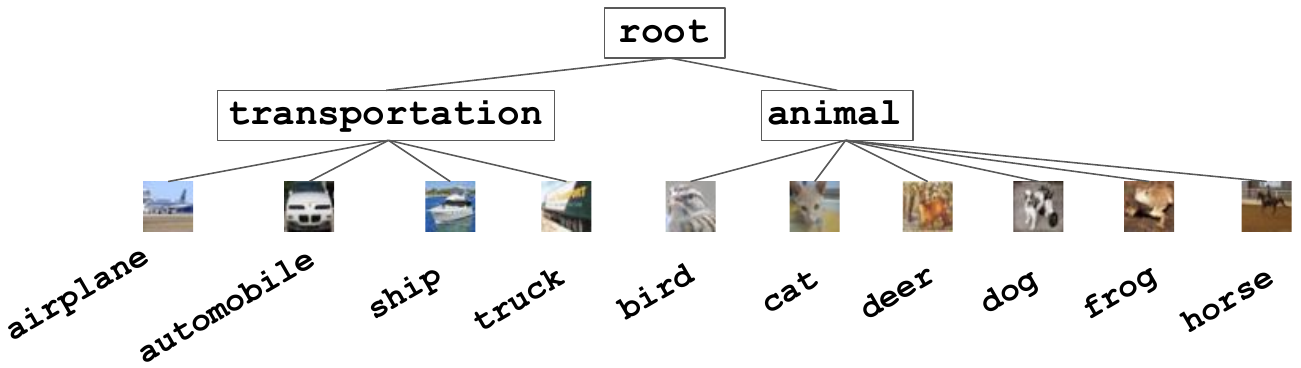}
\end{subfigure}%
\begin{subfigure}{.5\textwidth}
  \centering
  \includegraphics[width=\linewidth]{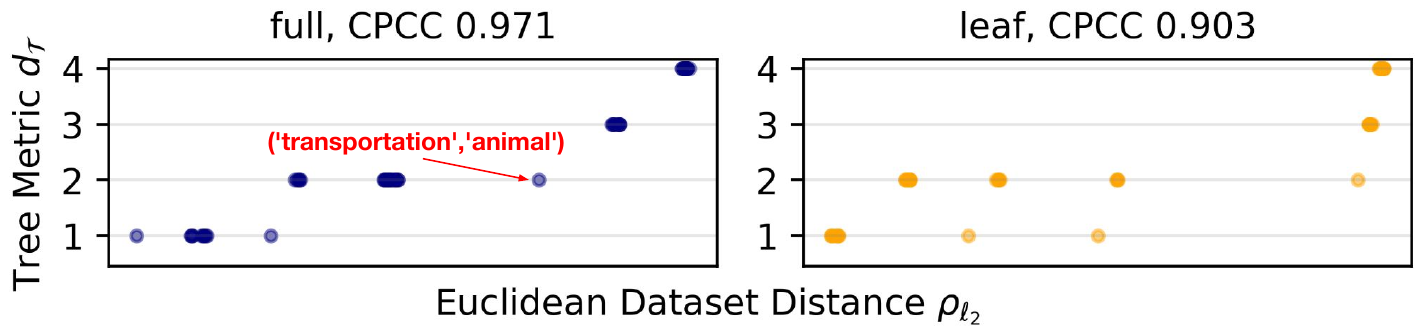}
\end{subfigure}
\caption{Using $\ell_2$-CPCC for structured representation on CIFAR10. CIFAR10 hierarchy (left) has a three level structure with $13$ vertices. For a $512$-dimensional embedding, we apply $\ell_2$-CPCC either for the full tree (middle) or the leaf nodes only (right) and plot the ground truth tree metric against pairwise Euclidean centroid distances of the learnt representation. The optimal train CPCC is $1$.}
\vspace{-1em}
\label{fig:toy}
\end{figure}

Since we cannot embed an arbitrary tree $\gT$ into $\ell_2$ without distortion, it would also affect the optimization of the $\ell_2$-CPCC in a classification problem, where the tree weights encode knowledge of class similarity. To verify our claims, we consider the optimization of $512$-dimensional $\ell_2$-CPCC structured representations for CIFAR10 \citep{krizhevsky2009learning}. The CIFAR10 dataset consists of a small label hierarchy as shown in \Cref{fig:toy} (left). 

The optimal CPCC is achieved when each tree metric value corresponds to a single $\rho_{\ell_2}$. However, in \Cref{fig:toy} (right), even with an optimization of the $\ell_2$-CPCC loss for the entire tree, we observe a sub-optimal train CPCC less than $1$, where the distance between two coarse nodes, \emph{transportation} and \emph{animal}, is far away from the desired solution. Furthermore, optimization of the CPCC loss for only the leaf nodes, leads to an even larger distortion of the tree metrics.

\section{Methodology}
\label{sec:methodology}

Hyperbolic spaces are more suitable for embedding hierarchical relationships with low distortion \citep{Sarkar_2012} and low dimensions. Hence, motivated by the aforementioned challenges, we propose a \textbf{Hyp}erbolic \textbf{Structure}d regularizer for \emph{hierarchy-informed} representation learning. We begin with the basics of hyperbolic geometry, followed by the detailed methodology of our proposed approach.

\subsection{Hyperbolic Geometry}
\begin{wrapfigure}{r}{0.3\textwidth}
\centering
\vspace{-1em}
\includegraphics[width=.95\linewidth]{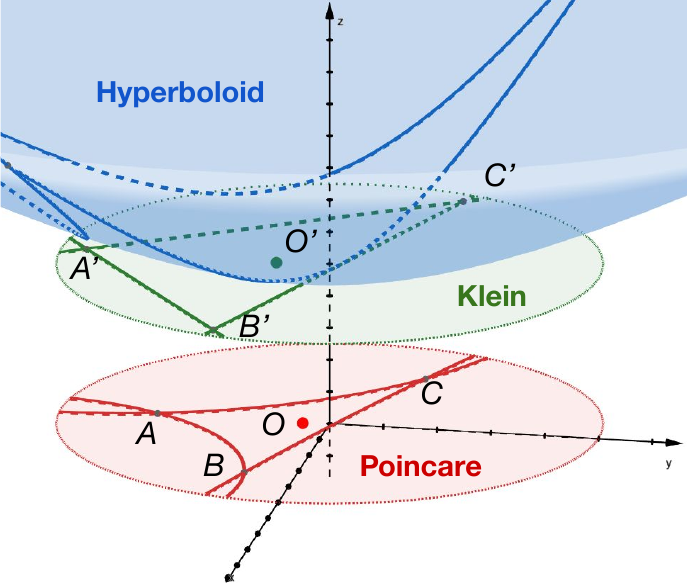}
\caption{Lines on different models for $2$-dimensional hyperbolic space.}
\vspace{-1em}
\label{fig:hyperbolic_models}
\end{wrapfigure}

Hyperbolic spaces are non-Euclidean spaces with negative curvature where given a fixed point and a line, there exist infinitely many parallel lines that can pass through this point. There are several commonly used isometric hyperbolic models \citep{cannon1997hyperbolic}. For this work, we mainly use the Poincaré Ball model. 

\begin{definition}[Manifold]
\label{eq:manifold}
A \textit{manifold} $\gM$ is a set of points $\vz$ that are locally Euclidean. Every point $\vz$ of the manifold $\gM$ is attached to a \textit{tangent space} $\gT_\vz \gM$, which is a vector space over the reals of the same dimensionality as $\gM$ that contain all the possible directions that can tangentially pass through $\vz$.
\end{definition}

\begin{definition}[\Poincare Ball Model] Given $c$ as a constant, the \Poincare ball model $(\sB_c^d,\mathfrak{g}_B)$ is defined by a manifold of an open ball $\sB_c^d=\{\vz\in\sR^d:c\|\vz\|^2< 1\}$ and metric tensor $\mathfrak{g}_B$ that defines an inner product of $\gT_\vz \sB_c^d$. The model is equipped with the distance \citep{ungar2001hyperbolic} as
\begin{equation}
d_{\sB_c}(\vz_1,\vz_2) = \frac{2}{\sqrt{c}}\tanh^{-1}\left(\sqrt{c}\left\lVert\frac{-(1 + 2c\left<-\vz_1, \vz_2\right> + c\|\vz_2\|^2)\vz_1 + (1 - c\|\vz_1\|^2)\vz_2}{1 - 2c\left<\vz_1,\vz_2\right> + c^2\|\vz_1\|^2\|\vz_2\|^2}\right\rVert\right).
\label{eq:poincaredistance}
\end{equation}
\end{definition}
For $c \to 0$, we can recover the properties of the Euclidean geometry since $\lim_{c \to 0} d_{\sB_c}(\vz_1,\vz_2) = 2\|\vz_1-\vz_2\|$. Since $\gT_\vz \sB_c^d$ is isomorphic to $\mathbb{R}^d$, we can connect vectors in Euclidean space and hyperbolic space with the bijection between $\gT_\vz \sB_c^d$ and $\sB_c^d$ \citep{ungar2001hyperbolic}. For $\vz = \mathbf{0}$, the \emph{exponential map} $\exp_\mathbf{0}^c : \gT_\vz \sB_c^d \to \sB_c^d$ and \emph{logarithm map} $\log_\mathbf{0}^c : \sB_c^d \to \gT_\vz \sB_c^d$ have the closed form of
\begin{equation}
\label{eq:hyp_maps}
    \exp_\mathbf{0}^c (\vv) = \tanh\left(\sqrt{c} \|\vv\| \right)\frac{\vv}{\sqrt{c}\|\vv\|}, \quad \log_\mathbf{0}^c (\vu) = \frac{1}{\sqrt{c}}\tanh^{-1}\left(\sqrt{c} \| \vu \| \right)\frac{\vu}{\|\vu \|}. 
\end{equation}
Alternatively, to guarantee the correctness of Poincaré distance computation, we can also process any Euclidean vector with a clipping module \citep{nickel2017poincare}
\begin{equation}
    \text{clip}^c(\vv) = \begin{cases}
			\vv, & \text{if $\|\vv\|< 1 / \sqrt{c}$}\\
                \left(\frac{1}{\sqrt{c}} - \epsilon\right) \frac{\vv}{\|\vv\|} , & \text{otherwise}
		 \end{cases},
\end{equation}
so the clipped vector is within the Poincare disk. We set $\epsilon$ as a small positive number in practice.

\begin{definition}[Klein Model]
Klein model $(\mathbb{K}_c^d, \mathfrak{g}_K)$ consists of an $1/\sqrt{c}$-radius open ball $\mathbb{K}_c^d = \{\vz\in\sR^d:c\|\vz\|^2 < 1\}$ and a metric tensor $\mathfrak{g}_K$ different from $\mathfrak{g}_B$. Similar to the mean computation in Euclidean space, let $\gamma_i = 1 / \sqrt{1 - c\|\vz_i\|^2}$, the Einstein midpoint of a group of Klein vectors $\vz_1, \dots \vz_n \in \mathbb{K}_c^d$ has a simple expression of a weighted average
\begin{equation}
    \text{HypAve}_K(\vz_1, \dots \vz_n) = \sum\limits_{i=1}^n \gamma_i \vz_i \bigg/ \sum\limits_{i=1}^n \gamma_i.
    \label{eq:hypave}
\end{equation}
\end{definition}
We illustrate the relationship between the different hyperbolic models in \Cref{fig:hyperbolic_models}. The hyperboloid space models $d$-dimensional hyperbolic geometry on a $d+1$-dimensional space. When $d = 2$, the Klein model is the tangent plane of the hyperboloid model at $(0,0,1)$, and the \Poincare disk shares the same support as the Klein disk, although shifted downwards and centered at the origin. Given a triangle on the hyperboloid model, its projection on the Klein model preserves the straight sides, but the projection of a line on the \Poincare model is a part of a circular arc or the diameter of the disk. Let $\vz_\mathbb{B}, \vz_\mathbb{K}$ be coordinates of $\vz$ under \Poincare and Klein model respectively, the prototype operations on $\mathbb{B}^d_c$ require transformations between $\mathbb{B}^d_c$ and $\mathbb{K}^d_c$ as
\begin{equation}
    \vz_\mathbb{B} = \frac{\vz_\mathbb{K}}{1 + \sqrt{1 - c\|\vz_\mathbb{K}\|^2}}, \quad \vz_\mathbb{K} = \frac{2\vz_\mathbb{B}}{1 + c\|\vz_\mathbb{B}\|^2}.
    \label{eq:Poincare2Klein}
\end{equation}
For example, in \Cref{fig:hyperbolic_models}, $O'$ is the $\text{HypAve}_\text{K}$ of $A',B',C'$, and can be mapped back to the \Poincare disk to get the \Poincare prototype ($\text{HypAve}_\text{B}$) $O$ of points $A, B, C$ by a change of coordinates.

\subsection{\texttt{HypStructure}: Hyperbolic Structured Regularization}
\label{sec:hypstructure-method}
At a high level, \texttt{HypStructure} uses a combination of two losses: a Hyperbolic Cophenetic Correlation Coefficient Loss (\texttt{HypCPCC})), and a Hyperbolic centering loss (\texttt{HypCenter}) for embedding the hierarchy in the representation space. Below we describe the two components of \texttt{HypStructure}. The pseudocode of \texttt{HypStructure} is shown in \Cref{alg:hypstructure} in \Cref{app:pseudocode}.

\textbf{HypCPCC (\underline{Hyp}erbolic \underline{C}o\underline{p}henetic \underline{C}orrelation \underline{C}oefficient):} We extend the $\ell_2$-CPCC methodology in \citet{zeng2022learning} to the hyperbolic space in \texttt{HypCPCC}. Three major steps of \texttt{HypCPCC} are (i) map Euclidean vectors to \Poincare space (ii)  compute class prototypes  (iii) use \Poincare distance for CPCC. Specifically, we first project each $\vz_i \in \mathbb{R}^d$ to $\mathbb{B}_c^d$, and compute the Poincaré centroid for each vertex of $\mathcal{T}$ using hyperbolic averaging as shown in \Cref{eq:hypave} and \Cref{eq:Poincare2Klein}. Alternatively, we can also compute Euclidean centroids $\overline{\vz} = \frac{1}{|\data_i|}\sum_{\vz \in \data_i}{\vz}$ for each vertex, and project each $\overline{\vz} \in \mathbb{R}^d$ to $\mathbb{B}_c^d$ either by $\exp_{\mathbf{0}}^c$ or $\text{clip}^c$. After the computation of hyperbolic centroids, we use the pairwise distances between all vertex pairs in $\mathcal{T}$ in the \Poincare ball, to compute the \texttt{HypCPCC} loss using \Cref{eq:CPCC} by setting $\rho = d_{\mathbb{B}_c}$. 

\textbf{HypCenter (\underline{Hyp}erbolic \underline{Center}ing):} Inspired by Sarkar's low-distortion construction \citep{Sarkar_2012} that places the root node of a tree at the origin, we propose a centering loss for this positioning, that enforces the representation of the root node to be close to the center of the \Poincare disk, and the representations of its children to be closer to the border of \Poincare disk. We enforce this constraint by minimizing the norm of the hyperbolic representation of the root node as $\ell_\text{center} = \norm{\text{HypAve}_B(\exp_{\mathbf{0}}^c(\vz_1), \dots, \exp_{\mathbf{0}}^c(\vz_N))}$. Alternatively, for centroids computed in the Euclidean space and mapped to the \Poincare disk, we minimize $\ell_\text{center} = \norm{1/N\sum_{i=1}^N f_\theta(\vx_i)}$ directly due to the monotonicity of $\tanh(\cdot)$ in the exponential map.

Finally, for $\alpha, \beta > 0$, we can learn the \emph{hierarchy-informed} representations by minimizing
\begin{equation}
    \label{eq:objective_hyp}
    \mathcal{L}(\data) = \sum_{(\vx, y)\in \data} \ell_\text{Flat}(\vx,y,\theta) - \alpha\cdot\hypcpcc(\tmetric, d_{\mathbb{B}_c}) + \beta \cdot\ell_\text{center}(\vx, \theta),
\end{equation}
where $\ell_\text{Flat}$ is a standard classification loss, such as the \texttt{CE} loss or the \texttt{SupCon} loss.

\textbf{Time Complexity:} In a batch computation setting with a batch size $b$ and the number of classes (leaf nodes) as $k$, the computational complexity for a \texttt{HypStructure} computation to embed the full tree will still be $O(d\cdot\min\{b^2,k^2\})$, which is the same as the complexity of a Euclidean \emph{leaf-only} CPCC. The additional knowledge gained from internal nodes allows us to reason about the relationship between higher-level  concepts, and the hyperbolic representations help in achieving a low distortion of hierarchical information for better performance in downstream tasks.

\section{Experiments}

We conduct extensive experiments on several large-scale image benchmark datasets to evaluate the performance of \texttt{HypStructure} as compared to the Flat and $\ell_2$-CPCC baselines for hierarchy embedding, classification, and OOD detection tasks.

\label{sec:experiments}
\paragraph{Datasets and Setup}
Following the common benchmarks in the literature, we consider three real-world vision datasets, namely CIFAR10, CIFAR100~\citep{krizhevsky2009learning} and ImageNet100 \citep{ming2022delving} for training, which vary in scale, number of classes, and number of images per class. We construct the ImageNet100 dataset by sampling 100 classes from the ImageNet-1k \citep{imagenet} dataset following \citep{ming2022delving}. For CIFAR100, a three-level hierarchy is available with the dataset release \citep{krizhevsky2009learning}. Since no hierarchy is available for the CIFAR10 and ImageNet100 datasets, we construct a hierarchy for CIFAR10 manually in \Cref{fig:toy}. For ImageNet100, we create a subtree from the WordNet \citep{wordnet}  given 100 classes as leaves. More details regarding the datasets, network, training and setup are provided in the \Cref{app:exp_details_params}.

\subsection{Quality of Hierarchical Information}
\label{sec:tree_emb_quality}
\begin{figure*}[!ht]
    \centering
    \begin{minipage}{0.65\textwidth}
        \centering
        \captionof{table}{Evaluation of hierarchical information distortion and classification accuracy using \texttt{SupCon} \citep{2020supcon} as $\ell_\text{Flat}$. All metrics are reported as mean (standard deviation) over 3 seeds.}
        \resizebox{\textwidth}{!}{
            \begin{tabular}{lcccccccc}
                \toprule
                \multirow{2}{*}{\textbf{\begin{tabular}[c]{@{}c@{}}Dataset\\ (Backbone)\end{tabular}}} & \multirow{2}{*}{\textbf{Method}} & \multicolumn{2}{c}{\textbf{Distortion of Hierarchy}} & \multicolumn{2}{c}{\textbf{Classification Accuracy}} \\
                \cmidrule(lr){3-4} \cmidrule(lr){5-6}
                & & \textbf{$\delta_{rel}$ ($\downarrow$)} & \textbf{CPCC ($\uparrow$)} & \textbf{Fine ($\uparrow$)} & \textbf{Coarse ($\uparrow$)} \\
                \midrule
                \multirow{3}{*}{\begin{tabular}[c]{@{}c@{}}CIFAR10\\ (ResNet-18)\end{tabular}} & Flat & 0.232 (0.001) & 0.573 (0.002) & 94.64 (0.12) & 99.16 (0.04) \\
                & $\ell_2$-CPCC  & 0.174 (0.002) & 0.966 (0.001) & 94.47 (0.13) & 98.91 (0.02) \\
                & \cellcolor{gray!20}\texttt{HypStructure}  & \cellcolor{gray!20}\textbf{0.094 (0.003)} & \cellcolor{gray!20}\textbf{0.992 (0.001)} & \cellcolor{gray!20}\textbf{94.79 (0.14)} & \cellcolor{gray!20}\textbf{99.18 (0.04)} \\
                \midrule
                \multirow{3}{*}{\begin{tabular}[c]{@{}c@{}}CIFAR100\\ (ResNet-34)\end{tabular}} & Flat & 0.209 (0.002) & 0.534 (0.119) & 74.96 (0.14) & 84.15 (0.19) \\
                & $\ell_2$-CPCC  & 0.213 (0.006) & \textbf{0.779 (0.002)} & 76.07 (0.19) & 85.28 (0.32) \\
                & \cellcolor{gray!20}\texttt{HypStructure} & \cellcolor{gray!20}\textbf{0.127 (0.016)} & \cellcolor{gray!20}0.766 (0.007) & \cellcolor{gray!20}\textbf{76.68 (0.22)} & \cellcolor{gray!20}\textbf{86.01 (0.13)} \\
                \midrule
                \multirow{3}{*}{\begin{tabular}[c]{@{}c@{}}ImageNet100\\ (ResNet-34)\end{tabular}} & Flat & 0.168 (0.003) & 0.429 (0.002) & 90.01 (0.07) & 90.77 (0.11) \\
                & $\ell_2$-CPCC  & 0.213 (0.009) & 0.834 (0.002) & 89.57 (0.38) & 90.34 (0.28) \\
                & \cellcolor{gray!20}\texttt{HypStructure}  & \cellcolor{gray!20}\textbf{0.134 (0.001)} & \cellcolor{gray!20}\textbf{0.841 (0.001)} & \cellcolor{gray!20}\textbf{90.12 (0.01)} & \cellcolor{gray!20}\textbf{90.84 (0.02)} \\
                \bottomrule
            \end{tabular}
        }
        \label{tab:combined_metrics_accuracy}
    \end{minipage}
    \hfill
    \begin{minipage}{0.3\textwidth}
        \centering
        \includegraphics[width=\textwidth]{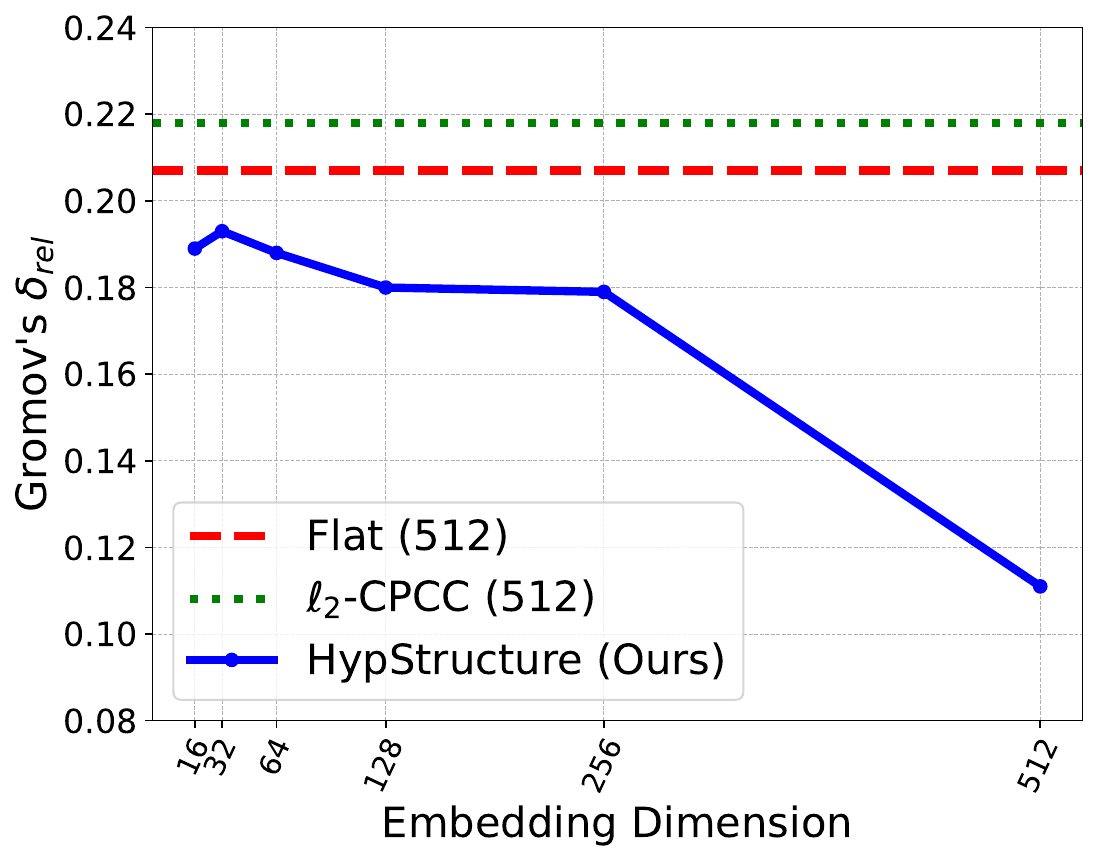}
        \caption{Evaluation of distortion vs feature dimensions for \texttt{HypStructure}.}
        \label{fig:gromov_delta}
    \end{minipage}
\end{figure*}

First, to assess the \emph{tree-likeness} of the learnt representations, we measure the Gromov's hyperbolicity $\delta_{rel}$ \citep{gromov1987hyperbolic, adcock2013tree, jonckheere2008scaled, khrulkov2020hyperbolic} of the features in \Cref{tab:combined_metrics_accuracy}. Lower $\delta_{rel}$ indicates higher tree-likeness and a perfect tree metric space has $\delta_{rel}=0$ (more details in \Cref{app:sec_delta_hyperbolicity}). To also evaluate the correspondence of the feature distances with ground truth tree metrics, we compute CPCC on test sets. We observe that \texttt{HypStructure} reduces distortion of hierarchical information over Flat by upto \textbf{59.4\%} and over $\ell_2$-CPCC by upto \textbf{45.4\%}, while also consistently improving the test CPCC for most datasets. 

We also perform a qualitative analysis of the learnt representations from \texttt{HypStructure} on the CIFAR10 dataset, and visualize them in a \Poincare disk using UMAP \citep{mcinnes2018umap} in \Cref{fig:hyp_umap_cifar10}. We can observe clearly that the samples for fine classes arrange themselves in the \Poincare disk based on the hierarchy tree as seen in \Cref{fig:toy}, being closer to the classes which share a \emph{coarse} class parent.

To examine the impact of feature dimension on the representative capacity of the hyperbolic space, we vary the feature dimension for \texttt{HypStructure} and compute the $\delta_{rel}$ for each learnt feature. Comparing the distortion of features with the Flat and $\ell_2$-CPCC settings in \Cref{fig:gromov_delta},  we observe that $\delta_{rel}$ decreases consistently with increasing dimensions, implying that high dimension features using \texttt{HypStructure} are more tree-like, and better than Flat and $\ell_2$-CPCCs' $512$-dimension baselines.

\begin{figure}[!ht]
    \centering
    \begin{minipage}{0.25\textwidth}
        \centering
        \includegraphics[width=\linewidth]{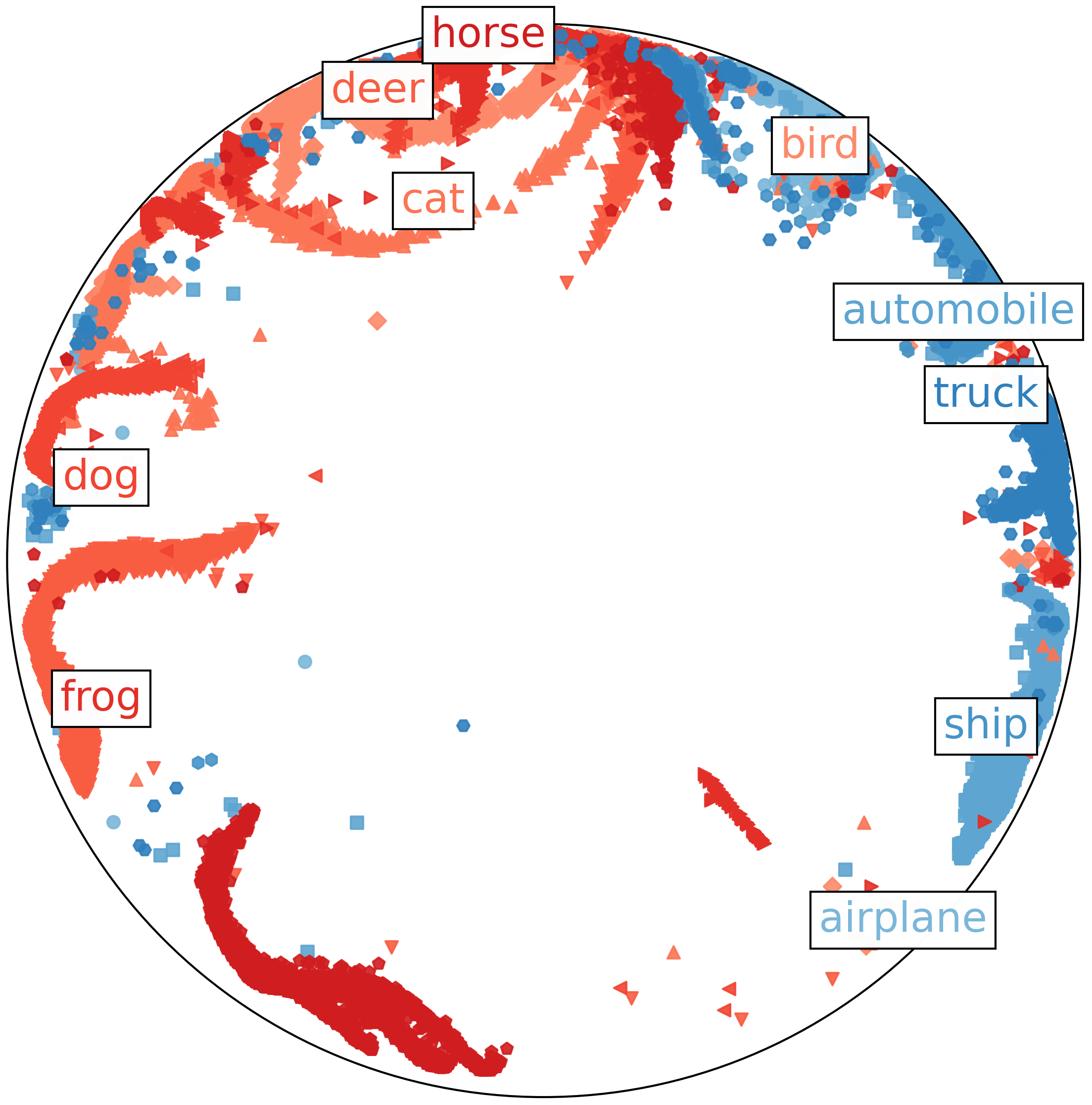}
        \subcaption{\centering Hyperbolic UMAP: \texttt{HypStructure}}
        \label{fig:hyp_umap_cifar10}
    \end{minipage}
    \hfill
    \raisebox{5pt}{\begin{minipage}{0.32\textwidth}
        \centering
        \includegraphics[width=\linewidth]{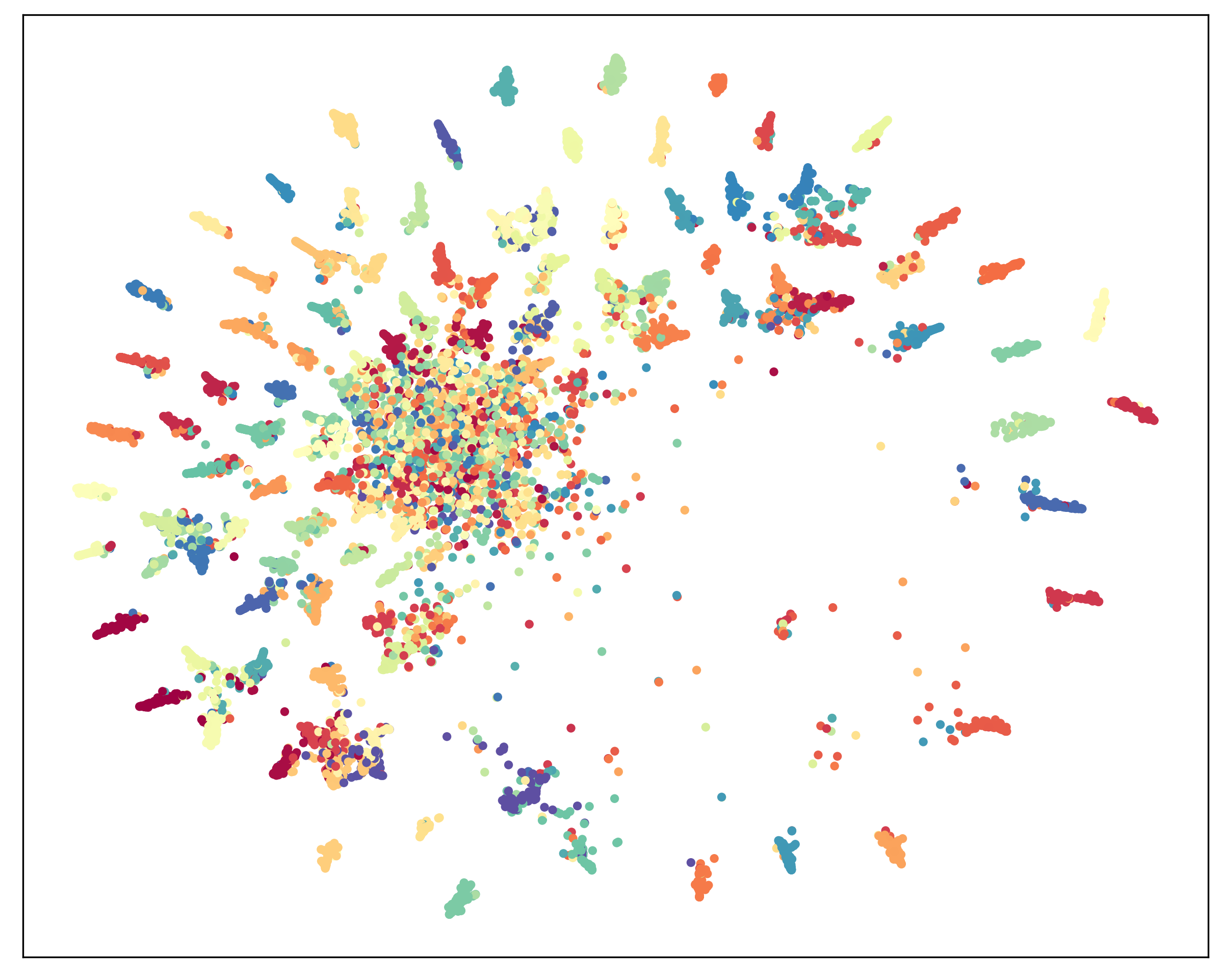}
        \subcaption{\centering Euclidean t-SNE: Flat}
        \label{fig:tsneflat}
    \end{minipage}}
    \hfill
    \begin{minipage}{0.32\textwidth}
        \centering
        \includegraphics[width=\linewidth]{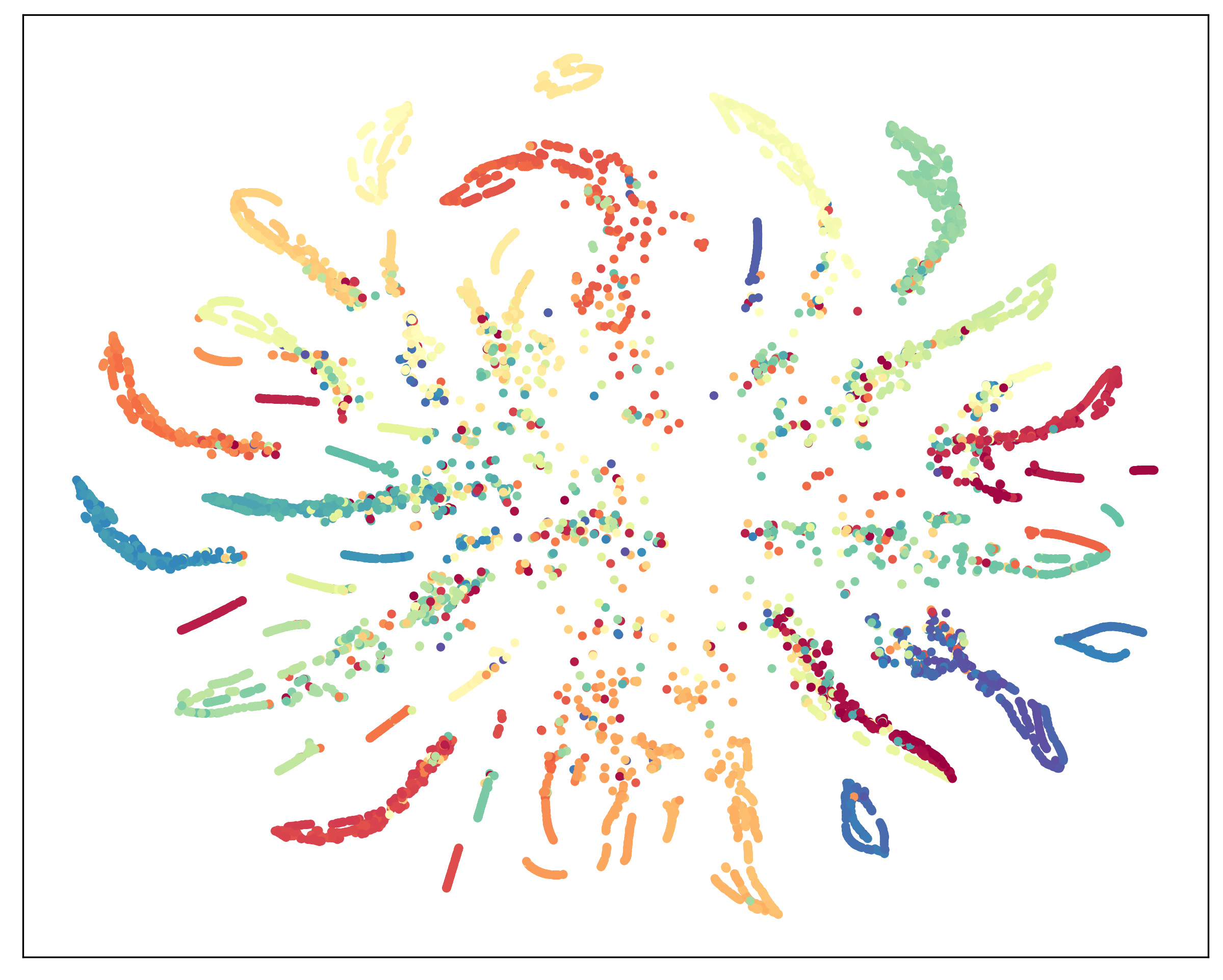}
        \subcaption{\centering Euclidean t-SNE: \texttt{HypStructure}}
        \label{fig:tsne_hypstructure}
    \end{minipage}
    \captionof{figure}{Left: Hyperbolic UMAP visualization of CIFAR10's \texttt{HypStructure} representation on \Poincare disk. Middle and Right: t-SNE visualization of learnt representations on CIFAR100.}
\end{figure}

\subsection{Classification}
Following \citet{zeng2022learning}, we treat leaf nodes in the hierarchy as \emph{fine} classes and their parent nodes as \emph{coarse} classes. To evaluate the quality of the learnt representations, we perform a classification task on the fine and coarse classes using a kNN-classifier following \citep{he2020momentum, wu2018unsupervised, caron2020unsupervised, zhuang2019local} and report the performance on the three datasets in \Cref{tab:combined_metrics_accuracy}. We observe that \texttt{HypStructure} leads to upto \textbf{2.2\%} improvements over Flat and upto \textbf{0.8\%} improvements over $\ell_2$-CPCC on both fine and coarse accuracy. We also visualize the learnt test features from Flat vs \texttt{HypStructure} on the CIFAR100 dataset using Euclidean t-SNE \citep{van2008visualizing} and show the visualizations in \Cref{fig:tsneflat} and \Cref{fig:tsne_hypstructure} respectively. We observe that \texttt{HypStructure} leads to sharper and more discriminative representations in Euclidean space. Additionally, we see that the fine classes belonging to a coarse class (the same shades of colors) which are semantically closer in the label hierarchy, are grouped closer and more compactly in the feature space as well, as compared to Flat. We also perform evaluations using the linear evaluation protocol \citep{2020supcon} and observe an identical trend in the accuracy, we report these results in \Cref{app:sec_linear_evaluation}.

\subsection{OOD Detection}
In addition to leveraging the hierarchy explicitly for the purpose of learning tree-like ID representations, we argue that a structured separation of features in the hyperbolic space as enforced by \texttt{HypStructure} is helpful for the OOD detection task as well. To verify our claim, we perform an exhaustive evaluation on 9 real-world OOD datasets and demonstrate that \texttt{HypStructure} leads to improvements in the OOD detection AUROC. We share more details below.

\begin{figure*}[h]
    \centering
    \begin{subfigure}[c]{0.75\textwidth}
        \centering
        \resizebox{\textwidth}{!}{
        \begin{tabular}{lcccccccccc}
        \toprule
        \multirow{2}{*}{\textbf{Method}} & \multicolumn{5}{c}{\textbf{OOD Dataset AUROC (↑)}} & \multicolumn{2}{c}{\textbf{Overall AUROC}} \\
        \cmidrule(lr){2-6} \cmidrule(lr){7-8}
        & SVHN & Textures & Places365 & LSUN & iSUN & \textbf{Avg.(↑)} & \textbf{B.C.(↑)} \\
        \midrule
        ProxyAnchor \citep{kim2020proxy} & 82.43 & 84.99 & 79.84 & 91.68 & 84.96 & 84.78 & 51.42 \\
        CE + SimCLR \citep{winkens2020contrastive} & 94.45 & 82.01 & 71.48 & 89.00 & 83.82 & 84.15 & 31.42 \\
        CSI \citep{tack2020csi} & 92.65 & 86.47 & 76.27 & 83.78 & \textbf{84.98} & 84.83 & 40.00 \\
        CIDER \citep{cider2022ming} & 95.16 & 90.42 & 73.43 & 96.33 & 82.98 & 87.67 & 60.00 \\
        \midrule
        SSD+ (\texttt{SupCon}) \citep{2021ssd} & 94.19 & 86.18 & \textbf{79.90} & 85.18 & 84.08 & 85.90 & 54.28 \\
        KNN+ (\texttt{SupCon}) \citep{sun2022knnood} & 92.78 & 88.35 & 77.58 & 89.30 & 82.69 & 86.14 & 40.00 \\
        $\ell_2$-CPCC \citep{zeng2022learning} & 93.08 & \textbf{90.45} & 77.21 & 82.77 & 82.79 & 85.26 & 40.00 \\
        \rowcolor{gray!20}\texttt{HypStructure} &  \textbf{95.97} & 88.43 & 78.12 & \textbf{97.01} & 84.51 & \textbf{88.81} & \textbf{82.85} \\
        \bottomrule
        \end{tabular}
        }
        \caption{OOD detection performance with CIFAR100 as ID dataset.}
        \label{tab:ood_detection_main_cifar100}
    \end{subfigure}
    \hfill
    \begin{subfigure}[c]{0.21\textwidth}
        \centering
        \includegraphics[width=\textwidth]{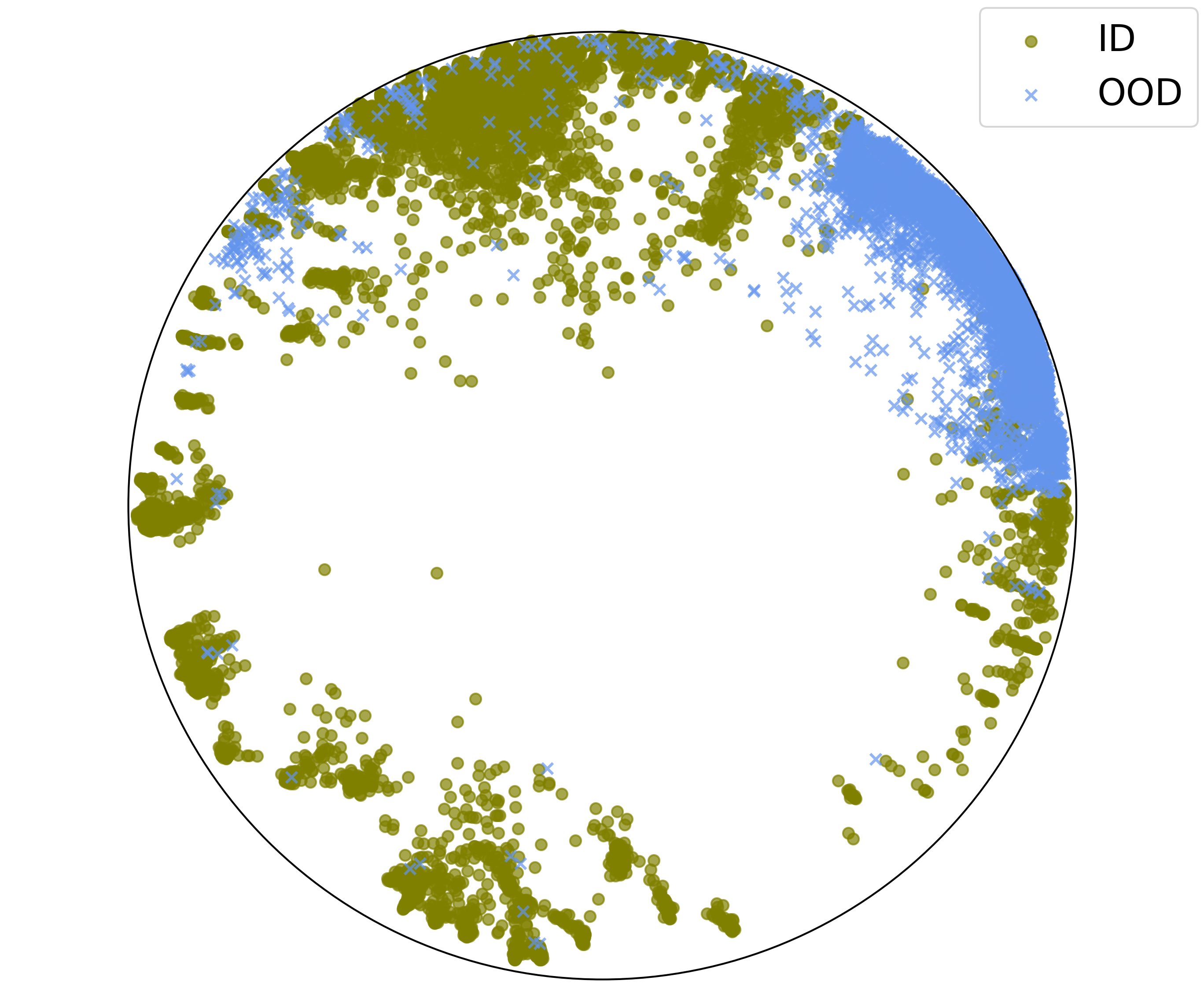}
        \hspace{1pt}
        \caption{CIFAR100 (ID) vs. SVHN (OOD).}
        \label{fig:ood_viz}
    \end{subfigure}
    \caption{Left: OOD detection score across various datasets on the CIFAR100 ID dataset. Right: Hyperbolic UMAP of the CIFAR100 (ID) test vs SVHN (OOD) test features learnt from \texttt{HypStructure} with a clear separation in the \Poincare disk. }
\end{figure*}

\subsubsection{Problem Setting}

Out-of-distribution (OOD) data refers to samples that do not belong to the in-distribution (ID) and whose label set is disjoint from $\mathcal{Y}^{\text{in}}$ and therefore should not be predicted by the model. Therefore the goal of the OOD detection task is to design a methodology that can solve a binary problem of whether an incoming sample $\vx \in \mathcal{X}$ is from $P_{\mathcal{X}}$ i.e. $\*y \in \mathcal{Y}^{\text{in}}$ (ID)  or $\*y \notin \mathcal{Y}^{\text{in}}$ (OOD).

\textbf{OOD datasets} We evaluate on 5 OOD image datasets when CIFAR10 and CIFAR100 are used as the ID datasets, namely \texttt{SVHN}~\citep{svhn}, \texttt{Places365}~\citep{zhou2017places}, \texttt{Textures}~\citep{texture}, \texttt{LSUN} ~\citep{lsun}, and \texttt{iSUN}~\citep{isun}, and 4 large scale OOD test datasets, specifically \texttt{SUN} \citep{lsun}, \texttt{Places365} \citep{zhou2017places}, \texttt{Textures} \citep{texture} and \texttt{iNaturalist} \citep{Horn_2018_CVPR} when ImageNet100 is used as the ID dataset. This subset of datasets is prepared by \citep{ming2022delving} and is created with overlapping classes from ImageNet-1k removed from these datasets to ensure there is no overlap in the distributions.

\textbf{OOD detection scores}
While several scores have been proposed for the task of OOD detection, we evaluate our proposed method using the Mahalanobis score \citep{2021ssd}, computed by estimating the mean and covariance of the in-distribution training features. The Mahalanobis score is defined as
\begin{equation}
    s(\vx) = (f(\vx) - \mu)^\top\Sigma^{-1}(f(\vx) - \mu),
    \label{eq:mahalanobis}
\end{equation}
where $\mu, \Sigma$ are the mean and covariance of in-distribution training features. \citet{2021ssd} present the Mahalanobis score (\cref{eq:mahalanobis}) in a generalized version for multiple feature clusters. However, since they empirically observe that the single-cluster version achieves the highest performance \citep{2021ssd}, we will focus on this version.

After computing the OOD detection scores, we measure the area under the receiver operating characteristic curve (AUROC) as the primary evaluation metric following \citet{lee2018simple, 2021ssd}.

\subsubsection{Main Results and Discussion}
\begin{table}[tb]
\centering
\caption{OOD detection AUROC with CIFAR10 and ImageNet100 as ID.}
\label{tab:ood_detection_cifar_imagenet}
\begin{tabular}{lc|lc}
    \toprule
    \textbf{Method} & \textbf{AUROC} & \textbf{Method} & \textbf{AUROC}\\
    \midrule
    \multicolumn{2}{c}{\textbf{CIFAR10}} & \multicolumn{2}{c}{\textbf{ImageNet100}} \\
    SSD+ & 97.38 & SSD+ & 92.46 \\
    KNN+ & 97.22 & KNN+ & 92.74 \\
    $\ell_2$-CPCC & 76.67 & $\ell_2$-CPCC & 91.33 \\
    \rowcolor{gray!20}\texttt{HypStructure} & \textbf{97.75} & \texttt{HypStructure} & \textbf{93.83}\\
    \bottomrule
\end{tabular}
\end{table}

We report the AUROC averaged over all the OOD datasets (5 datasets for CIFAR10 and CIFAR100, 4 datasets for ImageNet100) in \Cref{tab:ood_detection_main_cifar100} and \Cref{tab:ood_detection_cifar_imagenet} In addition to the Flat (\texttt{SupCon}) and $\ell_2$-CPCC baselines, we also compare our method with other state-of-the-art methods (see \Cref{app:ood_citations} for more details about existing OOD detection methods). We observe that \texttt{HypStructure} leads to a consistent improvement in the OOD detection score, with upto \textbf{2\%} in average AUROC. We also report the dataset-wise OOD detection results for the CIFAR100 ID dataset in Table \ref{tab:ood_detection_main_cifar100} along with Average AUROC. To remove the bias in the Average AUROC metric towards any single dataset, we also evaluate the Borda Count (B.C.) \citep{McLean_Urken_Hewitt_1995} and report the same, along with a detailed comparison with more OOD detection methods in Table \ref{tab:ood_detection_main_cifar100}, 
and Tables \ref{tab:main_c10_ood} and \ref{tab:main_im10_ood} in the \Cref{app:sec_ood_detection}.

We observe that \texttt{HypStructure} ranks in the highest performing methods consistently, thereby demonstrating a higher Borda Count as well. We additionally visualize the CIFAR100 (ID) vs SVHN (OOD) features learnt from \texttt{HypStructure}, using a hyperbolic UMAP visualization in \Cref{fig:ood_viz}. We observe that training with \texttt{HypStructure} leads to an improvement in the separation of ID vs OOD features in the \Poincare disk. 

\textbf{Additional Experiments,  Ablations and Visualizations:} More experiments using hyperbolic contrastive losses and hyperbolic networks, ablation studies on each component of \texttt{HypStructure} and additional visualizations can be found in \Cref{app:secC}.

\section{Eigenspectrum Analysis of Structured Representations}
\label{sec:eigenspectrum_theory}

\begin{figure}[ht]
    \centering
    \raisebox{8pt}{\begin{subfigure}[c]{0.25\textwidth}
        \centering
    \includegraphics[width=\textwidth]{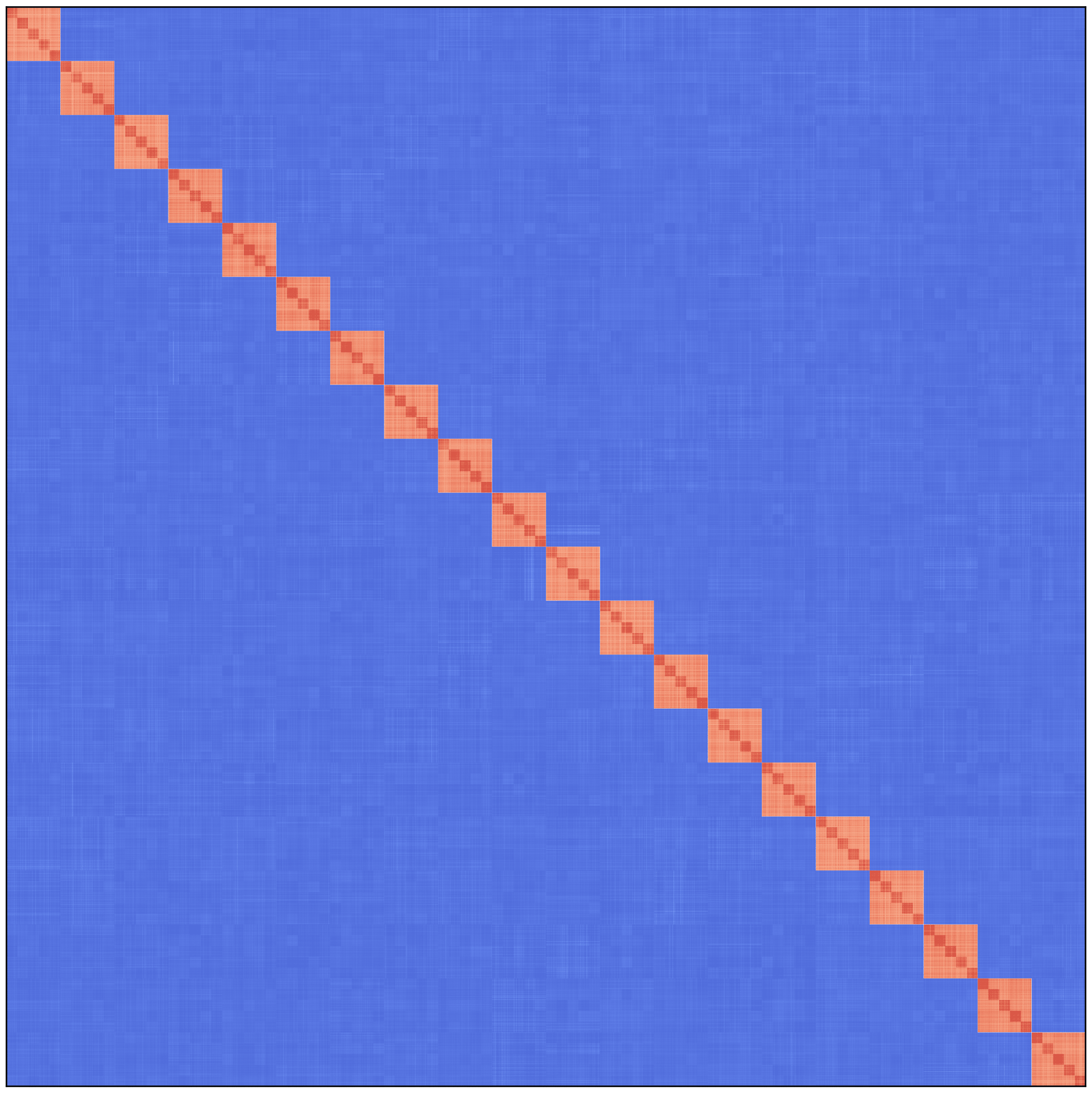}
    \phantomsubcaption
    \label{fig:poincare-matrix}
    \end{subfigure}
    }
    \hfill
    \begin{subfigure}[c]{0.32\textwidth}
        \centering
        \includegraphics[width=\textwidth]{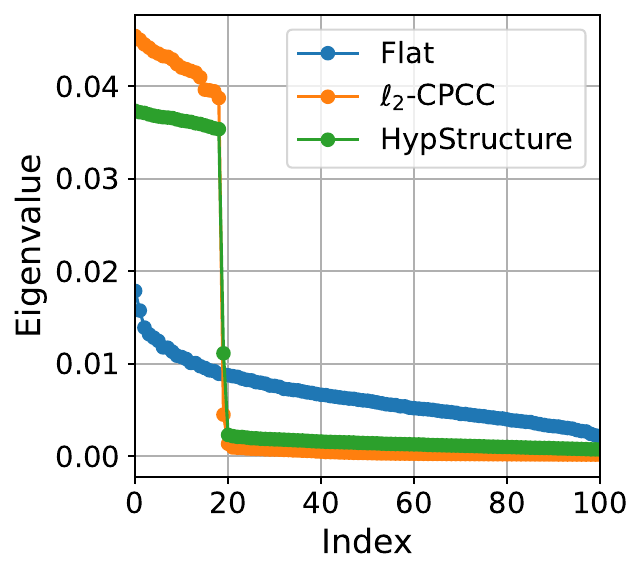}
        \phantomsubcaption
        \label{fig:eigenspectrum}
    \end{subfigure}
    \hfill
    \raisebox{-3pt}{\begin{subfigure}[c]{0.4\textwidth}
        \centering
        \includegraphics[width=\textwidth]{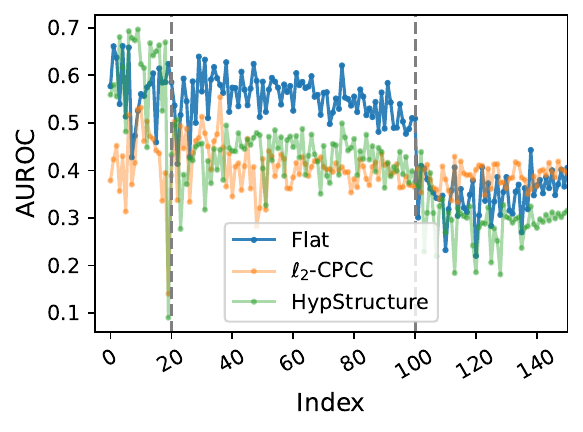}
        \phantomsubcaption
        \label{fig:CIFAR100SVHN}
    \end{subfigure}
    }
    \vspace{-10pt}
    \caption{CIFAR100 as in-distribution dataset. Left (a): Hierarchical block pattern of $K$. Middle (b): Top 100 eigenvalues of $K$ for different representation. Right (c): OOD detection for CIFAR100 vs. SVHN with the top $k$-th principal component.}
    \label{fig:three_figures}
\end{figure}

As seen in \Cref{tab:ood_detection_main_cifar100}, we observe a significant improvement in the OOD detection performance using \texttt{HypStructure} with the Mahalanobis score \cref{eq:mahalanobis}. After training a composite loss with CPCC till convergence, let us denote the matrix of the normalized in-distribution trained feature as $Z \in \mathbb{R}^{n \times d}$. Naturally, we inspect the eigenvalue properties of $\Sigma$ (i.e, $Z$), and observe that $K = ZZ^\top \in \mathbb{R}^{n \times n}$ exhibits a hierarchical block structure (\Cref{fig:poincare-matrix}) where the diagonal blocks have a significantly higher correlation than other off-diagonal values, leading us to the following theorem.  

\begin{restatable}[\textbf{Eigenspectrum of Structured Representation with Balanced Label Tree}]{theorem}{balanced}
    \label{thm:eigen} 
      \textit{Let $\mathcal{T}$ be a balanced tree with height $H$, such that each level has $C_h$ nodes, $h \in [0,H]$. Let us denote each entry of $K$ as $r^h$ where $h$ is the height of the lowest common ancestor of the row and the column sample. If $r^h \geq 0, \forall h$, then:  (i) For $h = 0$, we have $C_0 - C_1$ eigenvalues $\lambda_0 = 1 - r^1$. (ii) For $0 < h \leq H-1$, we have $C_{h} - C_{h+1}$ eigenvalues $\lambda_h = \lambda_{h-1} + (r^{h} - r^{h+1})\frac{C_0}{C_h}$. (iii) The last eigenvalue is $\lambda_{H} = \lambda_{H-1} + C_0 r^H$.}
\end{restatable}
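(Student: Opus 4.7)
I plan to prove this by exploiting the hierarchical block structure of $K$ to find a common eigenbasis tailored to the tree. Throughout I assume one representative per leaf, so $n = C_0$ and the diagonal $K_{ii} = r^0 = 1$ comes from feature normalization.

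The first step is to write $K$ as a telescoping sum of nested ``co-ancestor'' indicators. For each $h \in \{0,\dots,H\}$ let $A_h$ be the $n \times n$ matrix with $(A_h)_{ij} = 1$ iff samples $i,j$ lie in a common height-$h$ subtree and $0$ otherwise; on a balanced tree $A_h$ is block-diagonal with $C_h$ copies of the all-ones matrix $J_{m_h}$, where $m_h := C_0/C_h$. Since the LCA of $i,j$ is at height exactly $h$ iff $(A_h - A_{h-1})_{ij}=1$ (with $A_{-1}:=0$),
\[ K \;=\; \sum_{h=0}^{H} r^h\,(A_h - A_{h-1}). \]
Because $A_h$ is coarser than $A_{h-1}$, the family $\{A_h\}$ is simultaneously diagonalizable, so it suffices to exhibit a common eigenbasis.

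Next I construct this basis level by level. For $h \in \{0,\dots,H-1\}$ let $V_h \subset \mathbb{R}^n$ be the subspace of vectors that are constant on the leaves of each height-$h$ subtree and sum to zero within each height-$(h+1)$ parent subtree, and let $V_H := \mathrm{span}(\mathbf{1}_n)$. Parameterizing $V_h$ by a vector in $\mathbb{R}^{C_h}$ subject to one sum-to-zero constraint per height-$(h+1)$ node gives $\dim V_h = C_h - C_{h+1}$, and $\sum_{h=0}^{H-1}(C_h-C_{h+1})+1 = C_0$, so the $V_h$ together fill $\mathbb{R}^n$; mutual orthogonality follows directly from the defining constraints (for $h<h'$ the partial sums of $v \in V_h$ along each height-$h'$ subtree are themselves sums of vanishing height-$(h+1)$-subtree sums). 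On each $V_h$ with $h<H$, constancy on height-$k$ subtrees for $k \le h$ gives $A_k v = m_k v$, while the zero-sum property forces $A_k v = 0$ for $k \ge h+1$; substituting into the telescoping sum collapses the action of $K$ to a single scalar
\[ \lambda_h \;=\; \sum_{k=0}^{h} r^k\,(m_k - m_{k-1}) \;-\; r^{h+1} m_h, \qquad m_{-1}:=0, \]
independent of $v$. On $V_H$ the all-ones vector satisfies $A_k \mathbf{1}_n = m_k \mathbf{1}_n$ for every $k$, giving $\lambda_H = \sum_{k=0}^{H} r^k(m_k - m_{k-1})$.

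Finally I match this closed form against the three stated claims. With $r^0 = 1$ and $m_0 = 1$ the case $h = 0$ collapses to $\lambda_0 = 1 - r^1$; telescoping the difference $\lambda_h - \lambda_{h-1}$ for $1 \le h \le H-1$ cancels all but the $k=h$ summand and the $r^{h+1}m_h$ tail, giving $(r^h - r^{h+1})m_h = (r^h - r^{h+1}) C_0/C_h$; the same telescoping, combined with $m_H = C_0$, yields $\lambda_H - \lambda_{H-1} = r^H m_H = C_0\,r^H$; and the multiplicities read off from $\dim V_h$. I do not anticipate any deep technical obstacle; the principal hazard is bookkeeping---keeping the height index, the subtree size $m_h$, and the branching factor $C_{h-1}/C_h$ aligned with the theorem's convention that $\lambda_0$ corresponds to the first non-trivial splitting while $\lambda_H$ is the unique top eigenvalue.
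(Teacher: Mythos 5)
Your proof is correct, and it takes a genuinely different route from the paper's. The paper proves the theorem by structural induction on the height of the tree: the base case is the eigenvalue computation for a rank-one perturbation of the identity (their Lemma~\ref{lem:diag}), and the inductive step invokes a result of Cadima et al.\ (their Theorem~\ref{thm:block}) that peels off the within-group eigenvalues $1-r_{ii}$ and reduces the problem to a smaller $C_1\times C_1$ matrix $A$, which is then observed to again be block-structured so the hypothesis applies one level up. Your argument instead gives a closed-form spectral decomposition in one shot: you write $K$ as a telescoping combination $\sum_h r^h(A_h-A_{h-1})$ of nested co-ancestor indicators, note that the $A_h$ are simultaneously diagonalizable, construct an explicit orthogonal decomposition $\mathbb{R}^n=\bigoplus_{h=0}^H V_h$ of ``Haar-like'' subspaces (constant on height-$h$ blocks, mean-zero within each height-$(h+1)$ parent), and read off the eigenvalues directly from the action of each $A_k$ on each $V_h$. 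What this buys you: the proof is self-contained (no appeal to the external block-correlation eigenvalue theorem), it produces explicit eigenvectors rather than just eigenvalues, and the telescoping formula makes both the recursion $\lambda_h-\lambda_{h-1}=(r^h-r^{h+1})C_0/C_h$ and the multiplicities $\dim V_h = C_h-C_{h+1}$ transparent. What the paper's approach buys: because it is phrased through the Cadima-style reduction, the same machinery carries directly into their Theorem~\ref{thm:eigen-generic} for arbitrary (non-balanced) trees, where an explicit common eigenbasis is no longer available and one must instead bound the two eigenvalue groups via Weyl's inequality. One small point worth making explicit in a final write-up: your argument, like the paper's, never actually uses the hypothesis $r^h\geq 0$ to derive the eigenvalue formulas; that assumption only matters for interpreting the spectrum as having a monotone ``phase transition'' pattern.
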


We defer the eigenspectrum analysis for an arbitrary label tree to \Cref{app:sec_proof}. \Cref{thm:eigen} implies a \textbf{phase transition pattern} in the eigenspectrum. There always exists a significant gap in the eigenvalues representing each level of nodes in the hierarchy, and the eigenvalues corresponding to the coarsest level are the highest in magnitude. CIFAR100 has a balanced three-level label hierarchy where each coarse label has five fine labels as its children. In \Cref{fig:eigenspectrum}, we visualize the eigenspectrum of CIFAR100 for \texttt{HypStructure}, $\ell_2$-CPCC and the Flat objective. We observe a significant drop in the eigenvalues for features learnt from two hierarchical regularization approaches, $\ell_2$-CPCC and \texttt{HypStructure}, at approximately the 20\textsuperscript{th} largest eigenvector (which corresponds to the number of coarse classes), whereas these phase transitions do not appear for standard flat features. We also observe that the magnitude of coarse eigenvalues are approximately at the same scale. 

In summary, \Cref{thm:eigen} helps us to formally characterize the difference between flat and structured representations. CPCC style (\cref{eq:CPCC}) regularization methods can also be treated as dimensionality reduction techniques, where the structured features can be explained mostly by the coarser level features. For the OOD detection setting, this property differentiates the ID and OOD samples at the coarse level itself using a lower number of dimensions, and makes the OOD detection task easier. We visualize the OOD detection AUROC on SVHN (OOD) corresponding to the CIFAR100 (ID) features with the top$-k$ principal component for different methods, in \Cref{fig:CIFAR100SVHN}. We observe that for features learnt using \texttt{HypStructure}, accurately embedding the hierarchical information leads to the top $20$ eigenvectors (corresponding to the coarse classes) being the most informative for OOD detection. Recall that CIDER \citep{cider2022ming} is a state-of-the-art method proposed specifically for improving OOD detection by increasing inter-class dispersion and intra-class compactness. We note that CPCC style (\cref{eq:CPCC}) methods can be seen as a generalization of CIDER on higher-level concepts, where the same rules are applied for coarse labels as well, along with the fine classes. When the ID and OOD distributions are far enough, using coarse level feature might be sufficient for OOD detection.
\section{Related Work}
\label{sec:related}

\subparagraph{Learning with Label Hierarchy.} Several recent works have explored how to leverage hierarchical information between classes for various purposes such as relational consistency 
 \citep{DengHEX}, designing specific hierarchical classification architectures \citep{HDCNN,guoCNNRNN, noor2022capsule}, hierarchical conditioning of the logits \citep{davis2021hierarchical}, learning order preserving embeddings \citep{dhall2020hierarchical}, and improving classification accuracy \citep{iclr16CL, taherkhani2019weakly, lei2017weakly, semantivisualHierarchy, ZHENG201797, hoffmann2022ranking}. The proposed structural regularization framework in \citep{zeng2022learning} offers an interesting approach to embed a tree metric to learn structured representations through an \emph{explicit} objective term, although they rely on the $\ell_2$ distance, which is less than ideal for learning hierarchies. 
 
\subparagraph{Hyperbolic Geometry.}
\citep{nickel2017poincare} first proposed using the hyperbolic space to learn hierarchical representations of symbolic data such as text and graphs by embedding them into a \Poincare ball. Since then, the use of hyperbolic geometry has been explored in several different applications. \citet{khrulkov2020hyperbolic} proposed a hyperbolic image embedding for few-shot learning and person re-identification. \citep{ganea2018hyperbolic} proposed hyperbolic neural network layers, enabling the development of hybrid architectures such as hyperbolic convolutional neural networks \citep{shimizu2020hyperbolic}, graph convolutional networks \citep{dai2021hyperbolic}, hyperbolic variational autoencoders \citep{mathieu2019continuous} and hyperbolic attention networks \citep{gulcehre2018hyperbolic}. Additionally, these hybrid architectures have also been explored for different tasks such as deep metric learning \citep{ermolov2022hyperbolic, yan2021unsupervised}, object detection \citep{lang2022hyperbolic} and natural language processing \citep{dhingra2018embedding}. There have also been several investigations into the properties of hyperbolic spaces and models such as low distortion \citep{Sarkar_2012}, small generalization error \citep{suzuki2021generalization} and representation capacity \citep{mishne2023numerical}. However, none of these works have leveraged  hyperbolic geometry for \emph{explicitly} embedding a hierarchy in the representation space via structured regularization, and usually attempt to leverage the underlying hierarchy \emph{implicitly} using hyperbolic models. 
\section{Discussion and Future Work}
\label{sec:disc}

In this work, we introduce \texttt{HypStructure}, a simple-yet-effective structural regularization framework for incorporating the label hierarchy into the representation space using hyperbolic geometry. In particular, we demonstrate that accurately embedding the hierarchical relationships leads to empirical improvements in both classification as well as the OOD detection tasks, while also learning \emph{hierarchy-informed} features that are more interpretable and exhibit less distortion with respect to the label hierarchy. We are also the first to formally characterize properties of \emph{hierarchy-informed} features via an eigenvalue analysis, and also relate it to the OOD detection task, to the best of our knowledge. We acknowledge that our proposed method depends on the availability or construction of an external hierarchy for computing the HypCPCC objective. If the hierarchy is unavailable or contains noise, this could present challenges. Therefore, it is important to evaluate how injecting noisy hierarchies into CPCC-based methods impacts downstream tasks. While the current work uses the \Poincare ball model, exploring the representation trade-offs and empirical performances using other models of hyperbolic geometry in \texttt{HypStructure}, such as the Lorentz model \citep{nickel2018learning} is an interesting future direction. Further theoretical investigation into establishing the error bounds of CPCC style structured regularization objectives is of interest as well.
\section*{Acknowledgement}
SZ and HZ are partially supported by an NSF IIS grant No.\ 2416897. HZ would like to thank the support from a Google Research Scholar Award. MY was supported by MEXT KAKENHI Grant Number 24K03004. We would also like to thank the reviewers for their constructive feedback during the review process. The views and conclusions expressed in this paper are solely those of the authors and do not necessarily reflect the official policies or positions of the supporting companies and government agencies.

\clearpage
\bibliography{refs}
\bibliographystyle{abbrvnat}

\clearpage

\appendix
\section*{Appendix}
This appendix is segmented into the following key parts.

\begin{enumerate}

    \item \textbf{Section \ref{app:sec_proof}} continues the analysis of the eigenspectrum of CPCC-optimized representation matrix and generalizes it for an arbitrary label tree. 

    \item \textbf{Section \ref{app:secB}} discusses additional details about our proposed method \texttt{HypStructure}, its implementation and broader impact of our work. In particular, an overview of the method is first provided, and then we describe hyperparameter settings of our method and the main baselines, followed by extra dataset details and explanation of evaluation metrics.
    
    \item \textbf{Section \ref{app:secC}} reports ablation studies, detailed results on OOD detection and provides additional experimental results and visualizations not included in the main paper due to lack of space.

\end{enumerate}

\section{Details of Eigenspectrum Analysis}
\label{app:sec_proof}
In this section, we first introduce some notations, discuss the setup for our analysis, followed by preliminary lemmas, and then characterize the eigenspectrum of CPCC-based structured representations in \Cref{thm:eigen-generic} for an arbitrary label tree and \Cref{thm:eigen} for a balanced tree presented in the main body.

\paragraph{Proof Sketch} The proof of \Cref{thm:eigen-generic} and \Cref{thm:eigen} relies on the important observation of a \emph{hierarchical} block structure of the covariance matrix of CPCC-regularized features, as shown in \Cref{fig:poincare-matrix}, which will also be supported by \Cref{lem:cos} and \Cref{cor:poi}. \Cref{thm:block} \citep{Cadima_Calheiros_Preto_2010} and \Cref{lem:diag} characterize the eigenvalues of a block correlation matrix induced from a basic tree where the matrix only has three types of values: diagonal values of $1$s, one for within group entry, and another for across group entry. Larger within group entries lead to the larger eigenvalues. \Cref{thm:block} \citep{Cadima_Calheiros_Preto_2010} and \Cref{lem:diag} are then used as the base case for the induction proof of \Cref{thm:eigen}. For an arbitrary tree, in \Cref{thm:eigen-generic}, we use Weyl's Theorem \citep{Weyl_1912} to bound the gap between within group entries and across group entries that leads to the phase transition of eigenvalues.

\paragraph{Setup details} After training with the \texttt{HypStructure} loss till convergence, let us denote the feature matrix as $Z \in \mathbb{R}^{n \times d}$, where each row of $Z$ is a $d$-dimensional vector of an in distribution training sample, and the CPCC is maximized to $1$. We let $C_0 = n, C_1, C_2, \dots, C_H = 1$ be the number of class labels at height $h$ of the tree $\mathcal{T}$. Following the standard pre-processing steps in OOD detection \citep{2021ssd}, we assume that the features are standardized and normalized so that $\mathbb{E}[Z] = \mathbf{0}$ and $\|Z_{i}\|_2 = 1, \forall i$. Besides, we assume that in $\mathcal{T}$, the distance from root node to each leaf node is the same. Otherwise, following \citet{santurkar2021breeds}, we can insert dummy parents or children into the tree to make sure vertices at the same level have similar visual granularity. We then apply CPCC to each node in the extended tree, where each leaf node is one sample. We note that although this is slightly different from the implementation where the leaf nodes are fine class nodes, the distance for samples within fine classes are automatically minimized by classification loss like cross-entropy and supervised contrastive loss. 

Given these assumptions, we want to analyze the eigenspectrum of the inverse sample covariance matrix $\frac{1}{n-1}Z^\top Z$, which is the same as investigating the eigenvalues of $K = ZZ^\top$ where $Z$ is ordered by classes at all levels, i.e., samples having the same fine-grained labels and coarse labels should be placed together. This is because the matrix scaling and permutation will not change the order of singular values. 

Since CPCC (\cref{eq:CPCC}) is a correlation coefficient, when it is maximized, the $n$ by $n$ pairwise Poincaré distance matrix is perfectly correlated with the ground truth pairwise \emph{tree-metric} matrix, where each entry is the tree distance between two samples on the tree, no matter we apply CPCC to leaves or all vertices. This implies that in the similarity matrix $K$, the relative order of entries are the opposite of tree matrix, and it is trivial to show it as follows

\begin{lemma}
    The relative order of entries in $K$ will be the reverse of the order in tree distance.
    \label{lem:cos}
\end{lemma}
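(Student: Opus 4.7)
The plan is to translate the hypothesis ``CPCC is maximized to $1$'' into a monotonicity chain that directly yields the stated reverse-ordering claim for $K$. I would proceed in two steps.

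First, I would invoke the fact that CPCC is Pearson's correlation coefficient between the vectors of pairwise tree distances $(d_\mathcal{T}(v_i,v_j))_{i<j}$ and pairwise feature-space distances $(\rho(v_i,v_j))_{i<j}$, as defined in \Cref{eq:CPCC}. The equality case of Cauchy--Schwarz shows that a Pearson correlation equal to $+1$ forces a perfect positive affine relationship: there exist constants $a>0$ and $b$ such that $\rho(v_i,v_j)=a\cdot d_\mathcal{T}(v_i,v_j)+b$ for every pair $(i,j)$. In particular the ordering of the $\rho$-values coincides exactly with the ordering of the tree distances.

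Second, I would link the feature-space distance $\rho=d_{\sB_c}$ used in HypCPCC to the entries $K_{ij}=\langle Z_i,Z_j\rangle$. Because $\|Z_i\|_2=1$, the exponential map $\exp_{\mathbf{0}}^{c}(Z_i) = (\tanh\sqrt{c}/\sqrt{c})\,Z_i$ rescales every $Z_i$ radially by the same factor, so all embedded samples lie on a single Euclidean sphere of fixed radius inside $\sB_c^d$. Restricted to this sphere, rotational symmetry of the \Poincare metric around the origin forces $d_{\sB_c}(\exp_{\mathbf{0}}^{c}(Z_i),\exp_{\mathbf{0}}^{c}(Z_j))$ to depend only on the inner product $\langle Z_i,Z_j\rangle$; expanding the \Mobius{}-subtraction norm in \Cref{eq:poincaredistance} explicitly then confirms that it is a strictly decreasing function of that inner product. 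Equivalently, using $\|Z_i-Z_j\|_2^2=2-2K_{ij}$, the \Poincare distance is a strictly increasing function of the Euclidean distance $\|Z_i-Z_j\|_2$ on the sphere.

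Chaining the two monotonic relationships, any pair of pairs with $d_\mathcal{T}(v_i,v_j) < d_\mathcal{T}(v_k,v_l)$ satisfies $\rho(v_i,v_j) < \rho(v_k,v_l)$ by step one and hence $K_{ij} > K_{kl}$ by step two, which is precisely the claim. The only non-routine piece is the explicit computation that shows the \Poincare distance on a sphere of fixed Euclidean norm is strictly monotone in the inner product; this is a short direct manipulation of the formula in \Cref{eq:poincaredistance} using $\|\vz_1\|=\|\vz_2\|$, and I do not anticipate any conceptual difficulty beyond that calculation.
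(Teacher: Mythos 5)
Your proof is correct, and the overall strategy is the same as the paper's: chain a monotonicity from maximal CPCC to the feature-space distance, and another from the feature-space distance to the inner products that populate $K$. The paper, however, splits this into two pieces --- Lemma~\ref{lem:cos} proves the claim only for the \emph{Euclidean} dataset distance (using $\|u-v\|^2 = 2 - 2\langle u,v\rangle$ on the unit sphere), and then Corollary~\ref{cor:poi} separately shows that the \Poincare distance after the $\text{clip}^c$ operation is monotone in the Euclidean distance, so the same ordering carries over. You instead treat the \Poincare case head-on through the exponential-map pathway, observing that with $\|Z_i\|_2=1$ all images under $\exp_{\mathbf 0}^c$ lie on a common sphere, and invoking rotational symmetry to reduce $d_{\sB_c}$ to a decreasing function of $\langle Z_i, Z_j\rangle$. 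That geometric shortcut lets you avoid the paper's explicit \Mobius-norm expansion and small-$\epsilon$ approximation, and your appeal to the Cauchy--Schwarz equality case to get the affine relation $\rho = a\,d_{\mathcal T} + b$ with $a>0$ is more precise than the paper's informal ``scalar factor'' phrasing. The one caveat is that your argument covers the $\exp_{\mathbf 0}^c$ variant of \texttt{HypStructure} but not the $\text{clip}^c$ variant that the paper's Corollary~\ref{cor:poi} targets; since clipped unit vectors also land on a common sphere, the same rotational-symmetry argument applies verbatim, so this is a gap in coverage rather than in correctness, and it is easy to close.
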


\begin{proof}
    When $\norm{u} = \norm{v} = 1$, $\ell_2(u,v)^2 = \norm{u-v}_2^2 = \norm{u}^2 + \norm{v}^2 - 2\langle u,v\rangle = 2 - 2\langle u,v\rangle$. Now considering the CPCC computation, if the CPCC is maximized, the pairwise Euclidean matrix is of the scalar factor of the tree distance matrix. Since each entry of $K$ is the dot product of two samples, the relative order in $K$ is the opposite. 
\end{proof}

\begin{corollary}
    If we use the Poincaré distance (\cref{eq:poincaredistance} in CPCC and let the curvature constant $c = 1$, the statement of cosine distance in Lemma \ref{lem:cos} still holds.
    \label{cor:poi}
\end{corollary}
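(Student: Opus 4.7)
The plan is to mirror Lemma~\ref{lem:cos}, replacing the Euclidean identity $\|\vu-\vv\|_2^2 = 2 - 2\langle \vu,\vv\rangle$ with an analogous monotone relation between Poincar\'e distance and Euclidean inner product that holds on any fixed-radius sphere inside $\sB_c^d$. Under the paper's normalization convention, each $Z_i$ is mapped into the ball (via $\exp_\mathbf{0}^c$ or $\mathrm{clip}^c$) at a common Euclidean radius $r$ with $c r^2 < 1$. Since the rescaling from $Z_i$ to its ball image is by a common positive scalar, the entrywise ordering of $K = Z Z^\top$ coincides with that of the Euclidean inner products of the mapped features, so it suffices to establish the claim for the latter.

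The first substantive step is to show that for any $\vu,\vv\in\sB_c^d$ with a common Euclidean norm $r$, the Poincar\'e distance $d_{\sB_c}(\vu,\vv)$ is a strictly monotonically decreasing function of $\langle \vu,\vv\rangle$. The cleanest route is to use the equivalent closed form
\[
d_{\sB_c}(\vu,\vv) = \frac{1}{\sqrt{c}}\cosh^{-1}\!\left(1 + \frac{2c\,\|\vu-\vv\|_2^{2}}{(1-c\|\vu\|_2^{2})(1-c\|\vv\|_2^{2})}\right),
\]
plug in $\|\vu\|_2 = \|\vv\|_2 = r$, and combine $\|\vu-\vv\|_2^{2} = 2r^{2} - 2\langle \vu,\vv\rangle$ with strict monotonicity of $\cosh^{-1}$ on $[1,\infty)$ to read off the desired monotonicity. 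Alternatively, one can plug $\|\vu\|=\|\vv\|=r$ directly into (\ref{eq:poincaredistance}) and verify the same property, though the computation is messier; a rotational-invariance argument (rotate so that $\vu,\vv$ lie in a common 2-plane and parametrize by the angle $\theta$ between them) gives yet another route.

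The second step is to use the hypothesis that $\cpcc$ is maximized, which forces the pairwise Poincar\'e-distance matrix to be a positively-scaled affine transform of the tree-metric matrix, so Poincar\'e distances inherit the entrywise ordering of tree distances. Composing this with the strictly decreasing map $\langle \vu,\vv\rangle \mapsto d_{\sB_c}(\vu,\vv)$ from the previous step shows that the entries of $K$ are ordered exactly in reverse to the tree distances; specializing to $c=1$ gives the corollary. The only mild obstacle is the distance-monotonicity computation in the first step; it is routine, and the overall argument closely parallels Lemma~\ref{lem:cos}.
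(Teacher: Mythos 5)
Your proposal is correct and follows the same overall strategy as the paper's proof: restrict attention to points of a common Euclidean norm inside the ball, show that the Poincar\'e distance there is a monotone function of the Euclidean distance (equivalently, antitone in the inner product), and then compose with the entrywise ordering forced by CPCC maximization exactly as in Lemma~\ref{lem:cos}. The one substantive difference is how the monotonicity is established. The paper works with a logarithmic form of the distance for clipped vectors of norm $1-\epsilon$ and reduces it to $2\ln\left(2\norm{u-v}/\xi\right)$ only after discarding an $O(\xi^2)$ term, so its monotonicity statement is approximate and tied to the clipping regime. Your route through the identity $d_{\sB_c}(\vu,\vv)=\tfrac{1}{\sqrt{c}}\cosh^{-1}\!\left(1+\tfrac{2c\norm{\vu-\vv}^2}{(1-c\norm{\vu}^2)(1-c\norm{\vv}^2)}\right)$ yields the same conclusion exactly, for any common radius $r$ with $cr^2<1$, and therefore covers the $\exp_\mathbf{0}^c$ embedding and the $\mathrm{clip}^c$ embedding uniformly rather than only the clipped case treated in the paper. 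You also make explicit a point the paper leaves implicit, namely that the common positive rescaling from $Z_i$ to its image in the ball preserves the entrywise order of $K=ZZ^\top$, so the ordering argument can be carried out on the mapped features without loss.
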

\begin{proof}
Since the Poincaré distance (\cref{eq:poincaredistance}) is only defined for vectors with magnitude less than $1$, let us consider the case where before the clipping operation, both $u$ and 
$v$ are outside the unit ball. After applying $\text{clip}^1$, $\norm{u} = \norm{v} = 1 - \epsilon$, where $\epsilon$ is a small constant ($10^{-5}$). Then $\norm{u}^2 = (1 - \epsilon)^2 = 1 - 2\epsilon + \epsilon^2$. Define $2\epsilon - \epsilon^2$ as $\xi$, making $\norm{u}^2 := 1 - \xi$ where $\xi$ is also a small constant such that $O(\xi^2)$ is negligible.

\begin{align*}
    \textnormal{Poincaré}(u,v) &= 2\ln\frac{\norm{u - v} + \sqrt{\norm{u}^2\norm{v}^2 - 2 u \cdot v + 1}}{\sqrt{(1 - \norm{u}^2)(1 - \norm{v}^2)}} \\
    &= 2\ln \frac{\norm{u - v} + \sqrt{2 - 2u\cdot v - 2\xi + \xi^2}}{\xi} \\
    &\approx 2\ln \frac{\norm{u - v} + \sqrt{2 - 2u\cdot v - 2\xi}}{\xi} \\
    &= 2\ln \frac{\norm{u - v} + \sqrt{\norm{u}^2 + \norm{v}^2 - 2u\cdot v}}{\xi} \\
    &= 2\ln \frac{2\norm{u - v}}{\xi}
\end{align*}
We can see that the Poincaré distance monotonically increases with Euclidean distances $\norm{u-v}$. This property ensures the relative order of any two entries for Euclidean CPCC and Poincare CPCC matrices in $K$ to be the same. Then, we can argue about the structure of $K$, either Euclidean or Poincare, to have the hierarchical diagonalized structure as in \Cref{fig:poincare-matrix}. So any statement applied for a Poincaré version of CPCC will also hold for the Euclidean CPCC counterpart.
\end{proof}

For each level of the tree, due to the optimization of CPCC loss, the corresponding off diagonal entries of $K$, which represent the intra-level-class similarities, are much smaller than inter-level-class values. We thus have a symmetric similarity matrix that takes on the following structure, where the red regions are greater than orange regions, which are further greater than the blue regions.

\begin{equation*}
    K = \left[\begin{array}{cc:cc:cc:cc:c}
\textcolor{red}{1} & \textcolor{red}{r_{11}^1}  & \textcolor{orange}{r_{12}^2} & \textcolor{orange}{r_{12}^2} & \textcolor{blue}{r_{13}^3} & \textcolor{blue}{r_{13}^3} & \textcolor{blue}{r_{14}^3} & \textcolor{blue}{r_{14}^3} & \dots \\
\textcolor{red}{r_{11}^1} & \textcolor{red}{1}  & \textcolor{orange}{r_{12}^2} & \textcolor{orange}{r_{12}^2} & \textcolor{blue}{r_{13}^3} & \textcolor{blue}{r_{13}^3} & \textcolor{blue}{r_{14}^3} & \textcolor{blue}{r_{14}^3} & \dots \\ \hdashline
r_{12}^2 & r_{12}^2 & \textcolor{red}{1}  & \textcolor{red}{r_{22}^1} & \textcolor{blue}{r_{23}^3} & \textcolor{blue}{r_{23}^3} & \textcolor{blue}{r_{24}^3} & \textcolor{blue}{r_{24}^3} & \dots \\
r_{12}^2 & r_{12}^2 & \textcolor{red}{r_{22}^1}  & \textcolor{red}{1} & \textcolor{blue}{r_{23}^3} & \textcolor{blue}{r_{23}^3} & \textcolor{blue}{r_{24}^3} & \textcolor{blue}{r_{24}^3} & \dots \\ \hdashline
r_{13}^3 & r_{13}^3 & r_{23}^3  & r_{23}^3 & \textcolor{red}{1} & \textcolor{red}{r_{33}^1} & \textcolor{orange}{r_{34}^2} & \textcolor{orange}{r_{34}^2} & \dots \\

r_{13}^3 & r_{13}^3 &  r_{23}^3 & r_{23}^3 & \textcolor{red}{r_{33}^1} & \textcolor{red}{1} & \textcolor{orange}{r_{34}^2}  & \textcolor{orange}{r_{34}^2}  & \dots \\ \hdashline
r_{14}^3 & r_{14}^3 &  r_{24}^3  & r_{24}^3 & r_{34}^2 & r_{34}^2 & \textcolor{red}{1} & \textcolor{red}{r_{44}^1} & \dots \\
r_{14}^3 & r_{14}^3 &  r_{24}^3 & r_{24}^3 & r_{34}^2 & r_{34}^2 & \textcolor{red}{r_{44}^1}  & \textcolor{red}{1} & \dots \\ \hdashline
\dots & \dots & \dots & \dots & \dots & \dots & \dots & \dots & \dots
\end{array}\right]
\end{equation*}

Each non-diagonal entry is called $r_{ij}^h$ where $i,j$ are the index of the diagonal block, or the \emph{finest} label id of one sample, and $h$ is the height of the lowest common ancestor of the two samples in the row and the column. Since every two leaves sharing the lowest common ancestor of the same height have the same tree distance, each entry of $K$ with the same superscript will be the same so we can drop the $i,j$ subscript in the notation. The size of each block is defined by the number of samples within one label. Then, the shown submatrix of $K$ corresponds to the following tree in \Cref{fig:subtree}. Next, we present several useful lemmas and theorems.

\begin{figure}[ht]
    \centering
    \begin{tikzpicture}[
    every node/.style={circle, draw, inner sep=2pt, minimum size=1.5em},
    level 1/.style={sibling distance=60mm, level distance=10mm},
    level 2/.style={sibling distance=30mm, level distance=10mm},
    level 3/.style={sibling distance=10mm, level distance=10mm},
    level 4/.style={sibling distance=5mm, level distance=10mm},
    edge from parent/.style={draw, black, thick}
]
\node [draw=blue] {} 
    child {node [draw=orange] {} 
        child {node [draw=red] {} 
            child {node [draw=black] {}} 
            child {node [draw=black] {}} 
        }
        child {node [draw=red] {} 
            child {node [draw=black] {}} 
            child {node [draw=black] {}} 
        }
    }
    child {node [draw=orange] {} 
        child {node [draw=red] {} 
            child {node [draw=black] {}} 
            child {node [draw=black] {}} 
        }
        child {node [draw=red] {} 
            child {node [draw=black] {}} 
            child {node [draw=black] {}} 
        }
    };
\end{tikzpicture}
\caption{Subtree corresponds to the shown submatrix of $K$.}
\label{fig:subtree}
\end{figure}
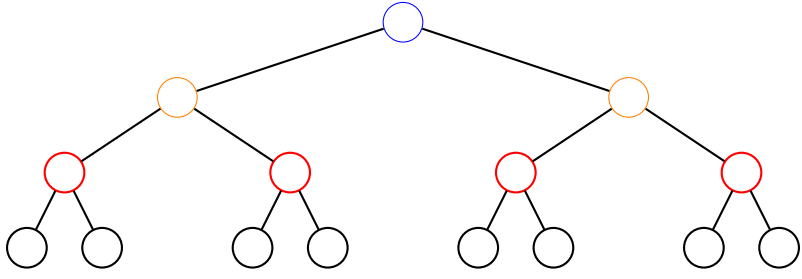

\begin{theorem}[\citep{Cadima_Calheiros_Preto_2010}]
\label{thm:block}
    Let $R$ be a $p \times p$ full-rank correlation matrix that has a $k$-group block structure, with groups of size $p_i$ ($i = 1 : k$, $\sum\limits_{i=1}^k p_i = p$). Let $r_{ii}$ be the correlation for any pair of different variables within group $i$. Let $r_{ij}$ be the common correlation between any variable in group $i$ and $j$. Denote the mean of the $i$-th diagonal block of $R$ by $\overline{R_i} = (1/p_i)(1 + (p_i - 1)r_{ii})$. Then:
    \begin{enumerate}
        \item $R$ has $p_i - 1$ eigenvalues $1 - r_{ii}$ $(i = 1 : k)$.
        \item The rest of the eigenvalues are those from $k \times k$ symmetric matrix $A$ whose diagonal elements are $a_{ii} = p_i\overline{R_i}$ and whose off-diagonal elements are $a_{ij} = \sqrt{p_i\cdot p_j}r_{ij}$.
    \end{enumerate}
\end{theorem}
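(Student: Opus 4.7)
The plan is to exploit the block structure of $R$ by finding two $R$-invariant orthogonal subspaces: a large subspace on which $R$ acts diagonally (producing the repeated eigenvalues $1 - r_{ii}$), and its $k$-dimensional complement on which $R$ acts as the matrix $A$.

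First I would write the blocks explicitly: using $\mathbf{1}_i \in \mathbb{R}^{p_i}$ for the all-ones vector in group $i$, each diagonal block is $R_{ii} = (1-r_{ii})I_{p_i} + r_{ii}\mathbf{1}_i \mathbf{1}_i^\top$, and each off-diagonal block is $R_{ij} = r_{ij}\mathbf{1}_i \mathbf{1}_j^\top$. Both of these are of rank-one plus scalar identity type, which is exactly what makes the $\mathbf{1}_i$ direction special within each block.

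Next I would define $V_0 = \{v = (v_1,\ldots,v_k) : v_i \in \mathbb{R}^{p_i},\ \mathbf{1}_i^\top v_i = 0\ \text{for all } i\}$ and verify $R$-invariance: for $v \in V_0$, the $i$-th block of $Rv$ is $(1-r_{ii})v_i + r_{ii}\mathbf{1}_i(\mathbf{1}_i^\top v_i) + \sum_{j\ne i}r_{ij}\mathbf{1}_i(\mathbf{1}_j^\top v_j) = (1-r_{ii})v_i$, since every $\mathbf{1}_j^\top v_j$ vanishes. Hence $R|_{V_0}$ acts as the direct sum of scalar operators $(1-r_{ii})I$ on each block, and since $\dim V_0 = \sum_i(p_i - 1) = p - k$, this yields $p_i - 1$ eigenvalues equal to $1 - r_{ii}$ for every group $i$, proving claim~1.

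Then I would take the orthogonal complement $V_1 = V_0^\perp$, which is exactly the $k$-dimensional subspace spanned by the block-indicator vectors $u_i$ (the vector that equals $\mathbf{1}_i$ on block $i$ and zero elsewhere). To get a symmetric representation, I would use the orthonormal basis $\tilde u_i = u_i/\sqrt{p_i}$. A direct computation gives the $i$-th block of $Ru_i$ as $(1 + (p_i-1)r_{ii})\mathbf{1}_i = p_i\overline{R_i}\mathbf{1}_i$ and, for $j\ne i$, the $j$-th block as $p_i r_{ij}\mathbf{1}_j$. Converting to the orthonormal basis, $R\tilde u_i = \sqrt{p_i}\overline{R_i}\,\mathbf{1}_i + \sum_{j\ne i}\sqrt{p_i}\,r_{ij}\mathbf{1}_j$ on blocks, whose coefficient along $\tilde u_j$ is $\sqrt{p_i p_j}\,r_{ij}$, and whose coefficient along $\tilde u_i$ is $p_i\overline{R_i}$. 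These are exactly the entries of $A$, which is therefore the matrix of $R|_{V_1}$ in an orthonormal basis; its eigenvalues give the remaining $k$ eigenvalues of $R$, establishing claim~2.

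The hard part is essentially bookkeeping: picking the right orthonormal basis of $V_1$ so that the resulting $k\times k$ matrix comes out symmetric (using $u_i$ directly yields a non-symmetric but similar matrix), and checking that $V_0$ and $V_1$ are indeed complementary invariant subspaces so that the two eigenvalue lists together exhaust the spectrum of $R$. No further assumptions beyond the stated block correlation structure are needed; the full-rank hypothesis is used only implicitly to ensure all listed quantities are genuine eigenvalues without degenerate cancellation.
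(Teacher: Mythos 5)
Your proof is correct, and it is essentially the same argument the paper relies on: your $V_0$ is the orthogonal complement of the column space of the group-indicator matrix $G$ used in the paper's sketch, and your orthonormal basis $\tilde u_i = u_i/\sqrt{p_i}$ reproduces exactly the paper's $A = (G^\top G)^{-1/2}(G^\top R G)(G^\top G)^{-1/2}$. All block computations check out (the only cosmetic quibble is that the full-rank hypothesis is not actually needed for the spectral decomposition itself, only to rule out zero eigenvalues), so this is a faithful, complete proof of the cited result.
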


\begin{lemma}
\label{lem:diag}
    Given $d$ by $d$ matrix $M$ where $M_{ii} = 1,$ $\forall i \in [d],$ and $M_{ij} = p$ otherwise, i.e., 
    \begin{equation*}
        M = \begin{bmatrix} 
            1 & p & \dots & \dots & p \\
            p & \ddots &  &  & \vdots\\
            \vdots & & 1  &  & \vdots\\
            \vdots & &   & \ddots & p\\
            p &   \dots     & \dots  & p & 1
            \end{bmatrix}
    \end{equation*}
    it has eigenvalues $\lambda_1 = 1 + p(d-1)$ and $\lambda_2 = \dots = \lambda_d = 1 - p$.
\end{lemma}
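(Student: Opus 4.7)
The plan is to decompose $M$ as a rank-one perturbation of a scalar multiple of the identity, which makes the eigenstructure transparent. Specifically, I would write
\[
M = (1-p)\,I_d + p\,J_d,
\]
where $I_d$ is the $d\times d$ identity and $J_d = \mathbf{1}_d\mathbf{1}_d^\top$ is the all-ones matrix. One checks this entrywise: the diagonal is $(1-p) + p = 1$ and each off-diagonal entry is $0 + p = p$, as required.

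Next, I would invoke the well-known eigendecomposition of $J_d$: it is symmetric of rank one, with eigenvalue $d$ on the one-dimensional subspace spanned by $\mathbf{1}_d$ and eigenvalue $0$ on the $(d-1)$-dimensional orthogonal complement $\{v \in \mathbb{R}^d : \mathbf{1}_d^\top v = 0\}$. Since $M$ is a real polynomial in $J_d$ (namely $(1-p) + p\,J_d$), $M$ shares the same eigenvectors as $J_d$, and its eigenvalues are obtained by applying the scalar map $x \mapsto (1-p) + p\,x$ to the eigenvalues of $J_d$.

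Substituting $x = d$ gives the top eigenvalue
\[
\lambda_1 = (1-p) + p\,d = 1 + p(d-1),
\]
with eigenvector $\mathbf{1}_d$, and substituting $x = 0$ gives
\[
\lambda_2 = \cdots = \lambda_d = 1 - p,
\]
with multiplicity $d-1$ and eigenspace equal to $\mathbf{1}_d^\perp$. A quick sanity check of the trace, $\sum_i \lambda_i = (1 + p(d-1)) + (d-1)(1-p) = d$, confirms the computation.

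There is no real obstacle here; the only subtlety worth flagging is that the statement implicitly allows arbitrary $p$ (not just $p \in [-1/(d-1), 1]$ where $M$ would be a genuine correlation matrix), and the decomposition above is valid for any $p \in \mathbb{R}$, so the conclusion holds unconditionally. This lemma will then serve, together with Theorem~\ref{thm:block}, as the base case for the induction establishing the hierarchical phase-transition pattern in Theorem~\ref{thm:eigen}.
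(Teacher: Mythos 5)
Your proof is correct, but it takes a different route from the paper's. The paper computes the characteristic polynomial directly: it forms $M - \lambda I$, adds rows $2$ through $d$ to the first row so that every entry of the new first row equals $1 - \lambda + (d-1)p$, divides out that common factor, and then subtracts $p$ times the first row from each subsequent row to reach an upper-triangular matrix, reading off $\det(M - \lambda I) = (1 - \lambda + (d-1)p)(1 - \lambda - p)^{d-1}$. You instead decompose $M = (1-p)I_d + p\,\mathbf{1}_d\mathbf{1}_d^\top$ and transport the known spectrum of the rank-one all-ones matrix through the affine map $x \mapsto (1-p) + px$. The two arguments are of comparable length, but yours has a couple of advantages: it produces the eigenvectors (the all-ones vector and its orthogonal complement) for free, and it sidesteps the implicit genericity assumption in the paper's row-reduction step of dividing by $1 - \lambda + (d-1)p$, which vanishes at one of the eigenvalues. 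The paper's determinant computation is more elementary in that it requires no prior knowledge of the spectrum of $J_d$, but your version makes the spectral geometry more transparent, which is arguably more useful given that this lemma feeds into the induction for Theorem~\ref{thm:eigen} where the structure of eigenvectors (orthogonal complement of the group-indicator columns) plays a role.
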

\begin{proof}
    Using the definition of eigenvalues, we want to compute the determinant of matrix 
    \begin{equation*}
        \begin{bmatrix} 
            1 - \lambda & p & \dots & \dots & p \\
            p & \ddots &  &  & \vdots\\
            \vdots & & 1 - \lambda  &  & \vdots\\
            \vdots & &   & \ddots & p\\
            p &   \dots     & \dots  & p & 1 - \lambda
            \end{bmatrix}
    \end{equation*}
    Adding the second till the last row to the first row, we have
    \begin{equation*}
    \begin{bmatrix} 
            1 - \lambda + (d-1)p & 1 - \lambda + (d-1)p & \dots & \dots & 1 - \lambda + (d-1)p \\
            p & \ddots &  &  & \vdots\\
            \vdots & & 1 - \lambda  &  & \vdots\\
            \vdots & &   & \ddots & p\\
            p &   \dots     & \dots  & p & 1 - \lambda
    \end{bmatrix}
    \end{equation*}
    Dividing the first row by $1 - \lambda + (d-1)p$, we now have 
    \begin{equation*}
    \begin{bmatrix} 
            1 & 1 & \dots & \dots & 1 \\
            p & \ddots &  &  & \vdots\\
            \vdots & & 1 - \lambda  &  & \vdots\\
            \vdots & &   & \ddots & p\\
            p &   \dots     & \dots  & p & 1 - \lambda
    \end{bmatrix}
    \end{equation*}
    Subtracting the second till the last row by $p$ times the first row results in an upper triangular matrix
    \begin{equation*}
    \begin{bmatrix} 
            1 & 1 & \dots & \dots & 1 \\
            0 & 1 - \lambda - p &  &  & 0\\
            \vdots & & 1 - \lambda - p  &  & \vdots\\
            \vdots & &   & \ddots & 0\\
            0 &   \dots     & \dots  & 0 & 1 - \lambda - p
    \end{bmatrix}
    \end{equation*}
    Thus, $\det(M - \lambda I) = (1 - \lambda + (d-1)p)(1 - \lambda - p)^{d-1}$.
\end{proof}

Notice that Lemma \ref{lem:diag} is a special case of Theorem \ref{thm:block} where the label tree is a two level basic tree with the root node and $d$ leaves in the second label all being the direct children of the root node. Now we can leverage Theorem \ref{thm:block} and Lemma \ref{lem:diag} to investigate the eigenspectrum of $K$ by proving the following theorem:

\begin{restatable}[\textbf{Eigenspectrum of CPCC-based Structured Representation}]{theorem}{imbalanced}
    \label{thm:eigen-generic} 
    If $\mathcal{T}$ is a tree whose root node has height $H$ where each level has $C_h$ nodes, $h \in [0,H]$. $K = ZZ^\top$ is a block structured correlation matrix as a result of CPCC optimization, where each off-diagonal entry $\in [-1,1]$ can be written as $r^h$ and $h$ is the height of the lowest common ancestor of the $i$-th row and the 00$j$-th column sample. Let $\Delta = r^1 - r^h$, $p_i, i \in [C_h]$ be the number of children for nodes at height $h$, and $p_\text{max}$ be the maximum. For any $h \geq 1$, if $r^h \geq M \geq 0$, $r^{h+1} \leq m$, then 
    \begin{enumerate}[(i)]
        \item We have $C_0 - C_h$ eigenvalues, that come from the eigenvalues of a $C_0 \times C_0$ matrix sharing the same $C_h$ of $p_i \times p_i$ diagonal blocks from $K$ subtracting $r^h$, and off diagonal values are all zero.
        \item The rest of $C_h$ eigenvalues come from a $C_h \times C_h$ matrix, whose diagonal entries are the mean of each $p_i \times p_i$ diagonal block from $K$, and the off diagonal entries are $\sqrt{p_ip_j}r_{ij}$ where $r_{ij}$ is the correlation between node $i$ and node $j$ of height $h$.
        \item If $m \leq \frac{M - 2\Delta(p_\text{max} - 1)}{p_\text{max}(C_h-1)}$, $C_0 - C_h$ eigenvalues are all smaller than $C_h$ eigenvalues.
    \end{enumerate}
\end{restatable}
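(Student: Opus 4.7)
The plan is to orthogonally decompose $\mathbb{R}^{C_0}$ into the $C_h$-dimensional subspace $V_1 = \mathrm{span}\{\mathbf{1}_{g_i}\}_{i=1}^{C_h}$ spanned by the indicators of the $C_h$ height-$h$ groups, and its $(C_0 - C_h)$-dimensional orthogonal complement $V_2$ consisting of vectors that are zero-sum within each group. This decomposition matches the two eigenvalue counts in parts (i) and (ii) exactly.

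For parts (i) and (ii), I would first analyze an idealized matrix $\hat{K}$ obtained from $K$ by replacing every between-group entry (pairs with LCA at height $>h$) by the common constant $r^h$. A one-line check using $\sum_{b \notin g(a)} x_b = -\sum_{b \in g(a)} x_b = 0$ for $x \in V_2$ shows that both $V_1$ and $V_2$ are invariant under $\hat{K}$. The restriction of $\hat{K}$ to $V_2$ is exactly the block-diagonal matrix whose $i$-th block is $K^{(i)} - r^h \mathbf{1}\mathbf{1}^\top$, giving part (i); in the orthonormal basis $\{\mathbf{1}_{g_i}/\sqrt{p_i}\}$, the restriction to $V_1$ produces the $C_h \times C_h$ matrix of part (ii), in direct analogy to the decomposition underlying \Cref{thm:block}.

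For part (iii), I would bound the two families of eigenvalues separately. Each block of the $V_2$-matrix has diagonal $1 - r^h \leq 1 - M$ and within-block off-diagonals lying in $[0, \Delta]$, so a Gershgorin estimate caps every $V_2$-eigenvalue by $(1 - M) + (p_\text{max} - 1)\Delta$. A matching Gershgorin lower bound on the $C_h \times C_h$ matrix of (ii), whose off-diagonals are bounded using $r^{h+1} \leq m$ while its diagonals scale with $p_i\overline{K^{(i)}}$, then gives a lower bound on the smallest $V_1$-eigenvalue; after rearranging the strict inequality between these two bounds, one should recover exactly the hypothesis $m \leq (M - 2\Delta(p_\text{max} - 1))/(p_\text{max}(C_h - 1))$.

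Finally, I would invoke Weyl's inequality to transfer the separation from $\hat{K}$ back to $K$. The main obstacle will be controlling the perturbation $E = K - \hat{K}$, which is supported only on between-group positions: the naive operator-norm estimate is far too loose, so the argument must exploit the sparsity pattern (at most $p_\text{max}(C_h - 1)$ nonzero entries per row) together with the bound $r^{h+1} \leq m$ on the magnitudes of the entries being replaced. I expect this row-sum bound to yield a perturbation whose scale is precisely the denominator $p_\text{max}(C_h - 1)$ appearing in the hypothesis on $m$, so that the phase transition established for $\hat{K}$ in (iii) carries over verbatim to $K$.
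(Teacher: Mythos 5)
Your decomposition of $\mathbb{R}^{C_0}$ into the span $V_1$ of group indicators and its orthogonal complement $V_2$ is exactly the one the paper uses (there it is phrased via the $n\times C_h$ indicator matrix $G$ and the orthogonal complement of its column space), and your Gershgorin bounds for part (iii) are in the same spirit as the paper's row-sum bound for the $V_2$-eigenvalues plus a Weyl/Lemma A.2 argument for the $V_1$-eigenvalues. (In fact applying Gershgorin directly to the $C_h\times C_h$ matrix $A$ gives a slightly sharper threshold with $\Delta(p_\text{max}-1)$ rather than $2\Delta(p_\text{max}-1)$ in the numerator, which still implies the stated conclusion under the theorem's stronger hypothesis.)

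The real problem is the introduction of $\hat{K}$ and the final Weyl transfer step, which rest on a misconception and should be removed. First, the $V_1/V_2$ decomposition already works exactly for $K$: for $x\in V_2$ and $a$ in group $g_i$, the cross-group contribution is $\sum_{j\neq i} r_{ij}\sum_{b\in g_j}x_b = \sum_{j\neq i} r_{ij}\cdot 0 = 0$. The crucial property here is that each cross-block of $K$ is \emph{constant} (the LCA of $a\in g_i$ and $b\in g_j$ depends only on $i,j$), not that all cross-blocks are \emph{equal} to $r^h$. So replacing them by $r^h$ buys nothing, and there is no perturbation $E$ to control afterwards. Second, and more importantly, the claim that the restriction of $\hat{K}$ to $V_1$ in the basis $\{\mathbf{1}_{g_i}/\sqrt{p_i}\}$ "produces the $C_h\times C_h$ matrix of part (ii)" is false: $\hat{K}$ would give a matrix with all off-diagonals equal to $\sqrt{p_ip_j}\,r^h$, whereas part (ii) uses the actual cross-group correlations $\sqrt{p_ip_j}\,r_{ij}$. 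Only the $V_2$-restrictions of $K$ and $\hat{K}$ agree; the $V_1$-restrictions do not. So parts (i) and (ii) must be (and in the paper are) stated and proved directly for $K$ via $A = (G^\top G)^{-1/2}(G^\top K G)(G^\top G)^{-1/2}$, and the worry about row-sum control of $E$ never arises — Weyl enters the paper's argument only inside the lower bound for the $C_h$ eigenvalues of $A$, not as a bridge between $\hat{K}$ and $K$.
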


\begin{proof}
    Part (i) and (ii) can be extended from the proof of Theorem \ref{thm:block}. Let $G$ be the $n \times C_h$ matrix where $G_{ij} = 1$ if the $i$-th sample is in group $j$, otherwise $G_{ij} = 0$. For any $n - C_h$ eigenvectors in the orthogonal complement of the column space of $G$, the eigenvector of $K$ is also the eigenvector of 
\begin{equation*}
    \begin{bmatrix}
    K_1 & \mathbf{0} & \cdots & \mathbf{0} \\
    \mathbf{0} & K_2 & \cdots & \mathbf{0} \\
    \vdots & \vdots & \ddots & \vdots \\
    \mathbf{0} & \mathbf{0} & \cdots & K_k
    \end{bmatrix}
\end{equation*}

    where due to the hierarchical structure of block matrix, $K_i$ has the format of 

\begin{equation*}
    \begin{bmatrix}
1 - r^h & r^1 - r^h & \cdots & r^j - r^h & 0 & \cdots & 0 \\
r^1 - r^h & 1 - r^h & \cdots & 0 & 0 & \cdots & 0 \\
\vdots & \vdots & \ddots & \vdots & \vdots & & \vdots \\
r^j - r^h & 0 & \cdots & 1 - r^h & 0 & \cdots & 0 \\
0 & 0 & \cdots & 0 & 1 - r^h & \cdots & 0 \\
\vdots & \vdots & & \vdots & \vdots & \ddots & \vdots \\
0 & 0 & \cdots & 0 & 0 & \cdots & 1 - r^h
\end{bmatrix}
\end{equation*}

    The rest of $C_h$ eigenvectors come from the symmetric $C_h \times C_h$ matrix $A = (G^\top G)^{-1/2}(G^\top KG)(G^\top G)^{-1/2}$, by some basica algebra, we know $a_{ij} = \frac{1}{\sqrt{p_i}} \cdot (\text{sum of all } r_{ij} \text{ entries in } p_i \times p_j \text{ block}) \cdot \frac{1}{\sqrt{p_j}}$. For more details, we refer the reader to Theorem 3.1 in \citet{Cadima_Calheiros_Preto_2010} where $C_1 = k$ using their notation.

    Since the largest absolute value of $K$'s eigenvalues, is bounded above by the largest row sum of the absolute values of $K$ \citep{Horn_Johnson_2012}, first $n - C_h$ eigenvectors are bounded above by $U = \max_{i}(1 - r^h)+(p_i - 1)(r^1 - r^h) = (1 - r^h) + (p_\text{max} - 1)\Delta$.

    On the other hand, for the rest of $C_h$ eigenvalues, we analyze matrix $A$:
    \begin{align*}
        A &\leq A_1 + A_2 \\
          &:=
        \begin{bmatrix}
        1 + (p_1 - 1)r^h & 0 & \cdots & 0 \\
        0 & 1 + (p_2 - 1)r^h & \cdots & 0 \\
        \vdots & \vdots & \ddots & \vdots \\
        0 & 0 & \cdots & 1 + (p_k - 1)r^h
        \end{bmatrix}\\
        &+ \begin{bmatrix}
(p_{\text{max}} - 1)(r^1 - r^h) & p_{\text{max}}m & \cdots & p_{\text{max}}m \\
p_{\text{max}}m & (p_{\text{max}} - 1)(r^1 - r^h) & \cdots & p_{\text{max}}m \\
\vdots & \vdots & \ddots & \vdots \\
p_{\text{max}}m & p_{\text{max}}m & \cdots & (p_{\text{max}} - 1)(r^1 - r^h)
\end{bmatrix}
    \end{align*}

   The inequality comes from the effect of the maximization of CPCC that $r^1 \geq \cdots r^h \geq r^{h+1} \geq \cdots r^H$ and $r^h \leq m$. The eigenvalues of $A_1, A_2$ have the analytical form, where $A_1$'s eigenvalues have the form of $1 + (p_i - 1)r^h$ and $A_2$'s eigenvalues can be derived by Lemma \ref{lem:diag}. By Weyl's inequality \citep{Weyl_1912}, the minimum of these $C_h$ eigenvalues is at least $L = (1 + (p_\text{min} - 1)r^h) - [(p_\text{max} - 1)\Delta - p_\text{max}m + kp_\text{max}m] \geq (1 + (1 - 1)r^h) - [(p_\text{max} - 1)\Delta - p_\text{max}m + kp_\text{max}m]$.

    To guarantee eigenvalues from Part (ii) are larger, we want $L \geq U$. We solve this inequality with $m$, and we will get the desired range of $m$.
\end{proof}

When $r^h = r^1$ in Theorem \ref{thm:eigen-generic}, we have $\Delta = 0$. Therefore, for a three level basic tree with only $r^1, r^2$, if $m \leq M/(p_\text{max}(C_1-1))$, $C_0 - C_1$ eigenvalues are all smaller than $C_1$ eigenvalues. In general, we have shown that when $m$, i.e., the across group similarity is sufficiently small, the eigenvalue gap always exists. When the label tree $\mathcal{T}$ is balanced, we can further specify the expression of each eigenvalue and the amount of eigenvalue gaps.

We now formally restate the Theorem 4.1 from the main paper and give its proof.

\balanced*

\begin{proof}

    From Corollary \ref{cor:poi}, we know that $K \in \mathbb{R}^{C_0 \times C_0}$ has a block-wise structure. 

    Since all statements are presented recursively, we prove the theorem by structural induction on the height of the tree.

    The base case is Lemma \ref{lem:diag} with a two level hierarchy tree where only (i) and (iii) are applicable, and $p = r^1, C_0 = d, C_1 = 1$. By Lemma \ref{lem:diag}, $K$ has $C_0 - 1$ eigenvalues as $\lambda_0 = 1 - r^1$, and one eigenvalue as $\lambda_1 = 1 + (C_0 - 1)r^1 = (1 - r^1) + r^1 / C_0^{-1}$.

    Let us now assume that the theorem is true for the balanced tree whose root node is at height $H-1$. Then if we have a tree with height $H$. We call the resulting matrix $K_H$. 
    
    By the first bullet point of Theorem \ref{thm:block} we directly get $\lambda_0$ from (i). Then by the second bullet point of Theorem \ref{thm:block}, the rest of the eigenvalues are from the symmetric matrix $A_{H-1} \in \mathbb{R}^{C_1 \times C_1}$ whose diagonal elements are $\gamma = 1 + (C_0/C_1 - 1)r^1$ and whose off diagonal elements are $C_0/C_1 \cdot r^j$ for $j \geq 2$. 
    
    The key is to observe that $A_{H-1}$ is still a block structured matrix. After $A_{H-1}$ is scaled by $\gamma$, the resulting matrix can be also seen as a result of maximizing CPCC where the off diagonal blocks have smaller values. 

    Applying the hypothesis induction, we then know the expression of eigenvalues for $A_{H-1}$ as

    \begin{enumerate}[(i)]
        \item we have $C_1 - C_2$ eigenvalues of the form 
        \begin{align*}
            \lambda_1 &= \gamma(1 - \frac{r^2 C_0/C_1 }{\gamma}) \\
            &= \gamma - \frac{C_0}{C_1}r^2 \\
            &= 1 + (\frac{C_0}{C_1} - 1)r^1 - \frac{C_0}{C_1}r^2 \\
            &= 1 - r^1 + \frac{C_0}{C_1}(r^1 - r^2) \\
            &= \lambda_0 +  \frac{C_0}{C_1}(r^1 - r^2)
        \end{align*}
        \item For $0 < h \leq H-2$, we have $C_{h+1} - C_{h+2}$ eigenvalues of the form
        \begin{align*}
            \lambda_{h+1} - \lambda_{h} &= \gamma\frac{C_0}{C_1} \left(\frac{r^{h+1}}{\gamma} - \frac{r^{h+2}}{\gamma}\right)\frac{C_1}{C_{h+1}}\\
            \lambda_{h+1} &= \lambda_h + (r^{h+1} - r^{h+2})\frac{C_0}{C_{h+1}}
        \end{align*}
        \item The last eigenvalue is 
        \begin{align*}
            \lambda_H - \lambda_{H-1} &= C_1r^H/\gamma \cdot \frac{C_0}{C_1} \gamma \\
            &= C_0 r^{H}
        \end{align*}
    \end{enumerate}
    Therefore, we proved the theorem by showing the induction step from $K_{H-1}$ to $K_{H}$ holds. 
\end{proof}

Note that the true symmetric covariance matrix $K'$ might not be  having the exact format as $K$, but it can be seen as a perturbation of $K$ where $\norm{K' - K} \leq \epsilon$, $\epsilon$ is a small constant. By Weyl's inequality \citep{Weyl_1912}, the approximation error of each eigenvalue is bounded by $[\lambda_i - \epsilon, \lambda_i + \epsilon]$.

\section{Additional Algorithm and Experimental Details}
\label{app:secB} 

In this section, we first provide an overview of our algorithm, followed by a discussion on the choice of the \emph{flat} loss and additional experimental details about the training and evaluation metrics.

\subsection{Broader Impact Statement}
\label{app:broader_impact} 
Our work proposes \texttt{HypStructure}, a structured hyperbolic regularization approach to embed hierarchical information into the learnt representations. This provides significant advancements in understanding and utilizing hierarchical real-world data, particularly for tasks such as representation learning, classification and OOD detection, and we recognize both positive societal impacts and potential risks of this work. 
The ability to better model hierarchical data in a structured and interpretable fashion is particularly helpful for domains such as AI for science and healthcare, where the learnt representations will be more reflective of the underlying relationships in the domain space. Additionally, the low-dimensional capabilities of hyperbolic geometry can lead to gains in computational efficiency and reduce the carbon footprint in large scale machine learning. However, real-world hierarchical data often incorporates existing biases which may be amplified by structured representation learning, and hence it is important to incorporate fairness constraints to mitigate this risk.

\subsection{Pseudocode for \texttt{HypStructure}}
\label{app:pseudocode} 

\begin{algorithm}[ht]
\caption{\texttt{HypStructure}: \underline{Hyp}erbolic \underline{Structure}d Representation Learning}
\label{alg:hypstructure}
\begin{algorithmic}[1]
\Require Batch size $B$, Label tree $\mathcal{T}= (V, E, e)$, Number of epochs $K$, Task Loss formulation $\ell_\text{Flat}$, Encoder $f_\theta$, Classifier Head $g_{w}$, Learning Rate $\eta$, Hyperparameters $\alpha, \beta$ 

\State Initialize model parameters: 
$\theta$, $w$
\For {epoch = $1, 2, \ldots K$}
    \For {batch = $1, 2, \dots, B$}
    \State Get image-label pairs: $\{(\vx_i, y_i)\}_{i=1}^{B}$
    \State Forward pass to compute the representations: $(\vz_1 \dots \vz_{B}) \gets (f_{\theta}(\vx_1) \dots (f_{\theta}(\vx_{B})) $  
    \State Compute the Task loss: $\ell_{\text{Flat}}(g_{w}(\vz_i), y_i)$
    \State Get hyperbolic representations using exp. map (\cref{eq:hyp_maps}): $\tilde{\vz}_i \gets \exp_\mathbf{0}^c(\vz_i)$
    \State Calculate class prototypes using hyp. Averaging (\cref{eq:hypave}): $\omega_i \gets \text{HypAve}_K(\tilde{\vz}_{1}^v, \dots \tilde{\vz}_{j}^v)$ 
    \State Compute pairwise hyp. distances (\cref{eq:poincaredistance})  $ \forall v_i, v_j \in V:  \rho(v_i, v_j) \gets d_{\mathbb{B}_c}(\omega_i, \omega_j) $
    \State Get hyp. CPCC loss (\cref{eq:CPCC}: $\hypcpcc(\tmetric, \rho)$
    \State Compute hyp. centering loss using (\Cref{eq:hypave}): $\ell_\text{center} = \norm{\text{HypAve}_B(\tilde{\vz}_1, \dots, \tilde{\vz}_{B}})$

    \State Get total loss using \Cref{eq:objective_hyp}: $\mathcal{L}(\mathcal{D}_B)$
    \State Compute Gradients for learnable parameters at time $t$:  $\mathbf{u_t}(\theta,w) \gets \nabla_{\theta, w}\mathcal{L}(\mathcal{D}_B)$ 
    \State Refresh the parameters: $(\theta,w)_{t+1} \gets (\theta,w)_{t} - \frac{\eta}{B} \mathbf{u_t}(\theta,w)$ 

    \EndFor 
\EndFor
\Ensure $(\vz_1, \dots \vz_N);\theta,w$
\end{algorithmic}
\end{algorithm}

The training scheme for our \texttt{HypStructure} based structured regularization framework is provided in \Cref{alg:hypstructure}. At a high level, in \texttt{HypStructure}, we optimize a combination of the following two losses: (1) a \emph{hyperbolic CPCC} loss to encourage the representations in the hyperbolic space to correspond with the label hierarchy, (2) a \emph{hyperbolic centering} loss to position the representation corresponding to the root of the node at the centre of the \Poincare ball and the children nodes around it.

\subsection{Choice of Flat loss}

We use the Supervised Contrastive \citep{2020supcon} (\texttt{SupCon}) loss as the first choice for a \emph{flat} loss in our experimentation. Let $q_y$ be the one-hot vector with the $y$-th index as 1. The Cross Entropy (\texttt{CE}) loss, defined between the predictions  $g\circ f_\theta(\vx)$ and the labels $y$, as $\lce(g\circ f(\vx), y) := - \sum_{i\in[k]}q_i\log(g(f(\vx))_i)$ has been used quite extensively in large-scale classification problems in the literature \citep{cubuk2019autoaugment, cubuk1909randaugment, kolesnikov2019large, xie2020self}.  However, several shortcoming of the \texttt{CE} loss, such as lack of robustness \citep{sukhbaatar2014training, zhang2018generalized} and poor generalization \citep{ elsayed2018large, liu2016large} have been discovered in recent research. Contrastive learning has emerged as a viable alternative to the \texttt{CE} loss, to address these shortcomings \citep{chen2020simple, wu2018unsupervised, henaff2020data, tian2020contrastive, hjelm2018learning, he2020momentum}. The underlying principle for these methods is to \emph{pull} together embeddings for positive pairs and \emph{push} apart the embeddings for negative samples, in the feature space. In the absence of labels, positive samples are created by data augmentations of images and negative samples are randomly chosen from the minibatch.  However, when the labels are available, the class information can be leveraged to extend this methodology as a Supervised Contrastive loss (\texttt{SupCon}) by \emph{pulling} together embeddings from the same class, and \emph{pushing} apart the embeddings from different classes. This offers a more stable solution for a variety of tasks \citep{2020supcon, 2021ssd,sun2022knnood}.

\begin{definition}[SupCon Loss]

Given a training sample $\vx_i$, feature encoder $f_\theta(\cdot)$ and a projection head $h(\cdot)$, we denote the normalized feature representations from the projection head as: 
\begin{equation}
\label{eq:normalizedhead}
u_i = \frac{h\left(f_\theta(\vx_i)\right)}{\|h(f_\theta(\vx_i))\|_2},
\end{equation}
\noindent For the $N$ training samples $\{(\vx_i, y_i)\}_{i=1}^N$, we denote the training batch of $2N$ (augmented) pairs as $\{(\tilde \vx_i, \tilde  y_i)\}_{i=1}^{2N}$ and define the \texttt{SupCon} loss as:
\begin{equation}
\label{eq:supcon}
    \lsupcon= \frac{1}{2N}\sum_{i=1}^{2N}  -\log\frac{ \frac{1}{2N_{y_i} - 1}\sum_{k=1}^{2N} \mathbbm{1}(k \neq i)\mathbbm{1}(\tilde y_k = \tilde y_i) e^{\vu_i^T \cdot \vu_k/\tau}}{\sum_{k=1}^{2N} \mathbbm{1}(k \neq i) e^{\vu_i^T \cdot \vu_k/\tau}},
\end{equation}
\noindent where $N_{y_i}$ refers to the number of images with label $y_i$ in the batch, $\tau$ is the temperature parameter, $\cdot$ refers to the inner product, and $\vu_i$ and $\vu_k$ are the normalized feature representations using \Cref{eq:normalizedhead} for $\tilde \vx_i$ and $\tilde \vx_k$ respectively.

\end{definition}

While the numerator in the formulation in \Cref{eq:supcon} only considers the samples (and its augmentations) belonging to the same class, the denominator sums over all the negatives as well. Overall, this encourages the network to closely align the feature representations for \emph{all} the samples belonging to the same class, while \emph{pushing} apart the representations of samples across different classes. 

We note that our proposed method \texttt{HypStructure} is not limited to the choice of euclidean classification losses as $\ell_\text{Flat}$ and we report additional results with hyperbolic classification losses in Sections \ref{app:sec_hypsupcon} and \ref{app:sec_clippedhnn} respectively, demonstrating the wide applicability of our approach. 

\subsection{Implementation Details}
\label{app:exp_details_params}

\subsubsection{Software and Hardware} We implement our method in PyTorch 2.2.2 and run all experiments on a single NVIDIA GeForce RTX-A6000 GPU. The code for our methodology will be open sourced for a wider audience upon acceptance, in the
spirit of reproducible research.

\subsubsection{Architecture, Hyperparameters and Training}
We use the ResNet-18 \citep{he2015deep} network as the backbone for CIFAR10, and ResNet-34 as the backbone for CIFAR100 and ImageNet100 datasets. We use a ReLU activated multi layer perceptron with one hidden layer as the projection head $h(.)$ where its hidden layer dimension is the same as input dimension size and the output dimension is 128. We follow the original hyperparameter settings from \citep{2020supcon} for training the CIFAR10 and CIFAR100 models from scratch with a temperature $\tau=0.1$, feature dimension 512, and training for 500 epochs with an initial learning rate of 0.5 with cosine annealing, optimizing using SGD with momentum 0.9 and weight decay $10^{-4}$, and a batch size of $512$ for all the experiments. For ImageNet100, we finetune the ResNet-34 for 20 epochs following \citep{cider2022ming} with an initial learning rate of 0.01 and update the weights of the last residual block and the nonlinear projection head, while freezing the parameters in the first three residual blocks. We use the same $\alpha$ values as the regularization parameters for the CPCC loss in \Cref{eq:objective_regularizer} ($\ell_2$-CPCC) and in \Cref{eq:objective_hyp} (our proposed method \texttt{HypStructure}) for a fair comparison and find that the default regularization hyperparameter for the CPCC loss $\alpha=1.0$ for both $\ell_2$-CPCC and \texttt{HypStructure} performs well for the experiments on the CIFAR10 and CIFAR100 datasets. We observe that the experiments on the IMAGENET100 dataset benefit from a lower $\alpha=0.5$. Additionally, we set the hyperparameter for the centering loss 
in our methodology as $\beta=0.01$ for all the experiments. We use the default curvature value of $c=1.0$ for the mapping and distance computations in the \Poincare ball. 

\subsubsection{Datasets and Hierarchy}
\label{app:sec_dataset_hierarchy}

We use the following three datasets for our primary experimentation and training in this work

\begin{enumerate}
\item \textbf{CIFAR10} (\citep{krizhevsky2009learning}). It consists of 50,000 training images and 10,000 test images from 10 different classes. 

\item \textbf{CIFAR100}(\citep{krizhevsky2009learning}). It also consists of 50,000 training images and 10,000 test images, however the images belong to 100 classes. Note that the classes are not identical to the CIFAR10 dataset.  

\item \textbf{ImageNet100}(\citep{imagenet}). This dataset is created as a subset of the large-scale ImageNet dataset following \citet{ming2022delving}. The original ImageNet dataset consists of 1,000 classes and 1.2 million training images and 50,000 validation images. We construct the ImageNet100 dataset from this original dataset by sampling 100 classes, which results in 128,241 training images and 5000 validation images. We mention the specific classes used for sampling below.
\end{enumerate} 

Following \citep{ming2022delving}, we use the below 100 class id's for creating the ImageNet100 subset: n03877845, n03000684, n03110669, n03710721, n02825657, n02113186, n01817953, n04239074, n02002556, n04356056, n03187595, n03355925, n03125729, n02058221, n01580077, n03016953, n02843684, n04371430, n01944390, n03887697, n04037443, n02493793, n01518878, n03840681, n04179913, n01871265, n03866082, n03180011, n01910747, n03388549, n03908714, n01855032, n02134084, n03400231, n04483307, n03721384, n02033041, n01775062, n02808304, n13052670, n01601694, n04136333, n03272562, n03895866, n03995372, n06785654, n02111889, n03447721, n03666591, n04376876, n03929855, n02128757, n02326432, n07614500, n01695060, n02484975, n02105412, n04090263, n03127925, n04550184, n04606251, n02488702, n03404251, n03633091, n02091635, n03457902, n02233338, n02483362, n04461696, n02871525, n01689811, n01498041, n02107312, n01632458, n03394916, n04147183, n04418357, n03218198, n01917289, n02102318, n02088364, n09835506, n02095570, n03982430, n04041544, n04562935, n03933933, n01843065, n02128925, n02480495, n03425413, n03935335, n02971356, n02124075, n07714571, n03133878, n02097130, n02113799, n09399592, n03594945.

In addition to the training and validation images, we also require the label hierarchy for each of these datasets for the CPCC computation in $\ell_2$-CPCC and \texttt{HypStructure} approaches. For CIFAR100, we use the three-level hierarchy provided with the dataset release\footnote{\href{https://www.cs.toronto.edu/~kriz/cifar.html}{https://www.cs.toronto.edu/~kriz/cifar.html}}. We show this hierarchy in \Cref{tab:c100_hierarchy}, where the top-level is the root of the tree. 

\begin{table}[ht]
\centering
\caption{Class Hierarchy of the CIFAR100 Dataset}
\label{tab:c100_hierarchy}
\resizebox{0.75\textwidth}{!}{
\begin{tabular}{cc}
\toprule
\textbf{Coarse Classes} & \textbf{Fine Classes} \\
\midrule
aquatic mammals & beaver, dolphin, otter, seal, whale \\
fish & aquarium fish, flatfish, ray, shark, trout \\
flowers & orchids, poppies, roses, sunflowers, tulips \\
food containers & bottles, bowls, cans, cups, plates \\
fruit and vegetables & apples, mushrooms, oranges, pears, sweet peppers \\
household electrical devices & clock, computer keyboard, lamp, telephone, television \\
household furniture & bed, chair, couch, table, wardrobe \\
insects & bee, beetle, butterfly, caterpillar, cockroach \\
large carnivores & bear, leopard, lion, tiger, wolf \\
large man-made outdoor things & bridge, castle, house, road, skyscraper \\
large natural outdoor scenes & cloud, forest, mountain, plain, sea \\
large omnivores and herbivores & camel, cattle, chimpanzee, elephant, kangaroo \\
medium-sized mammals & fox, porcupine, possum, raccoon, skunk \\
non-insect invertebrates & crab, lobster, snail, spider, worm \\
people & baby, boy, girl, man, woman \\
reptiles & crocodile, dinosaur, lizard, snake, turtle \\
small mammals & hamster, mouse, rabbit, shrew, squirrel \\
trees & maple, oak, palm, pine, willow \\
vehicles 1 & bicycle, bus, motorcycle, pickup truck, train \\
vehicles 2 & lawn-mower, rocket, streetcar, tank, tractor \\
\bottomrule
\end{tabular}
}
\end{table}

Since no hierarchy is available for the CIFAR10 and ImageNet100 datasets, we construct a hierarchy for CIFAR10 manually, as seen in \Cref{fig:toy}. For ImageNet100, we create a subtree from the WordNet \footnote{\href{https://www.nltk.org/howto/wordnet.html}{https://www.nltk.org/howto/wordnet.html}} hierarchy, given the 100 aforementioned classes as leaves. We consider the classes which are one level above the leaf nodes in the hierarchy as the coarse classes, following \citet{zeng2022learning}.

For the task of OOD detection, we use the following five diverse OOD datasets for CIFAR10 and CIFAR100 as ID datasets, following the literature \citep{sun2022knnood}: \texttt{SVHN} \citep{svhn}, \texttt{Textures} \citep{texture}, \texttt{Places365} \citep{zhou2017places}, \texttt{LSUN} \citep{lsun} and \texttt{iSUN} \citep{isun}. When ImageNet100 is used as the ID dataset, we use 4 diverse OOD datasets as the ones in \citep{huang2021mos}, namely subsets of \texttt{iNaturalist} \citep{Horn_2018_CVPR}, \texttt{SUN} \citep{xiao2010sun}, \texttt{Places} \citep{zhou2017places} and \texttt{Textures} \citep{texture}. These datasets have been processed so that there is no overlap with the ImageNet classes.

\subsection{Delta Hyperbolicity Metrics}
\label{app:sec_delta_hyperbolicity}

We use Gromov's $\delta_{rel}$ to evaluate the tree-likeness of the data in \Cref{sec:tree_emb_quality}, following \citet{khrulkov2020hyperbolic}. For an arbitrary metric space $\mathcal{X}$ with metric $d$, for any three points $\vx,\vy,\vz \in \mathcal{X}$, we can define the Gromov product as
\begin{equation*}
    (\vy,\vz)_\vx = \frac{1}{2}(d(\vx,\vy) + d(\vx,\vz) - d(\vy,\vz))
\end{equation*}
Then, $\delta$-hyperbolicity can be defined as the minimum value of $\delta$ such that for any four points $\vx, \vy, \vz, \vw \in \mathcal{X}$, the following condition holds:
\begin{equation*}
    (\vx,\vz)_\vw \geq \min((\vx,\vy)_\vw,(\vy,\vz)_\vw) - \delta
\end{equation*}
It can be shown that equivalently, there exists a geometric definition of $\delta$-hyperbolicity. A geodesic triangle in $\mathcal{X}$ is $\delta$-slim if each of its side is contained in the $\delta$-neighbourhood of the union of the
other two sides. We define $\delta$-hyperbolicity as the minimum $\delta$ that guarantees any triangle in $\mathcal{X}$ is $\delta$-slim. From \Cref{fig:delta-slim}, when the curvature of the surface increases, the geodesic triangle converges to a tree/star graph, and $\delta$ gradually reduces to $0$. 

\begin{figure}
    \centering
    \includegraphics[width=0.5\textwidth]{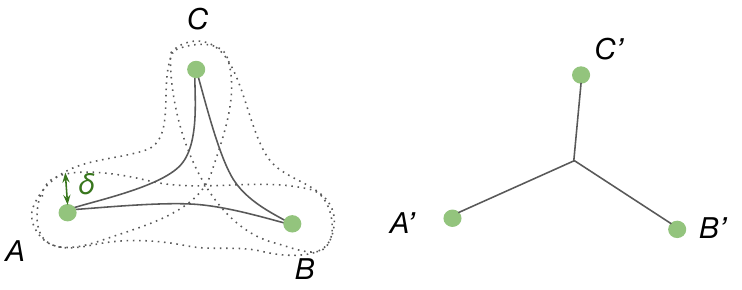}
    \caption{Example of a $\delta$-slim triangle, where each side of $\triangle ABC$ is the geodesic distance of two points in the metric space.}
    \label{fig:delta-slim}
\end{figure}

Following \citet{khrulkov2020hyperbolic}, we use the scale-invariant metric $\delta_{rel} = \frac{2\delta}{\text{diam}(\mathcal{X})}$ for evaluation, so that the $\delta_{rel}$ is normalized in $[0,1]$, and the $\text{diam}(\cdot)$ is the set diameter or the maximal pairwise distance.

\section{Additional Experimental Results}
\label{app:secC}

\subsection{Results using Linear Evaluation}
\label{app:sec_linear_evaluation}
We also perform an evaluation of the \emph{fine} classification accuracy following the common linear evaluation protocol \citet{2020supcon} where a linear classifier is trained on top of the normalized penultimate layer features.  We report these accuracies for the models trained on the CIFAR10 and CIFAR100 datasets in \Cref{tab:linear_evals} for the leaf-only variants of the models. We observe that the relative trend of accuracies is identical to the ones reported using the kNN evaluation in Table \ref{tab:combined_metrics_accuracy} and our proposed method \texttt{HypStructure} outperforms the \emph{flat} and $\ell_2$-CPCC methods on both the datasets.

  \begin{table}[t]
  \caption{Linear classification accuracy  using \texttt{SupCon} \citep{2020supcon} as $\ell_\text{Flat}$.}
  \vspace{0.2cm}
    \centering
    \resizebox{0.5\textwidth}{!}{
        \begin{tabular}{lcc}
            \toprule
            \textbf{Dataset} & \textbf{Method (SupCon)} & \textbf{Fine Accuracy ($\uparrow$)} \\
            \midrule
            \multirow{3}{*}{CIFAR10} & Flat & 94.53 \\
            & $\ell_2$-CPCC  & 95.08\\
            & \cellcolor{gray!20}\texttt{HypStructure} (Ours) & \cellcolor{gray!20}\textbf{95.18}\\
            \midrule
            \multirow{3}{*}{CIFAR100} & Flat & 75.11\\
            & $\ell_2$-CPCC  & 75.66\\
            & \cellcolor{gray!20}\texttt{HypStructure} (Ours) & \cellcolor{gray!20}\textbf{77.66}\\
            \bottomrule
        \end{tabular}
    }
    \label{tab:linear_evals}
  \end{table}

\subsection{Component-wise Ablation Study of \texttt{HypStructure}}
\label{app:sec_ablations}

To understand the role of each component in our proposed methodology \texttt{HypStructure}, we perform a detailed ablation study with the different components and measure the \emph{fine} and the \emph{coarse} accuracies on the CIFAR100 dataset. Specifically, we examine

\begin{enumerate}
    \item the role of embedding all internal nodes in the label hierarchy (\cref{eq:objective_hyp} and line 10 in \Cref{alg:hypstructure}), as opposed to only using leaf nodes as in \citet{zeng2022learning}. We refer to the inclusion of internal nodes as $\gT_{\text{int}}$.
    
    \item the role of hyperbolic class centroids computation using hyperbolic averaging (\cref{eq:hypave} and line 8 in \Cref{alg:hypstructure}), as opposed to the Euclidean computation of class prototypes as in \citet{zeng2022learning}. We refer to the hyperbolic class centroid computation as $\omega_{\text{hyp}}$.

    \item the role of the hyperbolic centering loss in our proposed methodology (\cref{eq:objective_hyp} and line 11 in \Cref{alg:hypstructure}), as opposed to not using a centering loss. We refer to the inclusion of the centering loss as $\ell_\text{center}$.
\end{enumerate}

\begin{table}[ht]
\centering
\caption{Ablation study on the components of \texttt{HypStructure}. We report the Classification accuracies based on the CIFAR100 model trained with ResNet-34. 
}
  \vspace{0.2cm}
    \centering
\resizebox{\textwidth}{!}{
\begin{tabular}{ccccc}
\toprule
\multicolumn{3}{c}{\texttt{HypStructure} \textbf{ Components}} & \multicolumn{2}{c}{\textbf{Classification Acc.}$\uparrow$}\\
\cmidrule(lr){1-3} \cmidrule(lr){4-5}
 \textbf{Internal Nodes} ($\gT_{\text{int}}$)     & \textbf{Hyp. Class Centroids} ($\omega_{\text{hyp}}$) & 
 \textbf{Hyp. Centering} ($\ell_\text{center}$)  & \textbf{Fine}  & \textbf{Coarse}\\
\midrule
$\checkmark$ & & & 75.03 & 84.77\\
$\checkmark$ & & $\checkmark$ & 75.61 & 84.81\\
& $\checkmark$ & $\checkmark$ & 76.22 & 85.70\\
$\checkmark$ & $\checkmark$ & & 76.59 & \textbf{86.23}\\
$\checkmark$ & $\checkmark$ & $\checkmark$ & \textbf{76.91} & 86.22\\
            \bottomrule
\end{tabular}
}
\label{tab:ablation_component}
\end{table}

We ablate over the aforementioned settings, where a $\checkmark$ denotes the inclusion of that setting, and report the results on the CIFAR100 dataset in \Cref{tab:ablation_component}. Firstly, we observe that while the centering loss $\ell_\text{center}$ improves the coarse accuracy only by a small increment, it leads to a significant improvement in the fine accuracy (rows $ 1 \rightarrow 2$ and $4 \rightarrow 5$), indicating that the centering of the root in the poincare disk allows for a better relative positioning of the fine classes within the coarse class groups. Secondly, we observe that both the inclusion of internal nodes $\gT_{\text{int}}$, and the hyperbolic computation of the class centroids $\omega_{\text{hyp}}$ is critical for accurately embedding the hierarchy, and removing either of these components (i.e. rows $ 5 \rightarrow 3$ for   $\gT_{\text{int}}$ and rows $ 5 \rightarrow 2$ for $\omega_{\text{hyp}}$), leads to a degradation in both the fine as well as the coarse accuracies.  The best overall performance is observed when all three of the components are included (row $5$).

\subsection{OOD detection}
\label{app:sec_ood_detection}

\subsubsection{Related Work and Methods}
\label{app:ood_citations}
The goal of prior works in the OOD literature is the supervised setting of learning an accurate classifier for ID data, along with an ID-OOD detection methodology and this task has been explored in the generative model setting \citep{kirichenko2020normalizing,nalisnick2019deep,ren2019likelihood,serra2019input,xiao2020likelihood}, and more extensively in the supervised discriminative model setting \citep{openmax16cvpr,hendrycks2016baseline,hsu2020generalized,huang2021mos,liang2018enhancing,liu2020energy,sun2021tone,ming22a}. The methods in this setting can be categorized into four sub-categories following \citep{zhang2023openood}, primarily:

\paragraph{Post-Hoc Inference} These methods design post-processing/scoring mechanisms on base classifiers such as MSP \citep{hendrycks2016baseline}, ODIN \citep{liang2018enhancing}, ReAct \citep{sun2021tone}, SSD+ \citep{2021ssd}, KNN+ \citep{sun2022knnood} and RankFeat \citep{song2022rankfeat}. 
\paragraph{Training without outlier data} These methods involve training-time regularization or different objective functions for improving OOD detection capabilities such as G-ODIN \citep{hsu2020generalized}, CSI \citep{tack2020csi}, LogitNorm \citep{wei2022mitigating} and CIDER \citep{cider2022ming}. 

\paragraph{Training with outlier data} These methods assume access to auxiliary OOD training samples such as OE \citep{oe18nips} and MixOE \citep{mixoe23wacv}. 

\paragraph{Data Augmentation} These methods improve the generalization ability of image classifiers such as StyleAugment \citep{geirhos2018imagenettrained}, AugMix \citep{hendrycks2020augmix} and RegMixup \citep{pinto2022using}. \\

Our proposed work can be considered primarily in the \textbf{Training without outlier data} category, and we note that none of the prior works use any additional structural regularization term in the objective functions.

\subsubsection{Dataset-wise OOD Detection Results}

\begin{table*}[ht]
\caption{Results on CIFAR10. OOD detection performance for {ResNet-18} trained on CIFAR10. Training with \texttt{HypStructure} achieves strong OOD detection performance.}
\centering
\resizebox{0.8\textwidth}{!}{
\begin{tabular}{lccccccccc}
\toprule
\multirow{2}{*}{\textbf{Method}} & \multicolumn{5}{c}{\textbf{OOD Dataset AUROC (↑)}} & \multirow{2}{*}{\textbf{Avg. (↑)}} \\
\cmidrule(lr){2-6} 
& SVHN & Textures & Places365 & LSUN & iSUN & \\
\midrule
ProxyAnchor & 94.55 & 93.16 & 92.06 & 97.02 & 96.56 & 94.67 \\
CE + SimCLR & 99.22 & 96.56 & 86.70 & 85.60 & 86.78 & 90.97 \\
CSI & 94.69 & 94.87 & 93.04 & 97.93 & 98.01 & 95.71 \\
CIDER & 99.72 & 96.85 & 94.09 & 99.01 & \textbf{96.64} & 97.26 \\
\midrule
SSD+ & 99.51 & 98.35 & \textbf{95.57} & 97.83 & 95.67 & 97.38 \\
KNN+ & 99.61 & 97.43 & 94.88 & 98.01 & 96.21 & 97.22  \\
$\ell_2$-CPCC & 93.27 & 94.76 & 60.15 & 75.29 & 59.87 & 76.67 \\
\rowcolor{gray!20}\texttt{HypStructure} (Ours) & \textbf{99.75} & \textbf{98.89} & 94.80 & \textbf{99.67} & 95.64 & \textbf{97.75} \\
\bottomrule
\end{tabular}
}
\label{tab:main_c10_ood}
\end{table*}

\begin{table*}[ht]
\caption{Results on ImageNet100. OOD detection performance for {ResNet-34} trained on ImageNet100. Training with \texttt{HypStructure} achieves strong OOD detection performance.}
\centering
\resizebox{0.8\textwidth}{!}{
\begin{tabular}{lcccccccc}
\toprule
\multirow{2}{*}{\textbf{Method}} & \multicolumn{4}{c}{\textbf{OOD Dataset AUROC (↑)}} & \multirow{2}{*}{\textbf{Avg. (↑)}} \\
\cmidrule(lr){2-5} 
& SUN & Places365 & Textures & iNaturalist & &\\
\midrule
CIDER & 91.63 & 89.29 & 97.98 & 96.35 & 93.81  \\
\midrule
SSD+ & 88.97 & 85.98 & \textbf{98.49} & 96.42 & 92.46 \\
KNN+ & 89.48 & 86.64 & 98.38 & \textbf{96.46} & 92.74 \\
$\ell_2$-CPCC & 90.95 & 86.87 & 97.41 & 90.08 & 91.33 \\
\rowcolor{gray!20}\texttt{HypStructure} (Ours) & \textbf{92.21} & \textbf{90.12} & 97.33 & 95.61 & \textbf{93.83}  \\
\bottomrule
\end{tabular}
}
\label{tab:main_im10_ood}
\end{table*}

We report the dataset-wise OOD detection results in Tables \ref{tab:ood_detection_main_cifar100}, \ref{tab:main_c10_ood} and \ref{tab:main_im10_ood} for CIFAR100, CIFAR10 and ImageNet100 respectively. 
We compare with several other state-of-the-art baseline OOD detection methods for CIFAR10 and CIFAR100, namely ProxyAnchor \citep{kim2020proxy}, SimCLR \citep{chen2020simple} CSI \citep{tack2020csi}, and CIDER \citep{cider2022ming} respectively. Results for these methods are taken from CIDER \citep{cider2022ming} where contrastive learning based OOD detection methods typically outperforms non-contrastive learning ones. For ImageNet100, in the absence of the available class ids used to train the original models in CIDER \citep{cider2022ming}, we finetune the ResNet34 models on the created ImageNet100 dataset. For CIDER and SupCon, we use the official implementations and hyperparameters provided by the authors. 

We observe that our proposed method leads to an improvement in the average OOD detection AUROC over all the ID datasets. In practice, we find that the Euclidean-centroid computational variant (first compute the Euclidean centroids and then apply the exponential map) of our proposed method performs slightly better than the hyperbolic-centroid computational variant (first apply the exponential map and then compute the hyperbolic average), for the specific task of OOD detection, while having equivalent performance on the ID classification task. Hence, we report the OOD detection accuracy corresponding to the first version.

\subsection{Visualization of Learned Features}
\label{app:sec_add_hyperbolic_viz}

We provide additional visualizations of the learnt features from our proposed method \texttt{HypStructure} on the CIFAR10, CIFAR100 and ImageNet100 datasets in Figures \ref{fig:add_viz_1}, \ref{fig:add_viz_2} and \ref{fig:add_viz_3} respectively. 

\begin{figure}[ht]
\centering
\begin{tabular}{cc}
\includegraphics[width=.3\textwidth]{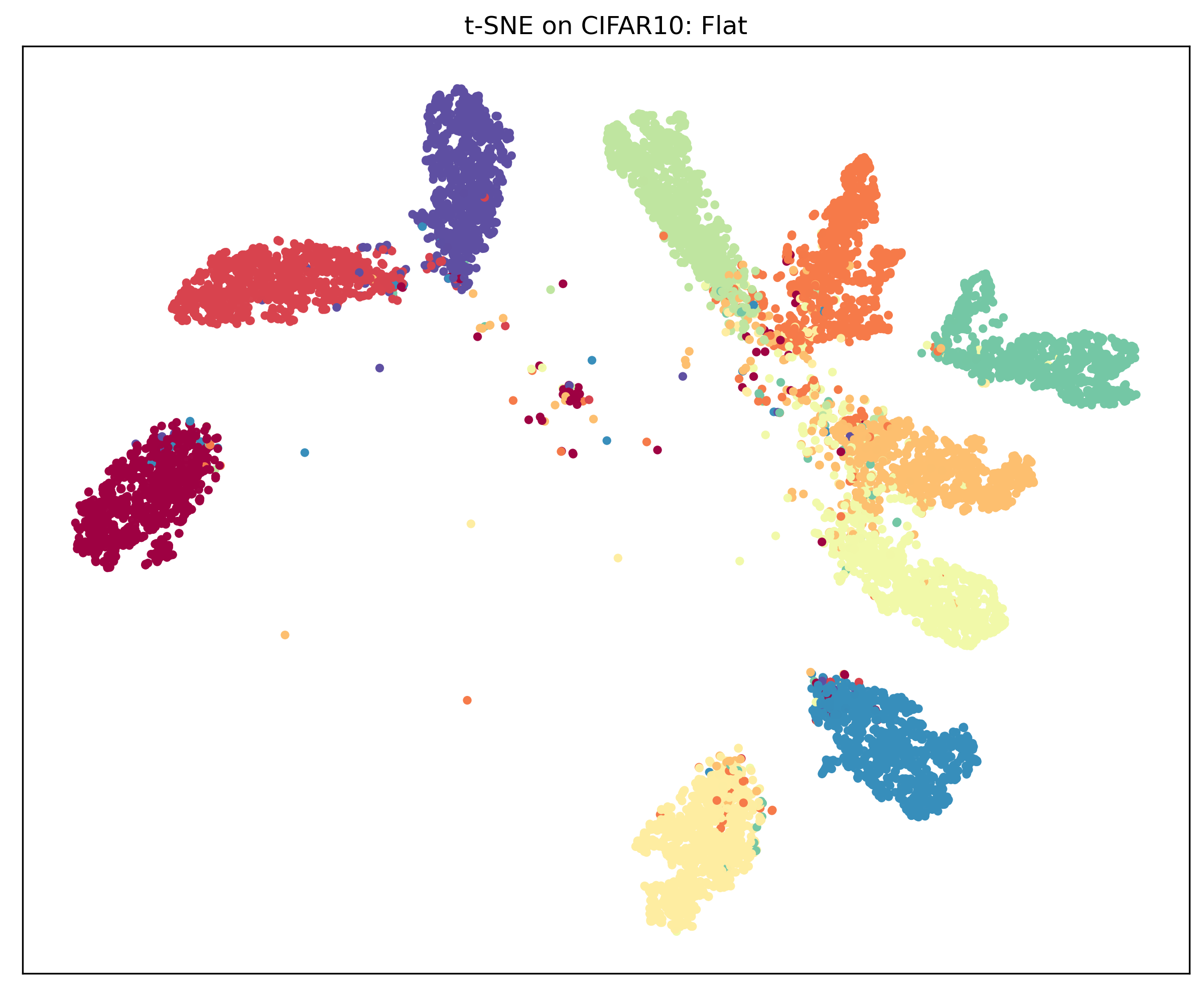}&
\includegraphics[width=.3\textwidth]{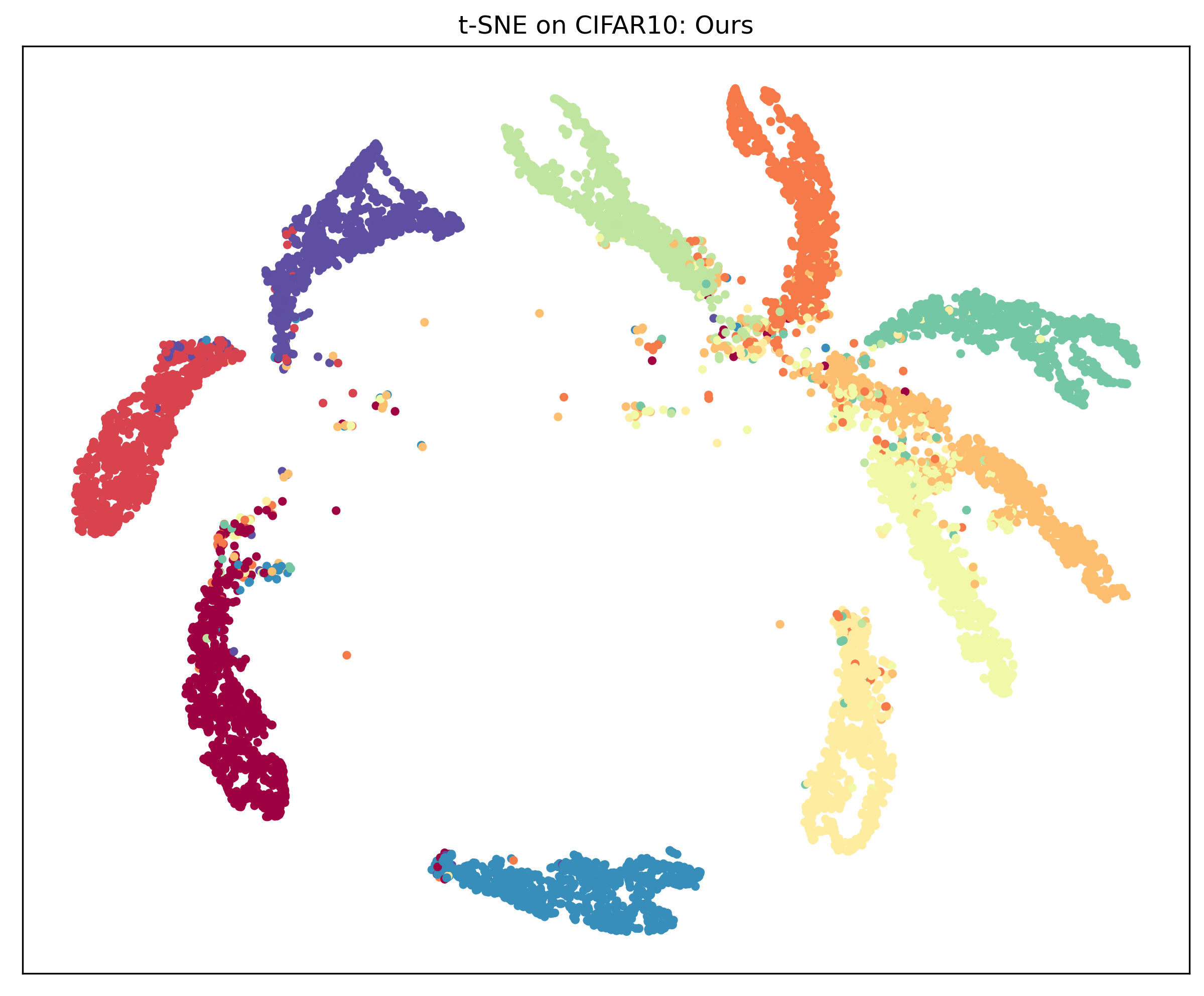}\\
{ (a)}&{ (b)} 
\end{tabular}
\caption{Euclidean t-SNE Visualizations on CIFAR10.}
\label{fig:add_viz_1}
\end{figure}

\begin{figure}[ht]
\centering
\begin{tabular}{cc}
\includegraphics[width=.3\textwidth]{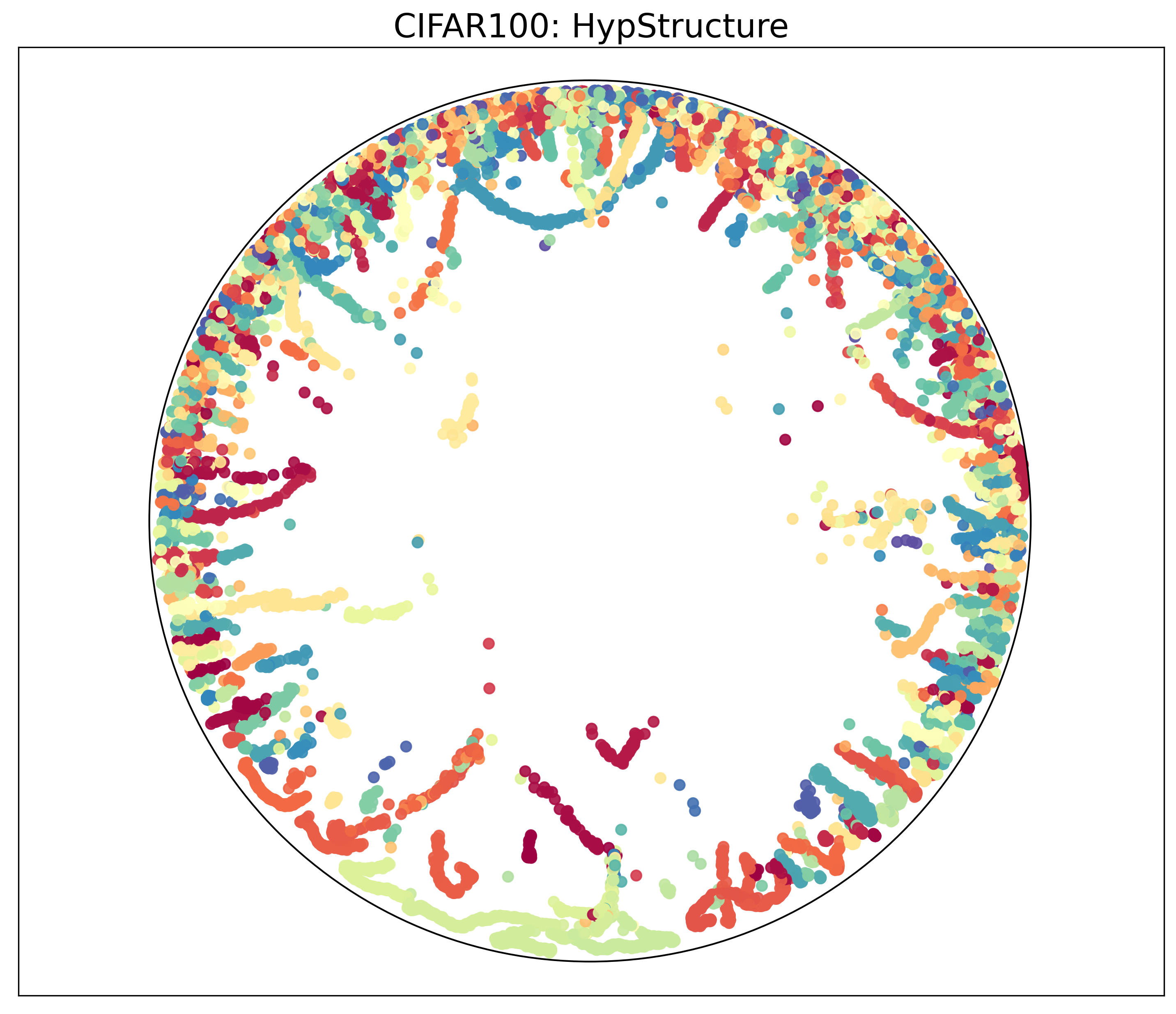}&
\includegraphics[width=.3\textwidth]{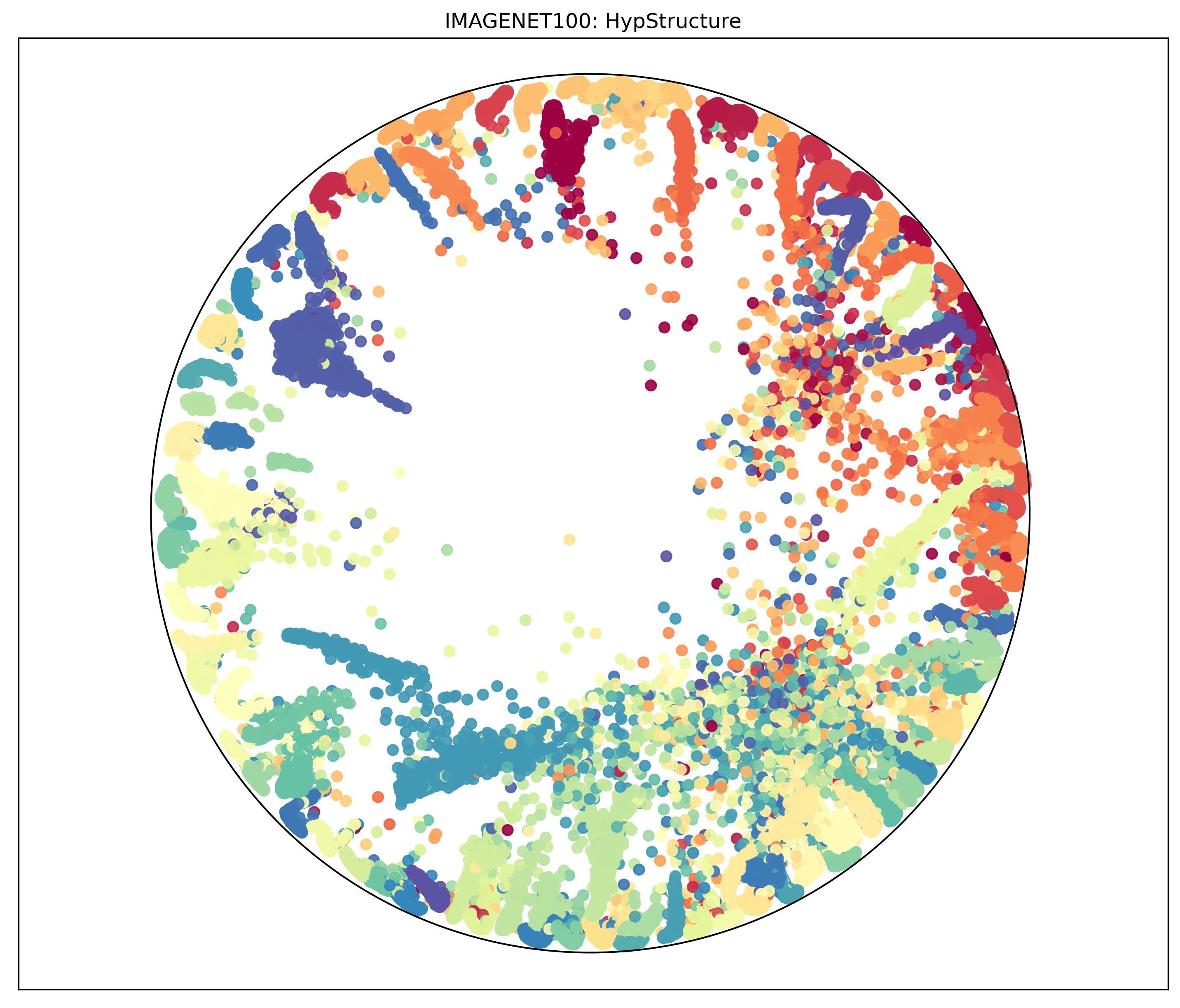}\\
{ (a)}&{ (b)} 
\end{tabular}
\caption{Hyperbolic UMAP Visualizations on CIFAR100 and ImageNet100.}
\label{fig:add_viz_2}
\end{figure}

\begin{figure}[ht]
\centering
\begin{tabular}{cc}
\includegraphics[width=.3\textwidth]{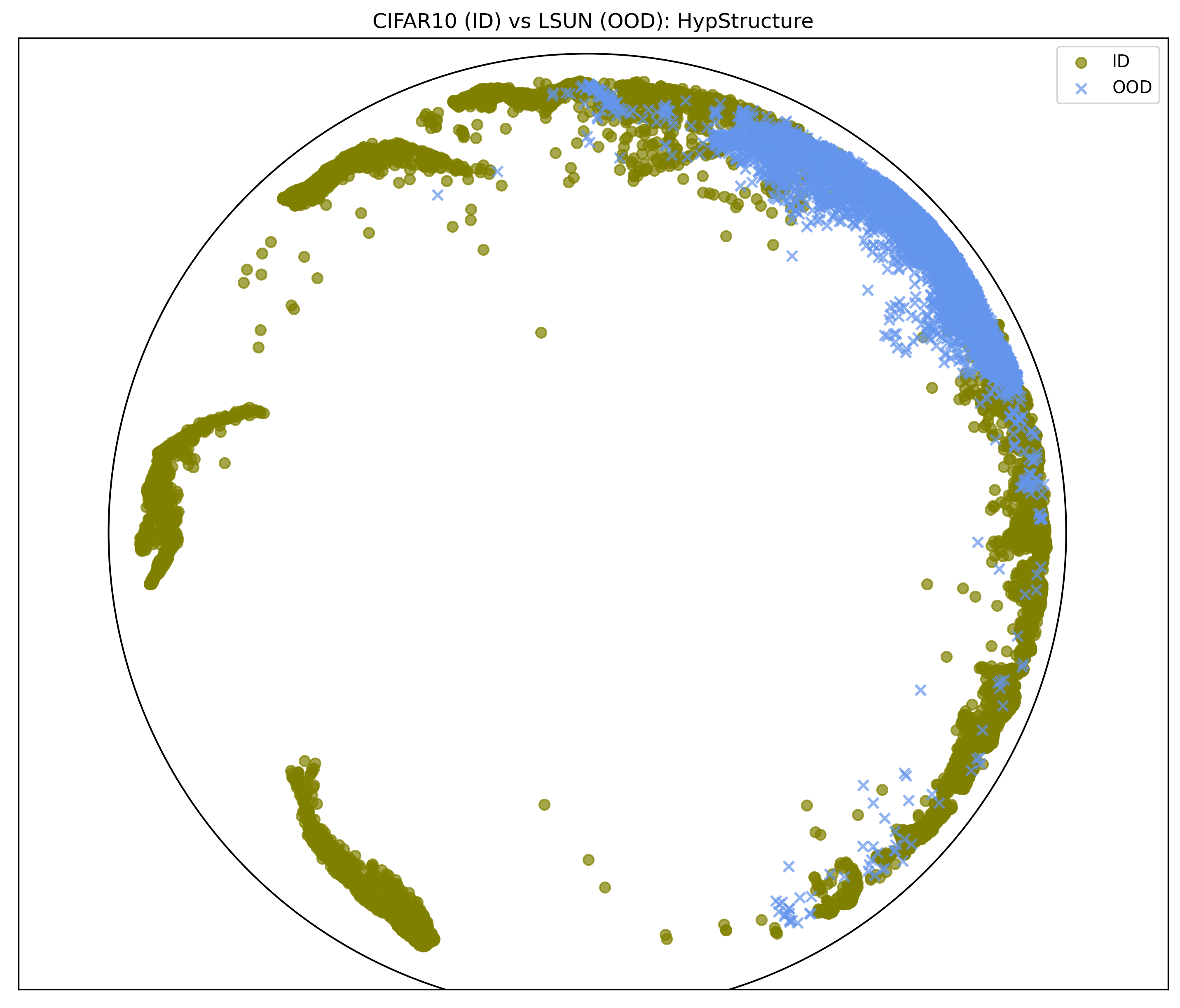}&
\includegraphics[width=.3\textwidth]{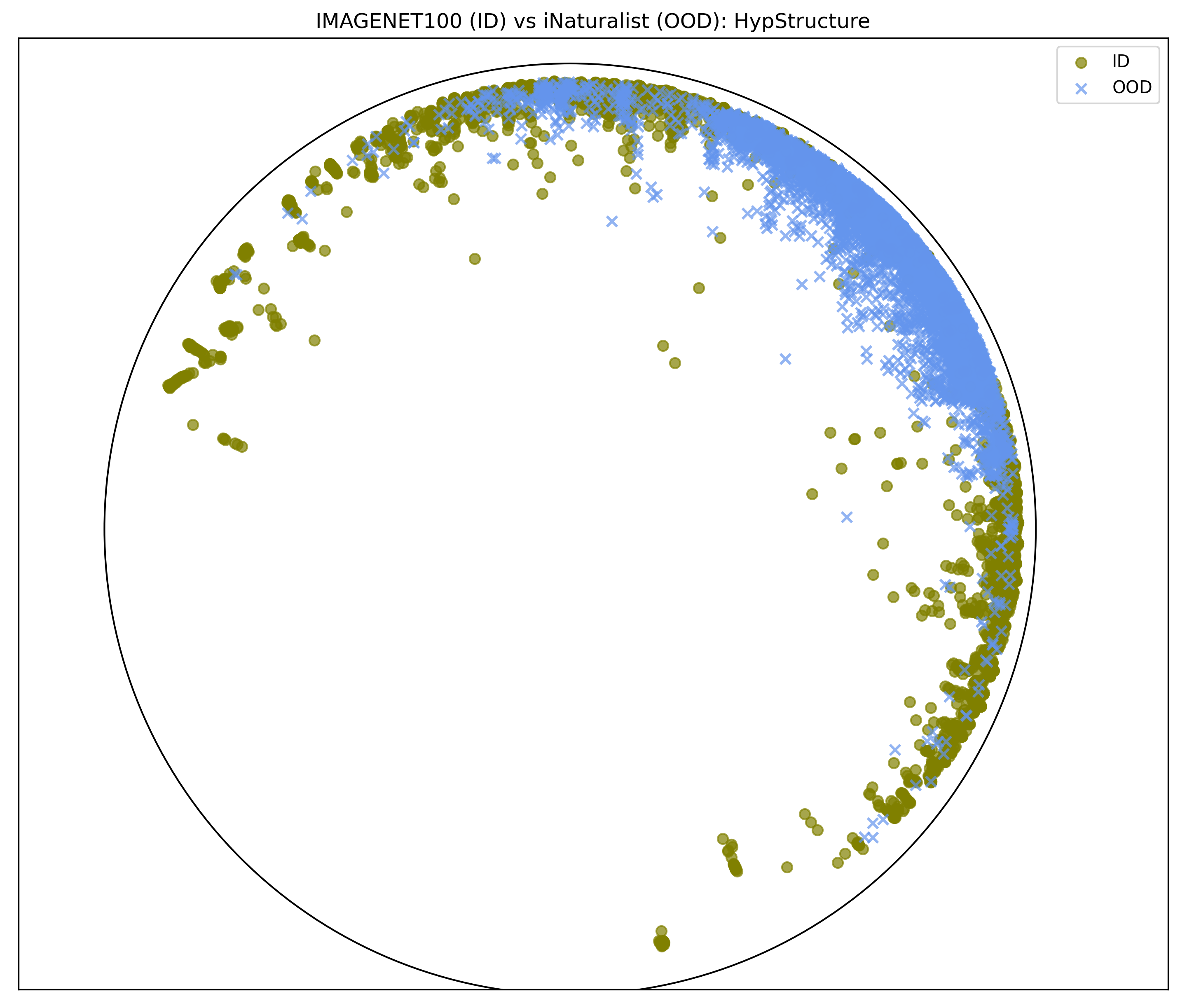}\\
{ (a)}&{ (b)} 
\end{tabular}
\caption{Hyperbolic UMAP Visualizations of ID-OOD separation on CIFAR10 and ImageNet100.}
\label{fig:add_viz_3}
\end{figure}

\subsection{Effect of Centering Loss and Embedding Internal Node}

Embedding the internal tree nodes in \texttt{HypStructure} $\mathcal{T}_{\text{int}}$ (as compared to only leaf nodes in prior work CPCC) and placing the root node at the center of the Poincaré disk with $\ell_{\text{center}}$ loss, helps in embedding the hierarchy more accurately. To understand the impact of these components, we first visualize the learnt representations from \texttt{HypStructure}, with and without these components - i.e. embedding internal nodes and a centering loss vs leaf only nodes, via UMAP in \Cref{fig:rebuttal_viz_3} (CIFAR100) and \Cref{fig:rebuttal_viz_2}  (ImageNet100). We also provide a performance comparison (fine accuracy) in \Cref{tab:hypstructure_comparison}.

\begin{table}[ht]
\centering
\resizebox{0.8\textwidth}{!}{
\begin{tabular}{lccc}
\toprule
\textbf{Method} & \textbf{CIFAR10} & \textbf{CIFAR100} & \textbf{ImageNet100} \\ \midrule
\texttt{HypStructure} (leaf only) & 94.54 & 76.22 & 89.85 \\
\texttt{HypStructure} (with internal nodes and centering) & \textbf{94.79} & \textbf{76.68} & \textbf{90.12} \\ \bottomrule
\end{tabular}
}
\vspace{0.4cm}
\caption{Fine accuracy comparison of \texttt{HypStructure} with vs. without internal nodes and centering on CIFAR10, CIFAR100, and ImageNet100 datasets.}
\label{tab:hypstructure_comparison}
\end{table}

First, based on Figures \Cref{fig:rebuttal_viz_3} and \Cref{fig:rebuttal_viz_2}, one can note that in the leaf-only setting without embedding internal nodes and centering loss (figures on the left), the samples belonging to the fine classes which share the same parent (same color) are in close proximity reflecting the hierarchy accurately, however the samples are not spread evenly. With the embedding of internal nodes and a centering loss (right), we note that the representations are spread between the center (root) to the boundary as well as across the Poincaré disk, which is more representative of the original hierarchy. This also leads to performance improvements as can be seen in \Cref{tab:hypstructure_comparison}. 

\begin{figure}[ht]
\centering
\begin{tabular}{cc}
\includegraphics[width=.35\textwidth]{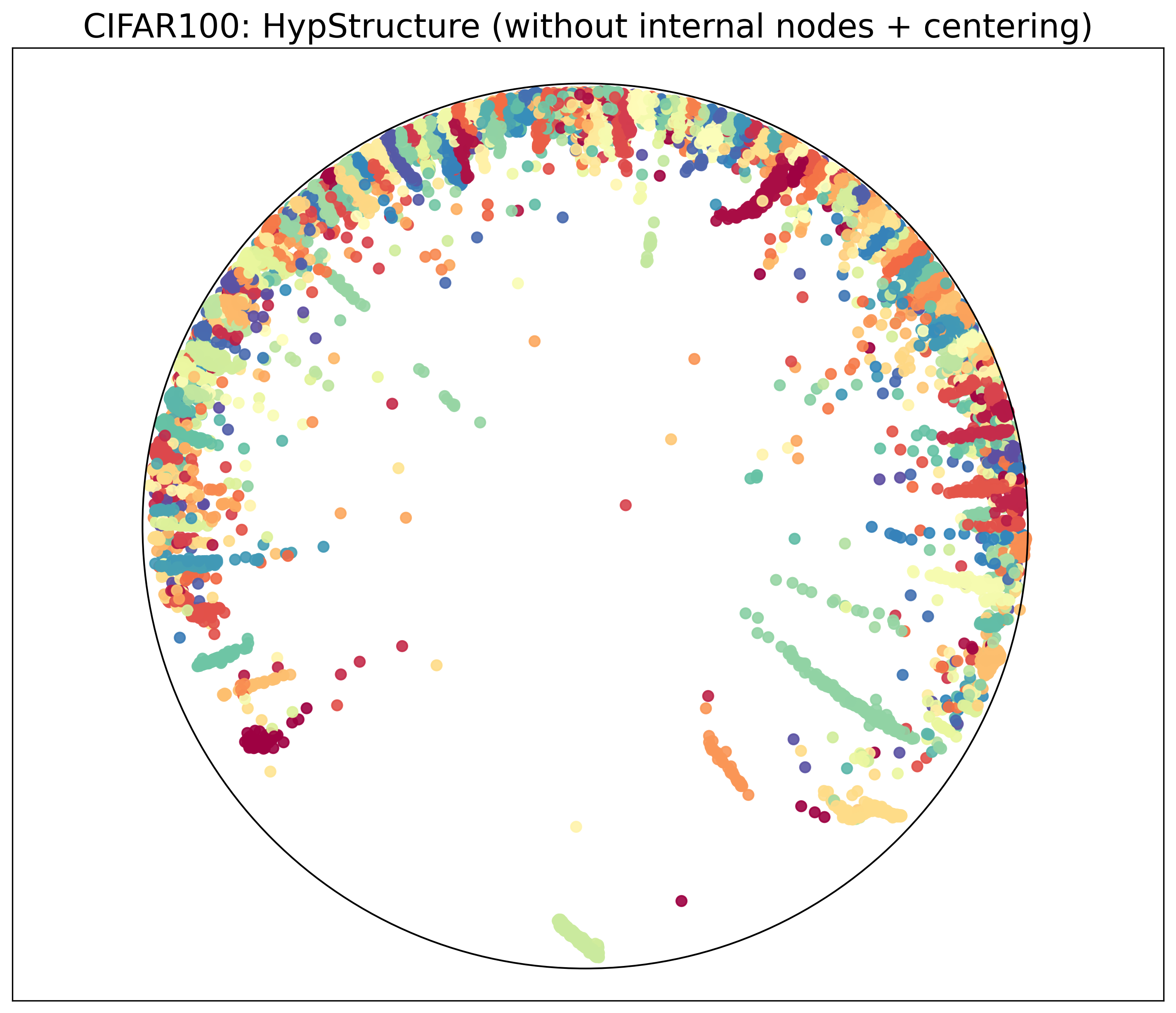}&
\includegraphics[width=.35\textwidth]{figures/hypstructure_poincare_disk_cifar100.png}\\
{ (a)}&{ (b)} 
\end{tabular}
\caption{Hyperbolic UMAP Visualizations on CIFAR100 using \texttt{HypStructure} without embedding the internal nodes and a hyperbolic centering loss (left), and with embedding the internal nodes along with a centering loss (right).}
\label{fig:rebuttal_viz_3}
\end{figure}

\begin{figure}[ht]
\centering
\begin{tabular}{cc}
\includegraphics[width=.35\textwidth]{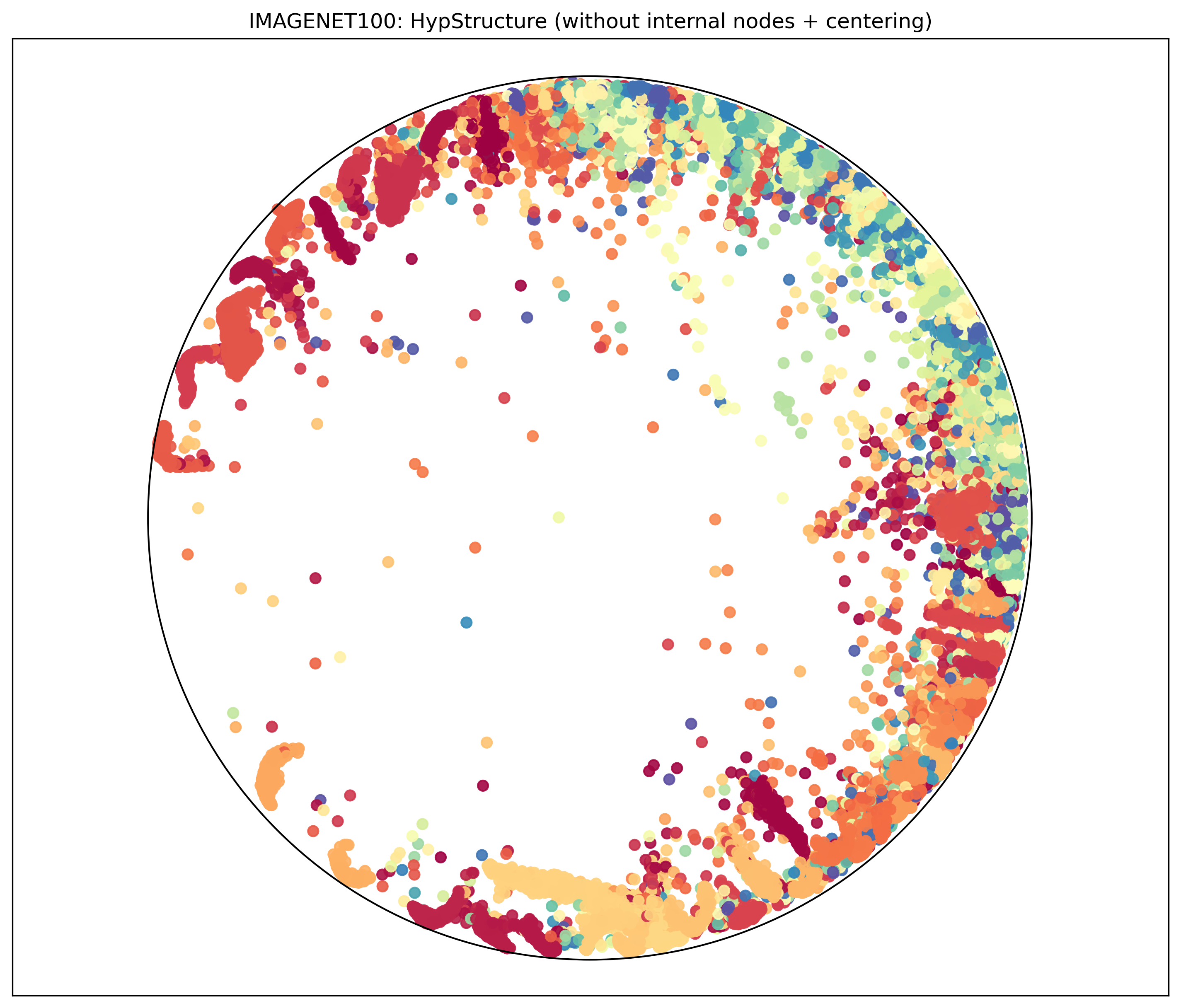}&
\includegraphics[width=.35\textwidth]{figures/hypstructure_poincare_disk_imagenet100.png}\\
{ (a)}&{ (b)} 
\end{tabular}
\caption{Hyperbolic UMAP Visualizations on ImageNet100 using \texttt{HypStructure} without embedding the internal nodes and a hyperbolic centering loss (left), and with embedding the internal nodes along with a centering loss (right).}
\label{fig:rebuttal_viz_2}
\end{figure}

\subsection{Effect of Label Hierarchy Weights}
Compared to ranking-based hyperbolic losses \citep{nickel2017poincare}, our HypCPCC factors in absolute values of the node-to-node distances. The learned hierarchy with HypCPCC will not only implicitly encode the correct parent-child relations, but can also learn more complex and weighted hierarchical relationships more accurately.
To demonstrate this, we modify the CIFAR10 tree hierarchy, and gradually increase the weight for the left transportation branch to 2$\times$ and 4$\times$ and use new weighted trees for the CPCC tree distance computation. We visualize the learnt representations in \Cref{fig:rebuttal_viz_4} and we can observe that in the learned representations from left to right, the distance between the transportation classes (blue) are larger as compared to other classes, as expected.

\begin{figure}[ht]
\centering
\begin{tabular}{ccc}
\includegraphics[width=.19\textwidth]{figures/hypstructure_cifar10_centers_main_1.png}&
\includegraphics[width=.25\textwidth]{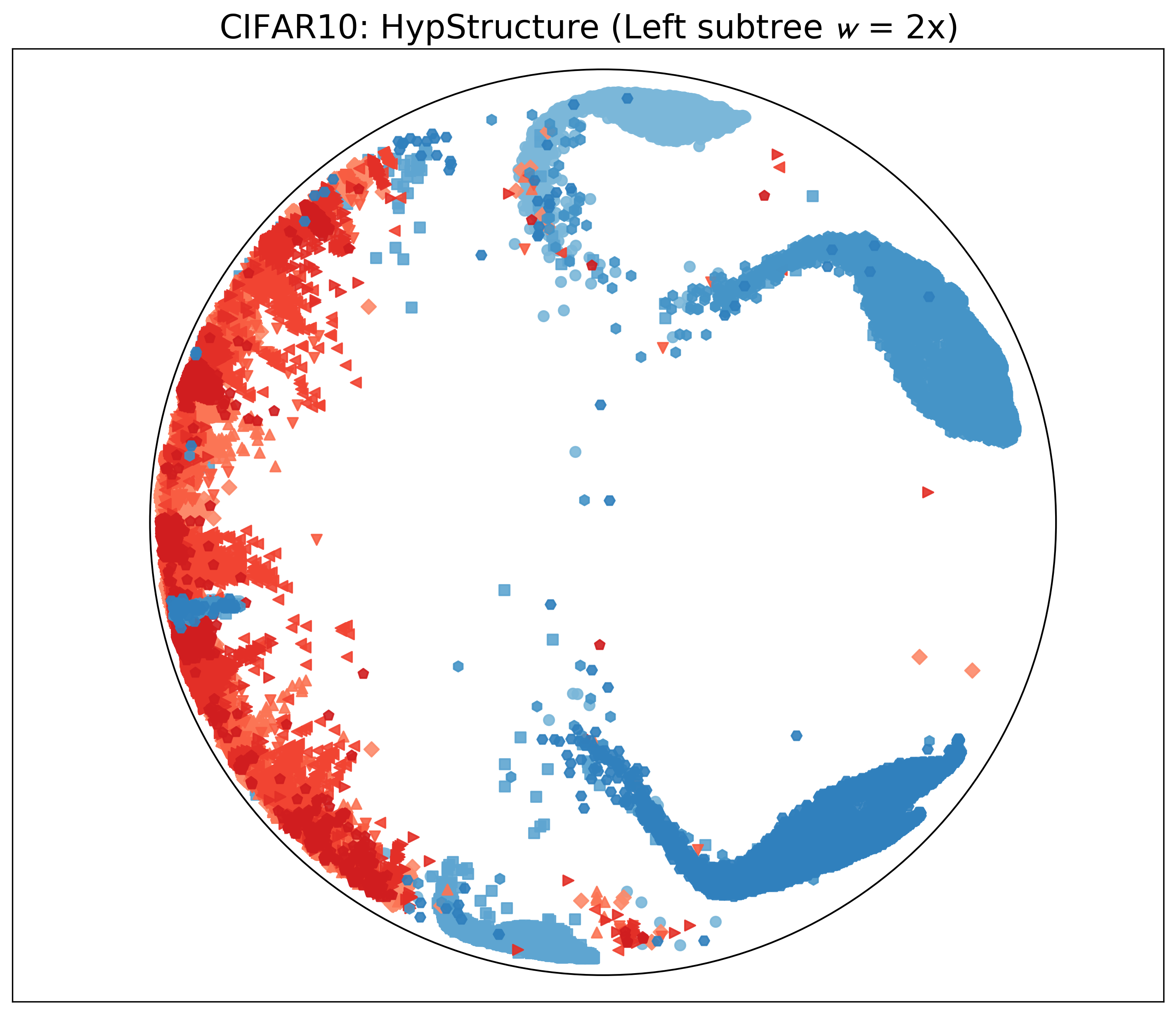} & \includegraphics[width=.25\textwidth]{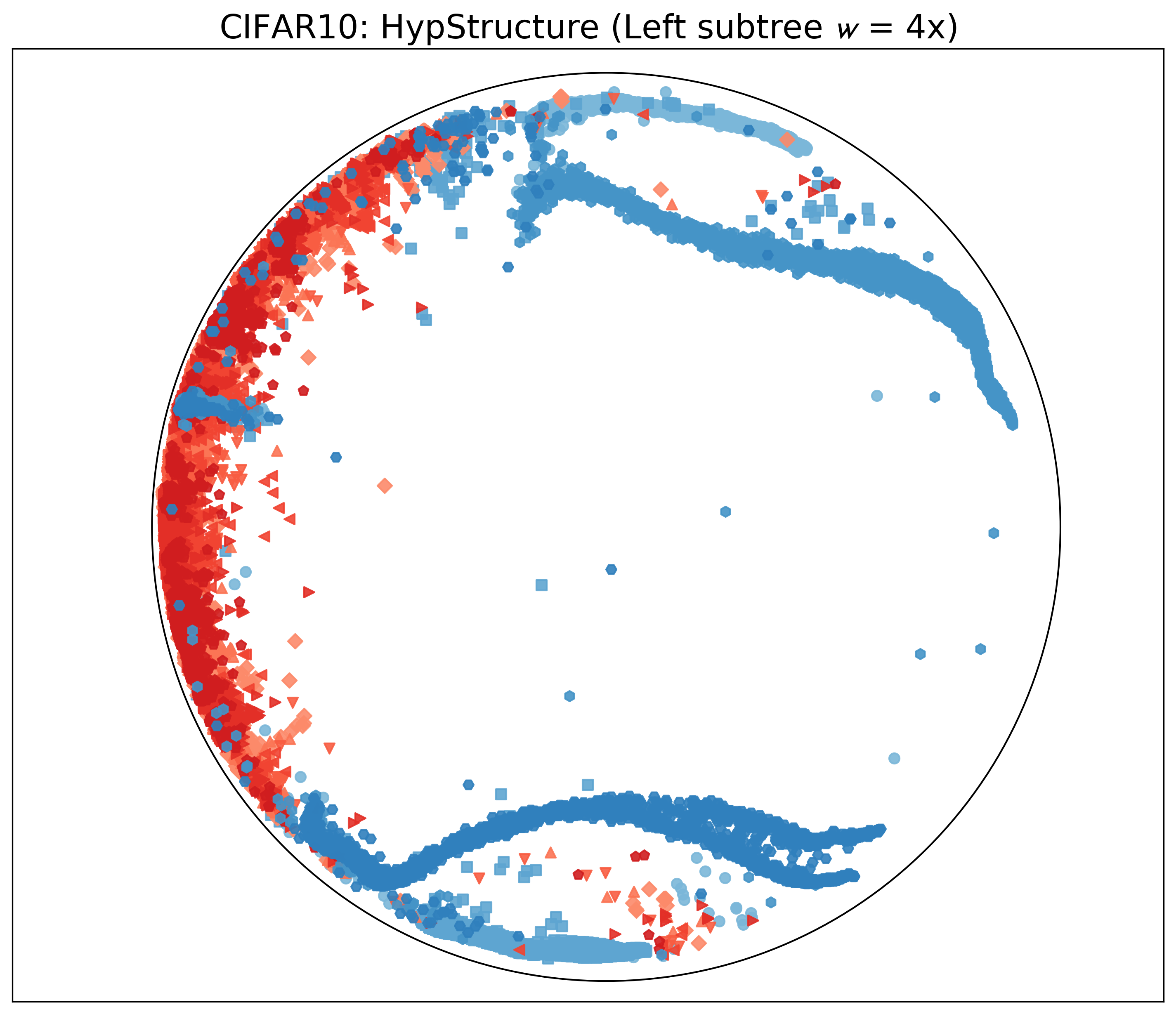}\\
{ (a)}&{ (b) } & {(c)} 
\end{tabular}
\caption{HypStructure can learn more nuanced representations with weighted hierarchy trees. Hyperbolic UMAP visualizations on CIFAR10 using HypStructure with differently weighted left-subtrees.}
\label{fig:rebuttal_viz_4}
\end{figure}

\subsection{Effect of Klein Averaging}
We experiment with the two HypCPCC variants, using Klein Averaging or Euclidean mean for centroid computation, as mentioned in \Cref{sec:hypstructure-method} and report the results in \Cref{tab:euc_vs_klein}. We empirically observe that using the Klein averaging leads to performance improvements across datasets. 

\begin{table}[ht]
\centering
\resizebox{0.5\textwidth}{!}{
\begin{tabular}{lccc}
\toprule
\textbf{Method} & \textbf{CIFAR10} & \textbf{CIFAR100} & \textbf{ImageNet100} \\ \midrule
Euclidean & 94.56 & 75.64 & 90.08 \\
Klein & \textbf{94.79} & \textbf{76.68} & \textbf{90.12} \\ \bottomrule
\end{tabular}
}
\vspace{0.4cm}
\caption{Fine accuracy comparison between Euclidean and Klein centroid computation methods in HypCPCC on CIFAR10, CIFAR100, and ImageNet100 datasets.}
\label{tab:euc_vs_klein}
\end{table}

\subsection{Experiments with the Hyperbolic Supervised Contrastive Loss}
\label{app:sec_hypsupcon}

We experiment with the Hyperbolic Supervised Contrastive Loss as proposed in \citep{ge2022hyperbolic} 
as the choice of the $\ell_\text{Flat}$ loss and refer to this loss as $\ell_\text{HypSupCon}$. We follow the original setup as described by the authors for the measurement of the $\ell_\text{HypSupCon}$, where the 
representations from the encoders are not normalized directly, instead an exponential map is used to project these features from the Euclidean space to the Poincaré ball first. Then, the inner product measurement in the $\ell_\text{SupCon}$ is replaced with the negative hyperbolic distances in the Poincaré ball to compute the $\ell_\text{HypSupCon}$ loss. We also experiment with our proposed methodology \texttt{HypStructure} along with the $\ell_\text{HypSupCon}$ loss and report the classification accuracies and hierarchy embedding metrics for both these settings in Table \ref{tab:hypsupcon_evals}. We further report the OOD detection performance on CIFAR10, CIFAR100 and ImageNet100 as in-distribution datasets for both of these settings in Tables \ref{tab:hypsupcon_c10_ood}, \ref{tab:hypsupcon_c100_ood} and \ref{tab:hypsupcon_im10_ood} respectively. We observe that using \texttt{HypStructure} with a hyperbolic loss such as $\ell_\text{HypSupCon}$ as the Flat loss leads to improvements in accuracy across classification and OOD detection tasks while also improving the quality of embedding the hierarchy. This demonstrates the wide applicability of our proposed method \texttt{HypStructure} which can be used in conjunction with both euclidean and non-euclidean classification losses.

\begin{table}[ht]
\caption{Evaluation of hierarchical information distortion and classification accuracy using \texttt{HypSupCon} \citep{ge2022hyperbolic} as $\ell_\text{Flat}$. All metrics are reported as mean (standard deviation) over 3 seeds.}
  \vspace{0.2cm}
  \centering
        \resizebox{0.8\textwidth}{!}{
            \begin{tabular}{lcccccccc}
                \toprule
                \multirow{2}{*}{\textbf{\begin{tabular}[c]{@{}c@{}}Dataset\\ (Backbone)\end{tabular}}} & \multirow{2}{*}{\textbf{Method}} & \multicolumn{2}{c}{\textbf{Distortion of Hierarchy}} & \multicolumn{2}{c}{\textbf{Classification Accuracy}} \\
                \cmidrule(lr){3-4} \cmidrule(lr){5-6}
                & & \textbf{$\delta_{rel}$ ($\downarrow$)} & \textbf{CPCC ($\uparrow$)} & \textbf{Fine ($\uparrow$)} & \textbf{Coarse ($\uparrow$)} \\
                \midrule
                \multirow{2}{*}{\begin{tabular}[c]{@{}c@{}}CIFAR10\\ (ResNet-18)\end{tabular}} & Flat & 0.128 (0.007) &	0.745 (0.017) &	94.58 (0.04) &	98.96 (0.01) \\
                & \cellcolor{gray!20}\texttt{HypStructure}  & \cellcolor{gray!20}\textbf{0.017 (0.001)	} & \cellcolor{gray!20}\textbf{0.989 (0.001)} & \cellcolor{gray!20}\textbf{95.04 (0.02)} & \cellcolor{gray!20}\textbf{99.36 (0.02)} \\
                \midrule
                \multirow{2}{*}{\begin{tabular}[c]{@{}c@{}}CIFAR100\\ (ResNet-34)\end{tabular}} & Flat & 0.168 (0.002) &	0.664 (0.012) &	75.81 (0.06) &	85.26 (0.07) \\
                & \cellcolor{gray!20}\texttt{HypStructure} & \cellcolor{gray!20}\textbf{0.112 (0.005)} & \cellcolor{gray!20}\textbf{0.773 (0.008)	} & \cellcolor{gray!20}\textbf{76.22 (0.14)} & \cellcolor{gray!20}\textbf{85.83 (0.06)} \\
                \midrule
                \multirow{2}{*}{\begin{tabular}[c]{@{}c@{}}ImageNet100\\ (ResNet-34)\end{tabular}} & Flat & 0.157 (0.004) &	0.473 (0.004) &	89.87 (0.01) & 90.41 (0.01) \\
                & \cellcolor{gray!20}\texttt{HypStructure}  & \cellcolor{gray!20}\textbf{0.126 (0.002)	} & \cellcolor{gray!20}\textbf{0.714 (0.003)	} & \cellcolor{gray!20}\textbf{90.26 (0.01)} & \cellcolor{gray!20}\textbf{90.95 (0.01)} \\
                \bottomrule
            \end{tabular}
                }
    \label{tab:hypsupcon_evals}
  \end{table}

\begin{table*}[ht]
\caption{Results on CIFAR10 when using the \texttt{HypSupCon}\citep{ge2022hyperbolic} as $\ell_\text{Flat}$ using ResNet-18 as the backbone. Training with \texttt{HypStructure} achieves improvements in OOD detection performance.}
\centering
\resizebox{0.9\textwidth}{!}{
\begin{tabular}{lccccccccc}
\toprule
\multirow{2}{*}{\textbf{Method}} & \multicolumn{5}{c}{\textbf{OOD Dataset AUROC (↑)}} & \multirow{2}{*}{\textbf{Avg. (↑)}} \\
\cmidrule(lr){2-6} 
& SVHN & Textures & Places365 & LSUN & iSUN & \\
\midrule
$\ell_\text{HypSupCon}$ & 	89.45 &	93.39 & 90.18 &	98.18 &	91.31 &	92.51 \\
\midrule
\rowcolor{gray!20} $\ell_\text{HypSupCon}$ + \texttt{HypStructure} (Ours) & \textbf{91.11} & \textbf{94.45} & \textbf{93.52} & \textbf{99.05} & \textbf{95.24} & \textbf{94.68} \\
\bottomrule
\end{tabular}
}
\label{tab:hypsupcon_c10_ood}
\end{table*}

\begin{table*}[ht]
\caption{Results on CIFAR100 when using the \texttt{HypSupCon}\citep{ge2022hyperbolic} as $\ell_\text{Flat}$ using ResNet-34 as the backbone. Training with \texttt{HypStructure} achieves improvements in OOD detection performance.}
\centering
\resizebox{0.9\textwidth}{!}{
\begin{tabular}{lccccccccc}
\toprule
\multirow{2}{*}{\textbf{Method}} & \multicolumn{5}{c}{\textbf{OOD Dataset AUROC (↑)}} & \multirow{2}{*}{\textbf{Avg. (↑)}} \\
\cmidrule(lr){2-6} 
& SVHN & Textures & Places365 & LSUN & iSUN & \\
\midrule
$\ell_\text{HypSupCon}$ & 	80.16 &	79.61 &	74.02 &	70.22 &	82.35 &	77.27 \\
\midrule
\rowcolor{gray!20} $\ell_\text{HypSupCon}$ + \texttt{HypStructure} (Ours) & \textbf{82.28} & \textbf{83.51} & \textbf{77.95} & \textbf{86.64} & 69.86 & \textbf{80.05} \\
\bottomrule
\end{tabular}
}
\label{tab:hypsupcon_c100_ood}
\end{table*}

\begin{table*}[ht]
\caption{Results on ImageNet100 when using the \texttt{HypSupCon}\citep{ge2022hyperbolic} as $\ell_\text{Flat}$ using ResNet-34 as the backbone. Training with \texttt{HypStructure} achieves improvements in OOD detection performance.}
\centering
\resizebox{0.8\textwidth}{!}{
\begin{tabular}{lcccccccc}
\toprule
\multirow{2}{*}{\textbf{Method}} & \multicolumn{4}{c}{\textbf{OOD Dataset AUROC (↑)}} & \multirow{2}{*}{\textbf{Avg. (↑)}} \\
\cmidrule(lr){2-5} 
& SUN & Places365 & Textures & iNaturalist & &\\
\midrule
$\ell_\text{HypSupCon}$  & 91.96 & 90.74 & \textbf{97.42} & 94.04 & 93.54  \\
\midrule
\rowcolor{gray!20} $\ell_\text{HypSupCon}$ + \texttt{HypStructure} (Ours) & \textbf{93.87} & \textbf{91.56} & 97.04 & \textbf{95.16} & \textbf{94.41}  \\
\bottomrule
\end{tabular}
}
\label{tab:hypsupcon_im10_ood}
\end{table*}

\subsection{Experiments with a Hyperbolic Backbone}
\label{app:sec_clippedhnn}

We experiment with Clipped Hyperbolic Neural Networks (HNNs) \citep{guo2022clipped} as a hyperbolic backbone and use our proposed methodology \texttt{HypStructure} in conjunction with the hyperbolic Multinomial Logistic Regression (MLR) loss. We report the classification accuracies and hierarchy embedding metrics on the CIFAR10 and CIFAR100 datasets in Table \ref{tab:clippedhnn_evals}, and the OOD detection performances using CIFAR10 and CIFAR100 as in-distribution datasets in Tables \ref{tab:clippedhnn_c10_ood} and \ref{tab:clippedhnn_c100_ood} respectively. We observe that using \texttt{HypStructure} along with a hyperbolic backbone leads to improvements in classification accuracies, reduced distortion in embedding the hierarchy, and improved OOD detection performance overall, demonstrating the wide applicability of \texttt{HypStructure} with hyperbolic networks.

\begin{table}[ht]
\caption{Evaluation of hierarchical information distortion and classification accuracy using Clipped Hyperbolic Neural Networks \citep{guo2022clipped} as the backbone. All metrics are reported as mean (standard deviation) over 3 seeds.}
  \vspace{0.2cm}
  \centering
        \resizebox{0.8\textwidth}{!}{
            \begin{tabular}{lcccccccc}
                \toprule
                \multirow{2}{*}{\textbf{\begin{tabular}[c]{@{}c@{}}Dataset\\ (Backbone)\end{tabular}}} & \multirow{2}{*}{\textbf{Method}} & \multicolumn{2}{c}{\textbf{Distortion of Hierarchy}} & \multicolumn{2}{c}{\textbf{Classification Accuracy}} \\
                \cmidrule(lr){3-4} \cmidrule(lr){5-6}
                & & \textbf{$\delta_{rel}$ ($\downarrow$)} & \textbf{CPCC ($\uparrow$)} & \textbf{Fine ($\uparrow$)} & \textbf{Coarse ($\uparrow$)} \\
                \midrule
                \multirow{2}{*}{\begin{tabular}[c]{@{}c@{}}CIFAR10\\ (Clipped HNN \citep{guo2022clipped})\end{tabular}} & Flat & 0.084 (0.008) &	0.604 (0.004) &	94.81 (0.23) &	89.71 (2.04) \\
                & \cellcolor{gray!20}\texttt{HypStructure}  & \cellcolor{gray!20}\textbf{0.013 (0.002)		} & \cellcolor{gray!20}\textbf{0.988 (0.001)	} & \cellcolor{gray!20}\textbf{94.97 (0.12)} & \cellcolor{gray!20}\textbf{98.35 (0.22)} \\
                \midrule
                \multirow{2}{*}{\begin{tabular}[c]{@{}c@{}}CIFAR100\\ (Clipped HNN \citep{guo2022clipped})\end{tabular}} & Flat & 0.098 (0.001) &	0.528 (0.009) &	76.46 (0.26) &	49.26 (0.73) \\
                & \cellcolor{gray!20}\texttt{HypStructure} & \cellcolor{gray!20}\textbf{0.064 (0.006)	} & \cellcolor{gray!20}\textbf{0.624 (0.005)	} & \cellcolor{gray!20}\textbf{77.96 (0.14)} & \cellcolor{gray!20}\textbf{55.46 (0.61)} \\
                \bottomrule
            \end{tabular}
                }
    \label{tab:clippedhnn_evals}
  \end{table}

\begin{table*}[ht]
\caption{Results on CIFAR10 when using the Clipped Hyperbolic Neural Networks \citep{guo2022clipped} as the backbone. Training with \texttt{HypStructure} achieves improvements in OOD detection performance.}
\centering
\resizebox{0.9\textwidth}{!}{
\begin{tabular}{lccccccccc}
\toprule
\multirow{2}{*}{\textbf{Method}} & \multicolumn{5}{c}{\textbf{OOD Dataset AUROC (↑)}} & \multirow{2}{*}{\textbf{Avg. (↑)}} \\
\cmidrule(lr){2-6} 
& SVHN & Textures & Places365 & LSUN & iSUN & \\
\midrule
Clipped HNN \citep{guo2022clipped}  & 	92.63 & 90.74 & 88.46 & 95.66 & 92.41 & 91.98 \\
\midrule
\rowcolor{gray!20} Clipped HNN \citep{guo2022clipped}  + \texttt{HypStructure} (Ours) & \textbf{95.41} & \textbf{93.91} & \textbf{92.31} & \textbf{96.87} & \textbf{94.92} & \textbf{94.68} \\
\bottomrule
\end{tabular}
}
\label{tab:clippedhnn_c10_ood}
\end{table*}

\begin{table*}[ht]
\caption{Results on CIFAR100 when using the Clipped Hyperbolic Neural Networks \citep{guo2022clipped} as the backbone. Training with \texttt{HypStructure} achieves improvements in OOD detection performance.}
\centering
\resizebox{0.9\textwidth}{!}{
\begin{tabular}{lccccccccc}
\toprule
\multirow{2}{*}{\textbf{Method}} & \multicolumn{5}{c}{\textbf{OOD Dataset AUROC (↑)}} & \multirow{2}{*}{\textbf{Avg. (↑)}} \\
\cmidrule(lr){2-6} 
& SVHN & Textures & Places365 & LSUN & iSUN & \\
\midrule
Clipped HNN \citep{guo2022clipped}  & 	89.94 &	83.77 &	77.26 &	82.87 &	82.35 &	83.23 \\
\midrule
\rowcolor{gray!20} Clipped HNN \citep{guo2022clipped}  + \texttt{HypStructure} (Ours) & \textbf{91.56} & \textbf{84.31} & \textbf{78.45} & \textbf{87.53} & \textbf{83.44} & \textbf{85.06} \\
\bottomrule
\end{tabular}
}
\label{tab:clippedhnn_c100_ood}
\end{table*}

\clearpage
\newpage
\section*{NeurIPS Paper Checklist}

\begin{enumerate}

\item {\bf Claims}
    \item[] Question: Do the main claims made in the abstract and introduction accurately reflect the paper's contributions and scope?
    \item[] Answer: \answerYes{} 
    \item[] Justification: The abstract and the introduction both clearly state the claims made by the paper, along with a clear description of the contributions, assumptions and limitations. 
    \item[] Guidelines:
    \begin{itemize}
        \item The answer NA means that the abstract and introduction do not include the claims made in the paper.
        \item The abstract and/or introduction should clearly state the claims made, including the contributions made in the paper and important assumptions and limitations. A No or NA answer to this question will not be perceived well by the reviewers. 
        \item The claims made should match theoretical and experimental results, and reflect how much the results can be expected to generalize to other settings. 
        \item It is fine to include aspirational goals as motivation as long as it is clear that these goals are not attained by the paper. 
    \end{itemize}

\item {\bf Limitations}
    \item[] Question: Does the paper discuss the limitations of the work performed by the authors?
    \item[] Answer: \answerYes{}
    \item[] Justification: Yes, the limitations of the current work as well as the avenues for future improvements to the current work can be found in Section \ref{sec:disc}.
    \item[] Guidelines:
    \begin{itemize}
        \item The answer NA means that the paper has no limitation while the answer No means that the paper has limitations, but those are not discussed in the paper. 
        \item The authors are encouraged to create a separate "Limitations" section in their paper.
        \item The paper should point out any strong assumptions and how robust the results are to violations of these assumptions (e.g., independence assumptions, noiseless settings, model well-specification, asymptotic approximations only holding locally). The authors should reflect on how these assumptions might be violated in practice and what the implications would be.
        \item The authors should reflect on the scope of the claims made, e.g., if the approach was only tested on a few datasets or with a few runs. In general, empirical results often depend on implicit assumptions, which should be articulated.
        \item The authors should reflect on the factors that influence the performance of the approach. For example, a facial recognition algorithm may perform poorly when image resolution is low or images are taken in low lighting. Or a speech-to-text system might not be used reliably to provide closed captions for online lectures because it fails to handle technical jargon.
        \item The authors should discuss the computational efficiency of the proposed algorithms and how they scale with dataset size.
        \item If applicable, the authors should discuss possible limitations of their approach to address problems of privacy and fairness.
        \item While the authors might fear that complete honesty about limitations might be used by reviewers as grounds for rejection, a worse outcome might be that reviewers discover limitations that aren't acknowledged in the paper. The authors should use their best judgment and recognize that individual actions in favor of transparency play an important role in developing norms that preserve the integrity of the community. Reviewers will be specifically instructed to not penalize honesty concerning limitations.
    \end{itemize}

\item {\bf Theory Assumptions and Proofs}
    \item[] Question: For each theoretical result, does the paper provide the full set of assumptions and a complete (and correct) proof?
    \item[] Answer: \answerYes{} 
    \item[] Justification: Yes, for all the main results of the paper in Section \ref{sec:eigenspectrum_theory}, a full set of assumptions and a complete proof is provided in Section \ref{app:sec_proof} in the Appendix. 
    \item[] Guidelines:
    \begin{itemize}
        \item The answer NA means that the paper does not include theoretical results. 
        \item All the theorems, formulas, and proofs in the paper should be numbered and cross-referenced.
        \item All assumptions should be clearly stated or referenced in the statement of any theorems.
        \item The proofs can either appear in the main paper or the supplemental material, but if they appear in the supplemental material, the authors are encouraged to provide a short proof sketch to provide intuition. 
        \item Inversely, any informal proof provided in the core of the paper should be complemented by formal proofs provided in appendix or supplemental material.
        \item Theorems and Lemmas that the proof relies upon should be properly referenced. 
    \end{itemize}

    \item {\bf Experimental Result Reproducibility}
    \item[] Question: Does the paper fully disclose all the information needed to reproduce the main experimental results of the paper to the extent that it affects the main claims and/or conclusions of the paper (regardless of whether the code and data are provided or not)?
    \item[] Answer: \answerYes{} 
    \item[] Justification: Yes, the paper discloses all the necessary details about the implemented architectures used, hyperparameters for each setting, algorithm pseudocode and other experimental details to reproduce all the experiments in the paper. These details can be found in Section \ref{app:secB} in the Appendix.
    \item[] Guidelines:
    \begin{itemize}
        \item The answer NA means that the paper does not include experiments.
        \item If the paper includes experiments, a No answer to this question will not be perceived well by the reviewers: Making the paper reproducible is important, regardless of whether the code and data are provided or not.
        \item If the contribution is a dataset and/or model, the authors should describe the steps taken to make their results reproducible or verifiable. 
        \item Depending on the contribution, reproducibility can be accomplished in various ways. For example, if the contribution is a novel architecture, describing the architecture fully might suffice, or if the contribution is a specific model and empirical evaluation, it may be necessary to either make it possible for others to replicate the model with the same dataset, or provide access to the model. In general. releasing code and data is often one good way to accomplish this, but reproducibility can also be provided via detailed instructions for how to replicate the results, access to a hosted model (e.g., in the case of a large language model), releasing of a model checkpoint, or other means that are appropriate to the research performed.
        \item While NeurIPS does not require releasing code, the conference does require all submissions to provide some reasonable avenue for reproducibility, which may depend on the nature of the contribution. For example
        \begin{enumerate}
            \item If the contribution is primarily a new algorithm, the paper should make it clear how to reproduce that algorithm.
            \item If the contribution is primarily a new model architecture, the paper should describe the architecture clearly and fully.
            \item If the contribution is a new model (e.g., a large language model), then there should either be a way to access this model for reproducing the results or a way to reproduce the model (e.g., with an open-source dataset or instructions for how to construct the dataset).
            \item We recognize that reproducibility may be tricky in some cases, in which case authors are welcome to describe the particular way they provide for reproducibility. In the case of closed-source models, it may be that access to the model is limited in some way (e.g., to registered users), but it should be possible for other researchers to have some path to reproducing or verifying the results.
        \end{enumerate}
    \end{itemize}

\item {\bf Open access to data and code}
    \item[] Question: Does the paper provide open access to the data and code, with sufficient instructions to faithfully reproduce the main experimental results, as described in supplemental material?
    \item[] Answer: \answerYes{} 
    \item[] Justification: The paper includes the specific instructions to access the datasets used for all the experimentation. These can be found in Section \ref{app:secB} in the Appendix. The code for this project is released to the public. 
    \item[] Guidelines:
    \begin{itemize}
        \item The answer NA means that paper does not include experiments requiring code.
        \item Please see the NeurIPS code and data submission guidelines (\url{https://nips.cc/public/guides/CodeSubmissionPolicy}) for more details.
        \item While we encourage the release of code and data, we understand that this might not be possible, so “No” is an acceptable answer. Papers cannot be rejected simply for not including code, unless this is central to the contribution (e.g., for a new open-source benchmark).
        \item The instructions should contain the exact command and environment needed to run to reproduce the results. See the NeurIPS code and data submission guidelines (\url{https://nips.cc/public/guides/CodeSubmissionPolicy}) for more details.
        \item The authors should provide instructions on data access and preparation, including how to access the raw data, preprocessed data, intermediate data, and generated data, etc.
        \item The authors should provide scripts to reproduce all experimental results for the new proposed method and baselines. If only a subset of experiments are reproducible, they should state which ones are omitted from the script and why.
        \item At submission time, to preserve anonymity, the authors should release anonymized versions (if applicable).
        \item Providing as much information as possible in supplemental material (appended to the paper) is recommended, but including URLs to data and code is permitted.
    \end{itemize}

\item {\bf Experimental Setting/Details}
    \item[] Question: Does the paper specify all the training and test details (e.g., data splits, hyperparameters, how they were chosen, type of optimizer, etc.) necessary to understand the results?
    \item[] Answer: \answerYes{} 
    \item[] Justification: Yes, the paper uses standard data splits from publicly available benchmark sources and provides details regarding the choice of hyperparameters, optimizer and other decisions necessary for understanding the results. These details can be found in Section \ref{app:secB} in the Appendix.
    \item[] Guidelines:
    \begin{itemize}
        \item The answer NA means that the paper does not include experiments.
        \item The experimental setting should be presented in the core of the paper to a level of detail that is necessary to appreciate the results and make sense of them.
        \item The full details can be provided either with the code, in appendix, or as supplemental material.
    \end{itemize}

\item {\bf Experiment Statistical Significance}
    \item[] Question: Does the paper report error bars suitably and correctly defined or other appropriate information about the statistical significance of the experiments?
    \item[] Answer: \answerYes{} 
    \item[] Justification: Yes, the paper reports the error bars and other information about the statistical significance of the results in the Section \ref{sec:experiments} in the main paper. 
    \item[] Guidelines:
    \begin{itemize}
        \item The answer NA means that the paper does not include experiments.
        \item The authors should answer "Yes" if the results are accompanied by error bars, confidence intervals, or statistical significance tests, at least for the experiments that support the main claims of the paper.
        \item The factors of variability that the error bars are capturing should be clearly stated (for example, train/test split, initialization, random drawing of some parameter, or overall run with given experimental conditions).
        \item The method for calculating the error bars should be explained (closed form formula, call to a library function, bootstrap, etc.)
        \item The assumptions made should be given (e.g., Normally distributed errors).
        \item It should be clear whether the error bar is the standard deviation or the standard error of the mean.
        \item It is OK to report 1-sigma error bars, but one should state it. The authors should preferably report a 2-sigma error bar than state that they have a 96\% CI, if the hypothesis of Normality of errors is not verified.
        \item For asymmetric distributions, the authors should be careful not to show in tables or figures symmetric error bars that would yield results that are out of range (e.g. negative error rates).
        \item If error bars are reported in tables or plots, The authors should explain in the text how they were calculated and reference the corresponding figures or tables in the text.
    \end{itemize}

\item {\bf Experiments Compute Resources}
    \item[] Question: For each experiment, does the paper provide sufficient information on the computer resources (type of compute workers, memory, time of execution) needed to reproduce the experiments?
    \item[] Answer: \answerYes{} 
    \item[] Justification: Yes, the paper provides details about the compute resources, hardware and software needed to reproduce the experiments in Section \ref{app:secB} of the Appendix.
    \item[] Guidelines:
    \begin{itemize}
        \item The answer NA means that the paper does not include experiments.
        \item The paper should indicate the type of compute workers CPU or GPU, internal cluster, or cloud provider, including relevant memory and storage.
        \item The paper should provide the amount of compute required for each of the individual experimental runs as well as estimate the total compute. 
        \item The paper should disclose whether the full research project required more compute than the experiments reported in the paper (e.g., preliminary or failed experiments that didn't make it into the paper). 
    \end{itemize}
    
\item {\bf Code Of Ethics}
    \item[] Question: Does the research conducted in the paper conform, in every respect, with the NeurIPS Code of Ethics \url{https://neurips.cc/public/EthicsGuidelines}?
    \item[] Answer: \answerYes{} 
    \item[] Justification: Yes, the research conducted in this paper conforms in every respect, with the NeurIPS code of ethics. 
    \item[] Guidelines:
    \begin{itemize}
        \item The answer NA means that the authors have not reviewed the NeurIPS Code of Ethics.
        \item If the authors answer No, they should explain the special circumstances that require a deviation from the Code of Ethics.
        \item The authors should make sure to preserve anonymity (e.g., if there is a special consideration due to laws or regulations in their jurisdiction).
    \end{itemize}

\item {\bf Broader Impacts}
    \item[] Question: Does the paper discuss both potential positive societal impacts and negative societal impacts of the work performed?
    \item[] Answer: \answerYes{} 
    \item[] Justification: We discuss the potential positive and negative societal impacts of our work in Section \ref{app:broader_impact} of the Appendix. 
    \item[] Guidelines:
    \begin{itemize}
        \item The answer NA means that there is no societal impact of the work performed.
        \item If the authors answer NA or No, they should explain why their work has no societal impact or why the paper does not address societal impact.
        \item Examples of negative societal impacts include potential malicious or unintended uses (e.g., disinformation, generating fake profiles, surveillance), fairness considerations (e.g., deployment of technologies that could make decisions that unfairly impact specific groups), privacy considerations, and security considerations.
        \item The conference expects that many papers will be foundational research and not tied to particular applications, let alone deployments. However, if there is a direct path to any negative applications, the authors should point it out. For example, it is legitimate to point out that an improvement in the quality of generative models could be used to generate deepfakes for disinformation. On the other hand, it is not needed to point out that a generic algorithm for optimizing neural networks could enable people to train models that generate Deepfakes faster.
        \item The authors should consider possible harms that could arise when the technology is being used as intended and functioning correctly, harms that could arise when the technology is being used as intended but gives incorrect results, and harms following from (intentional or unintentional) misuse of the technology.
        \item If there are negative societal impacts, the authors could also discuss possible mitigation strategies (e.g., gated release of models, providing defenses in addition to attacks, mechanisms for monitoring misuse, mechanisms to monitor how a system learns from feedback over time, improving the efficiency and accessibility of ML).
    \end{itemize}
    
\item {\bf Safeguards}
    \item[] Question: Does the paper describe safeguards that have been put in place for responsible release of data or models that have a high risk for misuse (e.g., pretrained language models, image generators, or scraped datasets)?
    \item[] Answer: \answerNA{} 
    \item[] Justification: The paper uses publicly available datasets, and does not release any data or code that have a risk of misuse. 
    \item[] Guidelines:
    \begin{itemize}
        \item The answer NA means that the paper poses no such risks.
        \item Released models that have a high risk for misuse or dual-use should be released with necessary safeguards to allow for controlled use of the model, for example by requiring that users adhere to usage guidelines or restrictions to access the model or implementing safety filters. 
        \item Datasets that have been scraped from the Internet could pose safety risks. The authors should describe how they avoided releasing unsafe images.
        \item We recognize that providing effective safeguards is challenging, and many papers do not require this, but we encourage authors to take this into account and make a best faith effort.
    \end{itemize}

\item {\bf Licenses for existing assets}
    \item[] Question: Are the creators or original owners of assets (e.g., code, data, models), used in the paper, properly credited and are the license and terms of use explicitly mentioned and properly respected?
    \item[] Answer: \answerYes{} 
    \item[] Justification: We have appropriately cited the original owners of data and code which is used in the paper. 
    \item[] Guidelines:
    \begin{itemize}
        \item The answer NA means that the paper does not use existing assets.
        \item The authors should cite the original paper that produced the code package or dataset.
        \item The authors should state which version of the asset is used and, if possible, include a URL.
        \item The name of the license (e.g., CC-BY 4.0) should be included for each asset.
        \item For scraped data from a particular source (e.g., website), the copyright and terms of service of that source should be provided.
        \item If assets are released, the license, copyright information, and terms of use in the package should be provided. For popular datasets, \url{paperswithcode.com/datasets} has curated licenses for some datasets. Their licensing guide can help determine the license of a dataset.
        \item For existing datasets that are re-packaged, both the original license and the license of the derived asset (if it has changed) should be provided.
        \item If this information is not available online, the authors are encouraged to reach out to the asset's creators.
    \end{itemize}

\item {\bf New Assets}
    \item[] Question: Are new assets introduced in the paper well documented and is the documentation provided alongside the assets?
    \item[] Answer: \answerNA{} 
    \item[] Justification: The paper does not release any new assets. 
    \item[] Guidelines:
    \begin{itemize}
        \item The answer NA means that the paper does not release new assets.
        \item Researchers should communicate the details of the dataset/code/model as part of their submissions via structured templates. This includes details about training, license, limitations, etc. 
        \item The paper should discuss whether and how consent was obtained from people whose asset is used.
        \item At submission time, remember to anonymize your assets (if applicable). You can either create an anonymized URL or include an anonymized zip file.
    \end{itemize}

\item {\bf Crowdsourcing and Research with Human Subjects}
    \item[] Question: For crowdsourcing experiments and research with human subjects, does the paper include the full text of instructions given to participants and screenshots, if applicable, as well as details about compensation (if any)? 
    \item[] Answer: \answerNA{} 
    \item[] Justification: The paper does not involve crowdsourcing, nor research with human subjects.
    \item[] Guidelines:
    \begin{itemize}
        \item The answer NA means that the paper does not involve crowdsourcing nor research with human subjects.
        \item Including this information in the supplemental material is fine, but if the main contribution of the paper involves human subjects, then as much detail as possible should be included in the main paper. 
        \item According to the NeurIPS Code of Ethics, workers involved in data collection, curation, or other labor should be paid at least the minimum wage in the country of the data collector. 
    \end{itemize}

\item {\bf Institutional Review Board (IRB) Approvals or Equivalent for Research with Human Subjects}
    \item[] Question: Does the paper describe potential risks incurred by study participants, whether such risks were disclosed to the subjects, and whether Institutional Review Board (IRB) approvals (or an equivalent approval/review based on the requirements of your country or institution) were obtained?
    \item[] Answer: \answerNA{} 
    \item[] Justification: The paper does not involve crowdsourcing nor research with human subjects.
    \item[] Guidelines:
    \begin{itemize}
        \item The answer NA means that the paper does not involve crowdsourcing nor research with human subjects.
        \item Depending on the country in which research is conducted, IRB approval (or equivalent) may be required for any human subjects research. If you obtained IRB approval, you should clearly state this in the paper. 
        \item We recognize that the procedures for this may vary significantly between institutions and locations, and we expect authors to adhere to the NeurIPS Code of Ethics and the guidelines for their institution. 
        \item For initial submissions, do not include any information that would break anonymity (if applicable), such as the institution conducting the review.
    \end{itemize}

\end{enumerate}
\clearpage

\end{document}